\title{Destructiveness of Lexicographic Parsimony Pressure and Alleviation by a Concatenation Crossover in Genetic Programming}
\author{Timo Kötzing$^\dagger$ \and J.~A.~Gregor Lagodzinski$^\dagger$ \and Johannes Lengler$^\star$ \and Anna Melnichenko$^\dagger$}
\date{\today}
\newcommand{\order}{\textsc{Order}\xspace}
\newcommand{\majority}{\textsc{Majority}\xspace}
\newcommand{\supermajority}{{\ensuremath{2/3}-\textsc{SuperMajority}}\xspace}
\newcommand{\geqcmajority}{{\ensuremath{+c}-\textsc{Majority}}\xspace}
\newcommand{\geqfracmajority}{{\ensuremath{2/3}-\textsc{Majority}}\xspace}
\newcommand{\tinit}{s_{\mathrm{init}}}
\newcommand{\sinit}{s_{\mathrm{init}}}
\newcommand{\gptree}{GP-tree\xspace}
\newcommand{\gptrees}{GP-trees\xspace}
\newcommand{\var}{x}
\newcommand{\nonvar}{\overline{x}}
\newcommand{\EE}{\mathbb{E}}
\newcommand{\eps}{\varepsilon}
\newcommand{\BigO}{\mathrm{O}}
\newcommand{\LittleO}{\mathrm{o}}
\newcommand{\Prob}[1]{\mathrm{Pr}\left[#1 \right]}
\newcommand{\Ex}[1]{\mathbb{E}[#1 ]}
\newcommand{\event}[1]{\ensuremath{\mathcal{{#1}}}}
\pgfplotsset{compat=1.10}
\theoremstyle{plain}
\newtheorem{theorem}{Theorem}[section]
\newtheorem{corollary}[theorem]{Corollary}
\newtheorem{lemma}[theorem]{Lemma}
\theoremstyle{definition}
\newcommand{\assign}{\leftarrow}
\DeclareOldFontCommand{\bf}{\normalfont\bfseries}{\mathbf}
\newcommand{\oneonegp}{(1+1)~GP\xspace}
\newcommand{\N}{\mathbb{N}}
\newcommand{\R}{\mathbb{R}}
\def\checkmark{\tikz\fill[scale=0.4](0,.35) -- (.25,0) -- (1,.7) -- (.25,.15) -- cycle;} 
\begin{document}
\maketitle

\begin{abstract}
For theoretical analyses there are two specifics distinguishing GP from many other areas of evolutionary computation. First, the variable size representations, in particular yielding a possible bloat (i.e.\ the growth of individuals with redundant parts). Second, the role and realization of crossover, which is particularly central in GP due to the tree-based representation. Whereas some theoretical work on GP has studied the effects of bloat, crossover had a surprisingly little share in this work.

We analyze a simple crossover operator in combination with local search, where a preference for small solutions minimizes bloat (\emph{lexicographic parsimony pressure}); the resulting algorithm is denoted \emph{Concatenation Crossover GP}. For this purpose three variants of the well-studied \majority test function with large plateaus are considered. We show that the Concatenation Crossover GP can efficiently optimize these test functions, while local search cannot be efficient for all three variants independent of employing bloat control.
\end{abstract}

\section{Introduction}
\label{sec:intro}
Genetic Programming (GP) is a field of Evolutionary Computing (EC) where the evolved objects encode programs. Usually a tree-based representation of a program is iteratively improved by applying variation operators (mutation and crossover) and selection of suitable offspring according to their quality (fitness). Most other areas of EC deal with fixed-length representations, whereas the tree-based representation distinguishes GP. This representation of variable size leads to one of the main problems when applying GP: \emph{bloat}, which describes an unnecessary growth of representations. Solutions may have many redundant parts, which could be removed without afflicting the quality, and search is slowed down, wasted on uninteresting areas of the search space.

In this paper we study GP from the point of view of run time analysis. While many previous theoretical works analyzed mutational GP with the offspring produced by varying a single parent, we analyze a GP algorithm employing a simple crossover with the offspring produced from two parents. Although our crossover is far from practical applications of GP (it merely concatenates the two parent trees), this simple setting aims at understanding the interplay between (our variant of) crossover, the problem of bloat and \emph{lexicographic parsimony pressure}, a method for bloat control introduced in \cite{Luke:2002:GECCO}. Other theoretical work in GP has analyzed different problems and phenomena, in particular for the Probably Approximately Correct (PAC) learning framework~\cite{DBLP:conf/gecco/KotzingNS11}, the Max-Problem~\cite{Gathercole96,Langdon97ananalysis,KoeSutNeuOre:c:12} as well as Boolean functions~\cite{MoraglioMM13,MambriniM14,MambriniO16}.

For the effects of bloat in the sense of redundant parts in the tree, we draw on previous theoretical works that analyzed this phenomenon, especially~\cite{NeuGECCO12} and~\cite{doerr2017bounding}. In these, the fitness function \majority as introduced in \cite{GoldbergO98} was analyzed. Individuals for \majority are binary trees, where each inner node is labeled $J$ (short for \emph{join}, but without any associated semantics) and leaves are labeled with variable symbols; we call such trees \emph{\gptrees}. The set of variable symbols is  $\{\var_1,\ldots,\var_n\} \cup \{\nonvar_1,\ldots,\nonvar_n\}$, for some $n$. In particular, variable symbols are paired: $\var_i$ is paired with $\nonvar_i$. For \majority, we call a variable symbol $\var_i$ \emph{expressed} if there is a leaf labeled $\var_i$ and there are at least as many leaves labeled $\var_i$ as there are leaves labeled $\nonvar_i$; the positive instances are in the majority. The fitness of a \gptree is the number of its expressed variable symbols $\var_i$. 
See Figure~\ref{fig:tree_examples} for two exemplary \gptrees. 
\begin{figure}[t]
	\begin{minipage}{0.4\textwidth}
		\centering
		\includegraphics[width = 0.9\textwidth]{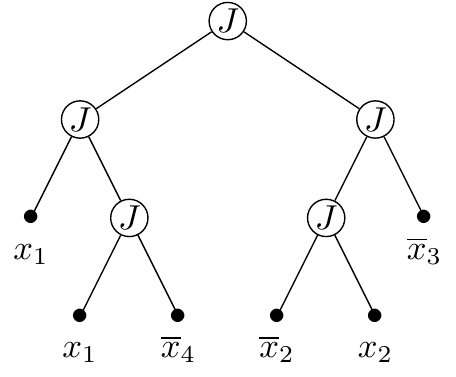}
	\end{minipage} \hfill
	\begin{minipage}{0.4\textwidth}
		\centering
		\includegraphics[width = 0.8\textwidth]{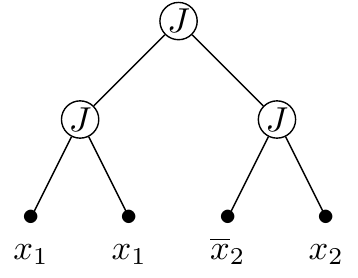}
	\end{minipage}
	\caption{Two GP-trees with the same fitness. For \geqfracmajority the fitness is $2$ and for \geqcmajority with $c=2$ the fitness is $1$. However, the left one has size $6$ whereas the right one has size $4$.}
	\label{fig:tree_examples}
\end{figure}
This setting captures two important aspects of GP: variable length representations and that any given functionality can be achieved by many different representations. However, the tree-structure, typically crucial in GP problems, is completely unimportant for the \majority function.

\begin{table*}[t]%
	\begin{center}
		\caption{Overview of the results of the paper. A check mark denotes optimization in polynomial time with high probability, a cross denotes superpolynomial optimization time. A check mark with  a subscript $e$ denotes the results obtained experimentally.}
		\label{table:overviewResults}
		\renewcommand{\arraystretch}{1.5}
		\begin{tabularx}{\textwidth}{l|X|X|X}
			& \multicolumn{2}{c|}{\textbf{local search}} & \textbf{crossover} \\
			\textbf{problem class} & \textbf{w/ bloat control} & \textbf{w/o bloat control} & \textbf{w/ bloat control}  \\
			\hline
			\geqcmajority  & $\times$~~Theorem~\ref{thm:+cMajority_lower-bound_constant-stepsize} & $\checkmark_e$~~Figure~\ref{fig:2maj_plot} & \checkmark~~Theorem~\ref{thm:crossover}  \\
			\hline
			\geqfracmajority  & \checkmark~~Theorem~\ref{thm: 2-3Majority_bc_constant-stepsize} & $\checkmark_e$~~Figure~\ref{fig:2maj_plot} & \checkmark~~Theorem~\ref{thm:crossover}  \\
			\hline
			\supermajority  & \checkmark~~Theorem~\ref{thm: 2-3SuperMajority_bc_constant-stepsize} & $\times$~~Theorem~\ref{thm:geqfracMajority_lower-bound_constant-stepsize} & \checkmark~~Theorem~\ref{thm:crossoverSupermajority} \\
			\hline
		\end{tabularx}
	\end{center}
\end{table*}

We know that \majority can be efficiently optimized by a mutational GP called \oneonegp (see Algorithm~\ref{alg:gp} for details, basically performing a randomized local search). This holds in the case preferring shorter representations by lexicographic parsimony pressure, as shown in \cite{NeuGECCO12}, as well as in the case without such preference \cite{doerr2017bounding}. Similar to recent literature on theory of GP, we will consider lexicographic parsimony pressure as our method of bloat control and henceforth only speak of \emph{bloat control} to denote this method. We note, however, that the GP literature knows many more methods for controlling bloat which is beyond the scope of our theoretical analysis.

In addition to weighted versions of \majority, another, similar fitness function \order (see also \cite{GPFOGA11,NguUrlWag:c:13:Foga}) has been considered, but neither of these provide us with a strong differences in the optimization behavior of different GP algorithms. Thus, we propose three variants of \majority, called \geqcmajority, \geqfracmajority and \supermajority, which negatively affect the optimization of certain GP algorithms.

For \geqcmajority a variable is expressed if its positive literals are not only in the majority, but also there has to be at least $c$ more positive than negative literals. On the one hand, we show that a random \gptree with a linear number of leaves expresses any given variable with constant probability. On the other hand, with constant probability such a tree has a majority of negative literals of any given variable (indeed, there is a constant probability that the variable has neither positive nor negative literals in the \gptree). This yields a plateau of equal fitness which can only be overcome by adding $c$ positive literals, i.e., we need a rich set of neutral mutations that allow genetic drift to happen. Bloat control suppresses this genetic drift by biasing the search towards smaller solutions. Specifically, it may not allow to add positive literals one by one, which results in an infinite run time (see Lemma~\ref{thm:+cMajority_lower-bound_constant-stepsize}).
Note that allowing the local search to add $c$ leaves at the same time still results only in a small chance of $\BigO(n^{-c})$ of jumping the plateau. Hence, the \geqcmajority fitness function serves as an example where bloat control explicitly harms the search.

For \geqfracmajority, a variable is expressed if its positive literals hold a $2/3$ majority, i.e., if $2/3$ of all its literals are positive. The fitness associated with \geqfracmajority is the number of expressed variables while for \supermajority each expressed variable contributes a score between $1$ and $2$, where larger majorities give larger scores (see Section~\ref{sec:preliminaries} for details). The variant \supermajority is utilized to aggravate the effect of bloat since it rewards large numbers of (positive) literals. We show that local search with bloat control is efficient for these two problems (Theorems~\ref{thm: 2-3Majority_bc_constant-stepsize} and~\ref{thm: 2-3SuperMajority_bc_constant-stepsize}). However, without bloat control local search fails on \supermajority due to bloat (see Theorem~\ref{thm:geqfracMajority_lower-bound_constant-stepsize}).

Regarding optimization without bloat control, we obtain experimental results as depicted in Figure~\ref{fig:2maj_plot}. They provide a strong indicator that, when no bloat control is applied, optimization of \geqcmajority is efficient, in contrast to the case of bloat control. The trend for \geqfracmajority indicates that optimization proceeds significantly more slowly without bloat control than with bloat control. Nevertheless, optimization seems to be feasible in contrast to the case of \supermajority.

Subsequently, we study a simple crossover which works as follows. The algorithm maintains a population of $\lambda$ individuals, which are initialized randomly before a local search with bloat control is performed for a number of iterations. As a local search we employ the \oneonegp, a simple mutation-only GP which iteratively either adds, deletes, or substitutes a vertex of the tree. We employ this algorithm for a number of rounds large enough to ensure that each vertex of the tree has been considered for deletion at least once with high probability, which aims at controlling bloat.
Afterwards, the optimization proceeds in rounds; in each round, each individual $t_0$ is mated with a random other individual $t_1$ by joining $t_0$ and $t_1$ to obtain a tree $t'$ which contains both $t_0$ and $t_1$ (see Figure~\ref{fig:crossover});
\begin{figure}[t]
	\centering
	\includegraphics[width = 0.8\textwidth]{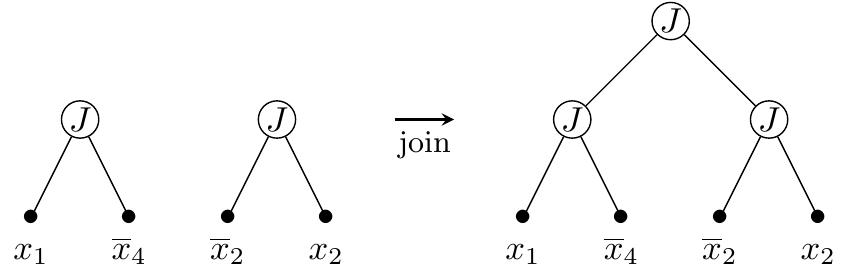}
	\caption{Two GP-trees are joined as in Algorithm~\ref{alg:mimeticCrossover}. A new join-node is introduced and connected to both trees.}
	\label{fig:crossover}
\end{figure}
then local search is performed on $t'$ as before yielding a tree $t''$. If $t''$ is at least as fit as $t_0$, we replace $t_0$ in the population by $t''$. The algorithm is called \emph{Concatenation} since it joins two individuals, which is basically a concatenation. It is different from other approaches for memetic crossover GP as found, for example, in~\cite{Esk-Hou:c:04:MemeticCrossover}. Note that this crossover is very different from GP crossovers found in the literature because of its almost complete disregard for the tree structure of the individuals. However, this crossover already highlights some benefits which can be obtained with crossover, and it has the great advantage of being analyzable.

We show that the Concatenation Crossover GP with bloat control efficiently optimizes all three test functions \geqcmajority, \geqfracmajority as well as \supermajority, due to its ability to combine good solutions (see Theorem~\ref{thm:crossover}). We summarize our findings in Table~\ref{table:overviewResults}.

In Section~\ref{sec:preliminaries} we state the formal definitions of algorithms and problems, as well as the mathematical tools we use. Section~\ref{sec:bloatcontrol} gives the results for local search \emph{with} bloat control, Section~\ref{sec:bloat} for local search \emph{without} bloat control and Section~\ref{sec:crossover} for the Concatenation Crossover GP. In Section~\ref{sec:experiments} we show and discuss our experimental results, before Section~\ref{sec:conclusion} concludes the paper.

This paper is an extended version of a paper which is to appear in the proceedings of PPSN~2018.

\section{Preliminaries}
\label{sec:preliminaries}
For a given $n$ we let $[n] = \{1,\ldots,n\}$ be the set of variables. The only non-terminal (function symbol) is $J$ of arity $2$; the \emph{terminal set} $X$ consists of $2n$ literals, where $\nonvar_i$ is the complement of $\var_i$:
\begin{center}
$F \coloneqq \{J\}$, $J$ has arity $2$, \quad
$X\coloneqq \{\var_1, \nonvar_1, \dots , \var_n, \nonvar_n \}$.
\end{center}

For a \gptree $t$, we denote by $S(t)$ the set of leaves in $t$. By $S_i^+(t)$ and $S_i^-(t)$ we denote the set of leaves that are $\var_i$-literals and $\nonvar_i$-literals, respectively, and by $S_i(t) \coloneqq  S_i^+(t) \cup S_i^-(t)$ we denote the set of all $i$-literals. By $S^+(t) \coloneqq  \bigcup_{i=1}^n S_i^+(t)$ and $S^-(t) \coloneqq  \bigcup_{i=1}^n S_i^-(t)$ we denote the set of all positive and negative leaves, respectively. We denote the sizes of all these sets by the corresponding lower case letters, i.e., $s(t) \coloneqq |S(t)|$, $s_i(t) \coloneqq  |S_i(t)|$, etc.. In particular, we refer to $s(t)$ as the \emph{size} of $t$.

On the syntax trees, we analyze the problems \geqcmajority, \geqfracmajority, and \supermajority, 
which are defined as
\begin{align*}\label{eq:def-of-fitness-functions}
\text{\geqcmajority} & \coloneqq  |\{i \in [n] \mid s_i^+ \geq s_i^- +c\}| \; ;\\
\text{\geqfracmajority} & \coloneqq  |\{i \in [n] \mid s_i \geq 1 \text{ and } s_i^+ \geq \tfrac23 s_i\}| \; ;\\
\text{\supermajority} & \coloneqq  \sum_{i=1}^n f_i, 
\text{ where } f_i \coloneqq 
\begin{cases}
0 & ,\text{if } s_i = 0 \text{ or } s_i^+ < \tfrac23 s_i,\\
2-2^{s_i^--s_i^+} & ,\text{otherwise}.
\end{cases}
\end{align*} 

We call a variable contributing to the fitness \emph{expressed}. Since both \geqcmajority and \geqfracmajority count the number of expressed variables, they take values between $0$ and $n$. The function \supermajority is similar to \geqfracmajority, but if a $2/3$ majority is reached \supermajority awards a bonus for larger majorities: the term $f_i$ grows with the difference $s_i^+ - s_i^-$. Since $f_i \leq 2$, the function \supermajority takes values in $[0,2n]$. Note that the value $2n$ can never actually be reached, but can be arbitrarily well approximated.

In this paper we consider simple mutation-based genetic programming algorithms which use a modified version of the Hierarchical Variable Length (HVL) operator ({\cite{OReilly:thesis}}, \cite{OReilly:1994:GPSAHC}) called HVL-Prime as discussed in~\cite{GPFOGA11}. HVL-Prime allows trees of variable length to be produced by applying three different operations: insert, delete and substitute (see Figure~\ref{fig:HVL}). Each application of HVL-Prime chooses one of these three operations uniformly at random. We note that the literature also contains variants of the mutation operator that apply several such operations simultaneously (see~\cite{GPFOGA11,NguUrlWag:c:13:Foga}).

\begin{figure*}[t]
	{\renewcommand{\arraystretch}{1.5}
	\begin{tabularx}{\textwidth}{|l X|}
		\hline 
		\multicolumn{2}{|>{\hsize=\dimexpr1\hsize+10\tabcolsep}X|}{\centering Given a GP-tree $t$, mutate $t$ by applying HVL-Prime. For each application, choose uniformly at random one of the following three options.}\\ 
		substitute	& Choose a leaf uniformly at random and substitute it with a leaf in $X$ selected uniformly at random. \\ 
		insert  & Choose a node $v \in X$ and a leaf $u \in t$ uniformly at random. Substitute $u$ with a join node $J$, whose children are $u$ and $v$, with the order of the children chosen uniformly at random. \\ 
		delete & Choose a leaf $u \in t$ uniformly at random. Let $v$ be the sibling of $u$. Delete $u$ and $v$ and substitute their parent $J$ by $v$.  \\ 
		\hline 
	\end{tabularx}
	}
	\caption{Mutation operator HVL-Prime.}
	\label{fig:HVL}
\end{figure*}

The first algorithm we study is the \oneonegp. The algorithm is initialized with a tree generated by $\tinit$ random insertions. Afterwards, it maintains the best-so-far individual $t$. In each round, it creates an offspring of $t$ by mutation. This offspring is discarded if its fitness is worse than $t$, otherwise it replaces $t$. We recall that the fitness in the case with bloat control contains the size as a second order term. Algorithm~\ref{alg:gp} states the \oneonegp more formally.

\begin{algorithm2e}
	Let $t$ be a random initial tree of size $\tinit$\;
	\While{optimum not reached}{%
		$t' \assign \mbox{mutate}(t)$\;
		\lIf{$f(t') \geq f(t)$}{$t \assign t'$}
	}
	\caption{\oneonegp with mutations according to Figure~\ref{fig:HVL}}
	\label{alg:gp}
\end{algorithm2e}

\subsection{Crossover}
The second algorithm we consider is population-based.
When introduced by J.~R.~Koza \cite{Koza:1990:GPP}, Genetic Programming used fitness-proportionate selection and a genetic crossover, however mutation was hardly considered. In subsequent works many different setups for the crossover operator were introduced and studied. For instance, in \cite{OReilly:thesis} combinations of GP with local search in the form of mutation operators were studied and yielded better performance than GP.

Usually, two parents (a \emph{current solution} and a \emph{mate}) are used to generate a number of offspring. These offspring are a recombination of the alleles from both parents derived in a probabilistic manner. By modeling each individual as a \gptree, a crossover-point in both parents is decided upon due to a heuristic and the subtrees attached to these points are exchanged creating new \gptrees.

In the Crossover hill climbing algorithm first described by T.~Jones \cite{{jones:1995:ICGA}, {jones:1995:thesis}} only one \gptree is created from the current solution and a random mate. This offspring is evaluated and replaces the current solution if the fitness is not worse.

We consider the following simple crossover: the \emph{Concatenation Cros\-sover GP} working as follows (see also Algorithm~\ref{alg:mimeticCrossover}). For a fixed population of \gptrees, each \gptree is chosen to be the parent once. For each parent we choose a mate uniformly at random from the population and create one offspring by \emph{joining} the two trees using a new join-node. Before evaluating the offspring, we employ a local search in the form of the \oneonegp with bloat control. This local search is performed for a fixed amount of iterations before we discard the \gptree with worse fitness. The fixed amount depends on the size of the tree and ensures the absence of redundant leaves with high probability (see Lemma~\ref{lem:crossover_time_until_minimal}). We note that the amount of redundant leaves depends on the function to be optimized. The functions we studied are variants of \majority, for other functions the amount of iterations ensuring the absence of redundant leaves might be different.  

The initial population is generated by creating $\lambda$ random trees of size $\tinit$ and employing the local search on each of them. We then proceed in rounds of crossover as described above. We note that we assume all crossover operations to be performed in parallel. Hence, the new population is based entirely on the old population and not partially on previously generated individuals of the new generation.

\begin{algorithm2e}
	Let LS$(t)$ denote local search by the \oneonegp with bloat control on tree $t$ for $90 s \log s$ steps, where $s$ is the number of leaves in $t$\;
	\For{$i = 1$ \textbf{to} $\lambda$}{
		Let $t_i$ be a random initial tree of size $\tinit$\;
		$t_i$ $\leftarrow$ LS($t_i$)\;
	}
	\While{optimum not reached}{%
		\For{$i = 1$ \textbf{to} $\lambda$}{
			Choose $m \in \{1,\ldots, \lambda\} \setminus \{i\}$\;
			$t'_i \assign \mbox{join}(t_i,t_m)$\;
			$t''_i$ $\leftarrow$ LS($t_i'$)\;
			\lIf{$f(t''_i) \geq f(t_i)$}{$t_i \assign t''_i$}
		}
	}
	\caption{Concatenation Crossover-GP}
	\label{alg:mimeticCrossover}
\end{algorithm2e}

\subsection{Terminology}
\label{sec:terminology}
In this section we collect standard theorems on stochastic processes that we will use in the proofs. We start with the Chernoff bound.

\begin{theorem}[Chernoff Bound]\label{thm:Chernoff}\cite{dubhashi2009concentration}
Let $b >0$. Let the random variables $X_1, \ldots, X_n$ be independent and take values in $[0,1]$. Let $X= \sum_{i=1}^n X_i$ and $\mu = \Ex{X}$. Then for all $0 \leq \delta \leq 1$,
\[
\Pr[X \leq (1-\delta)\mu] \leq e^{-\delta^2\mu/2}
\]
and
\[
\Pr[X \geq (1+\delta)\mu] \leq e^{-\delta^2\mu/3}.
\]
\end{theorem}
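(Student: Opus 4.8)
The plan is to use the standard exponential-moment method (the textbook argument behind \cite{dubhashi2009concentration}). For the upper tail, fix $t>0$; Markov's inequality applied to the nonnegative variable $e^{tX}$ gives $\Pr[X \geq (1+\delta)\mu] = \Pr[e^{tX} \geq e^{t(1+\delta)\mu}] \leq e^{-t(1+\delta)\mu}\,\Ex{e^{tX}}$. By independence, $\Ex{e^{tX}} = \prod_{i=1}^n \Ex{e^{tX_i}}$, and since each $X_i \in [0,1]$, convexity of $z \mapsto e^{tz}$ gives the pointwise bound $e^{tX_i} \leq 1 - X_i + X_i e^t$; taking expectations and using $1+x \leq e^x$ yields $\Ex{e^{tX_i}} \leq 1 + \Ex{X_i}(e^t-1) \leq \exp\!\big(\Ex{X_i}(e^t-1)\big)$. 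Multiplying over $i$ gives $\Ex{e^{tX}} \leq \exp(\mu(e^t-1))$, hence $\Pr[X \geq (1+\delta)\mu] \leq \exp\!\big(\mu(e^t - 1 - t(1+\delta))\big)$.

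Next I would optimize the exponent $e^t - 1 - t(1+\delta)$ over $t$; it is minimized at $t = \ln(1+\delta)>0$, producing the classical estimate $\Pr[X \geq (1+\delta)\mu] \leq \big(e^\delta/(1+\delta)^{1+\delta}\big)^\mu$. It then remains to prove $e^\delta/(1+\delta)^{1+\delta} \leq e^{-\delta^2/3}$ for $0 \leq \delta \leq 1$, equivalently that $g(\delta) \coloneqq (1+\delta)\ln(1+\delta) - \delta - \delta^2/3$ is nonnegative on $[0,1]$; since $g(0)=0$, this is an elementary one-variable calculus check (e.g.\ bounding $\ln(1+\delta)$ by its Taylor series, or analyzing the sign of $g'$ on $[0,1]$).

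The lower tail is handled symmetrically: for $t>0$ apply Markov to $e^{-tX}$, so that $\Pr[X \leq (1-\delta)\mu] \leq e^{t(1-\delta)\mu}\,\Ex{e^{-tX}}$, and the same convexity argument gives $\Ex{e^{-tX}} \leq \exp(\mu(e^{-t}-1))$. Optimizing at $t = -\ln(1-\delta)>0$ yields $\Pr[X \leq (1-\delta)\mu] \leq \big(e^{-\delta}/(1-\delta)^{1-\delta}\big)^\mu$, and one finishes by verifying $e^{-\delta}/(1-\delta)^{1-\delta} \leq e^{-\delta^2/2}$, i.e.\ $(1-\delta)\ln(1-\delta) + \delta - \delta^2/2 \geq 0$ on $[0,1)$, again by elementary calculus (the boundary value $\delta=1$ being treated directly).

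The only genuine obstacle is these two calculus lemmas, which convert the sharp but unwieldy expressions $e^{\delta}/(1+\delta)^{1+\delta}$ and $e^{-\delta}/(1-\delta)^{1-\delta}$ into the clean Gaussian-type bounds $e^{-\delta^2/3}$ and $e^{-\delta^2/2}$; the probabilistic content — Markov applied to the exponential, independence, and the single-variable convexity estimate — is routine. I also note that the hypothesis ``$b>0$'' is never used in this argument and appears to be a vestigial artifact of the statement; alternatively, as the result is entirely standard, one may simply cite \cite{dubhashi2009concentration} without reproducing the proof.
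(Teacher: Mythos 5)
The paper does not prove this statement at all: it is quoted as a standard concentration inequality with a citation to \cite{dubhashi2009concentration}, so there is no in-paper argument to compare against. Your reconstruction is the correct textbook exponential-moment proof (Markov on $e^{\pm tX}$, independence, the convexity bound for $[0,1]$-valued summands, optimization at $t=\pm\ln(1\pm\delta)$, and the two elementary inequalities $(1+\delta)\ln(1+\delta)-\delta\geq\delta^2/3$ and $(1-\delta)\ln(1-\delta)+\delta\geq\delta^2/2$ on the stated range), and your observation that the hypothesis $b>0$ is a vestigial artifact of the statement is also accurate.
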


We will also use several drift theorems, which give information about the expected hitting time of a random process if we know the expected progress in each step. The first theorem (``multiplicative drift'') captures the case when the expected change is proportional to the current value of $X_t$.

\begin{theorem}[Multiplicative Drift, tail bound]
	\label{thm:multi_drift_upper_bound}
	\cite{doerr2013adaptive} \newline
	Let $(X_\tau)_{\tau \geq 0}$ be a sequence of random variables taking values in $\{0\} \cup [1, \infty)$. Assume that there is a $\delta >0$ such that
	\[
		\forall \tau \in \N, \, x\in \N_0 \, : \, \Ex{X_\tau \mid X_{\tau-1}=x} \leq (1-\delta) x .
	\]
	Then $\tau_0\coloneqq \min \{ \tau \in \N_0 \mid X_\tau=0\}$ satisfies
	\begin{align*}
	 &\Ex{\tau_0} \leq \frac{1}{\delta} (\ln (X_0) + 1) ; \\
	 &\forall c>0, \, \Prob{\tau_0 > \frac{1}{\delta} (\ln (X_0) + c)} < e^{-c}.
	\end{align*}
\end{theorem}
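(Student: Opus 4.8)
The plan is to follow the classical argument for multiplicative drift. First I would record that $0$ is absorbing: substituting $x=0$ into the drift hypothesis gives $\Ex{X_\tau \mid X_{\tau-1}=0}\le (1-\delta)\cdot 0=0$, and since $X_\tau\ge 0$ this forces $X_\tau=0$ almost surely. Hence $\tau_0$ really is the (unique) time at which the process gets absorbed in $0$, and because the non-zero values of the process are all at least $1$, for every integer $t\ge 0$ we have the identity of events $\{\tau_0>t\}=\{X_t\ne 0\}=\{X_t\ge 1\}$. (We may assume $\delta<1$, since $\delta\ge 1$ would force $X_\tau=0$ after one step and both bounds are then trivial.)

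For the tail bound I would iterate the drift condition via the tower rule: $\Ex{X_\tau}=\Ex{\Ex{X_\tau\mid X_{\tau-1}}}\le (1-\delta)\Ex{X_{\tau-1}}$, hence $\Ex{X_t}\le (1-\delta)^t X_0\le e^{-\delta t}X_0$ for all integer $t$ (equivalently, $(1-\delta)^{-\tau}X_\tau$ is a supermartingale and one applies optional stopping at $\tau_0\wedge t$). Combining this with Markov's inequality and the identity above gives $\Prob{\tau_0>t}=\Prob{X_t\ge 1}\le \Ex{X_t}\le e^{-\delta t}X_0$ for integer $t$; taking $t$ to be the least integer at least $\tfrac1\delta(\ln X_0+c)$ makes the right-hand side at most $e^{-c}$, which yields the second claim.

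For the expectation bound I see two routes. The quick one is to integrate the tail, $\Ex{\tau_0}=\sum_{t\ge 0}\Prob{\tau_0>t}\le\sum_{t\ge 0}\min\{1,e^{-\delta t}X_0\}$, splitting the sum at $t\approx \ln(X_0)/\delta$; this gives the bound up to constants. To get the clean constant $\tfrac1\delta(\ln X_0+1)$ I would instead pass to logarithms: by concavity of $\ln$ (Jensen), $\Ex{\ln X_\tau\mid X_{\tau-1}}\le\ln\bigl((1-\delta)X_{\tau-1}\bigr)=\ln X_{\tau-1}-\bigl(-\ln(1-\delta)\bigr)$ and $-\ln(1-\delta)\ge\delta$, so $Y_\tau\coloneqq\ln X_\tau$ (which is $\ge 0$ while $X_\tau\ge 1$, and where a jump to $X_\tau=0$ is an even larger decrease) has additive drift at least $\delta$ towards $0$. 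The additive drift theorem then bounds the expected time to reach a state with $X_\tau\le 1$ by $\ln(X_0)/\delta$, and from the state $X=1$ one additional step suffices in expectation $1/\delta$, because $\Ex{X_\tau\mid X_{\tau-1}=1}\le 1-\delta$ together with $X_\tau\in\{0\}\cup[1,\infty)$ forces $\Prob{X_\tau=0\mid X_{\tau-1}=1}\ge\delta$; summing gives $\tfrac1\delta(\ln X_0+1)$.

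The main obstacle is bookkeeping around the boundary rather than anything deep: the multiplicative-to-additive transfer genuinely needs the state space $\{0\}\cup[1,\infty)$, since otherwise $\ln X_\tau$ could drift to $-\infty$ and the additive drift theorem would not apply, so one must treat the final transition from value $1$ to $0$ by hand; and the tail bound is naturally an integer-time statement, so the real-valued threshold $\tfrac1\delta(\ln X_0+c)$ must be handled by the routine rounding indicated above (as in the cited source).
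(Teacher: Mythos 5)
The paper does not prove this theorem; it is imported as a black box from Doerr and Goldberg's work on adaptive drift analysis, so there is no in-paper proof to compare against. Judged on its own merits, your tail-bound argument is the standard one and is essentially sound: absorption at $0$, the identity $\{\tau_0>t\}=\{X_t\ge 1\}$, iterated conditional expectation to get $\Ex{X_t}\le e^{-\delta t}X_0$, and Markov's inequality. One small caveat on the rounding: bounding $\Prob{\tau_0>t}$ for $t=\lceil\tfrac1\delta(\ln X_0+c)\rceil$ controls a probability that is \emph{smaller} than $\Prob{\tau_0>\tfrac1\delta(\ln X_0+c)}$; to bound the latter you must take the floor, which costs a factor $e^{\delta}$. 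This imprecision is inherited from the statement itself (the cited source keeps the ceiling inside the probability), so I would not count it against you, but the direction of your rounding is backwards as written.

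The expectation bound is where there is a genuine gap. Your plan is to apply additive drift to $Y_\tau=\ln X_\tau$ and then handle the final step from state $1$ separately, and both halves have a problem. First, the additive drift theorem requires a non-negative process, and $\ln X_\tau=-\infty$ at absorption; if you repair this by truncating to $\max(\ln X_\tau,0)$, the drift bound can genuinely be lost. For example, let $X_{\tau-1}=2$ and let $X_\tau=0$ with probability $1/2$ and $X_\tau=4(1-\delta)$ with probability $1/2$: the multiplicative condition holds with equality, yet the truncated logarithm decreases in expectation by only $-\tfrac12\ln(1-\delta)\approx\delta/2$. Your parenthetical remark that ``a jump to $X_\tau=0$ is an even larger decrease'' is precisely what truncation destroys. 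Second, the claim that from state $X=1$ ``one additional step suffices in expectation $1/\delta$'' is unjustified: if the step from $1$ fails to absorb, the process need not stay at $1$ --- it may jump to a large value, after which the per-step absorption probability is no longer bounded below by $\delta$ and your additive-drift budget (which pays only for the \emph{first} passage into $\{0,1\}$) has already been spent. Both issues are fixed simultaneously by the standard potential $g(0)=0$ and $g(x)=1+\ln x$ for $x\ge 1$: one verifies $g(y)\le g(x)+(y-x)/x$ for every $y$ in the state space and every $x\ge 1$ (for $y\ge1$ this is $\ln(y/x)\le y/x-1$; for $y=0$ it is $-(1+\ln x)\le -1$), which together with the multiplicative drift hypothesis yields $\Ex{g(X_\tau)\mid X_{\tau-1}=x}\le g(x)-\delta$, and additive drift applied to the non-negative process $g(X_\tau)$ gives $\Ex{\tau_0}\le g(X_0)/\delta=\tfrac1\delta(1+\ln X_0)$ in one stroke.
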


We also need analogous theorem for constant additive drift and for variable drift.

\begin{theorem}[Additive Drift]
	\label{thm:additive_drift_upper_bound}
	\cite{HeYao:04:drift}
	Let	$(X_\tau)_{\tau\geq 0}$ be a sequence of non-negative random variables over a finite state space $S \subseteq \R$. Let $\tau_0$ be the random variable that denotes the earliest point in time $\tau\geq 0$ such that $X_\tau = 0$. If there exists $c > 0$ such that
	\[
		\Ex{X_\tau - X_{\tau-1} \mid \tau_0 > \tau} \geq c , 
	\]
	then
	\[
		\Ex{\tau_0 \mid X_0} \leq X_0/c .
	\]
\end{theorem}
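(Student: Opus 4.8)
The plan is to control the survival probabilities $\Prob{\tau_0 > \tau}$ by a potential‑bookkeeping (telescoping) argument. Since $\tau_0$ is a non‑negative integer, the tail‑sum formula gives $\Ex{\tau_0 \mid X_0} = \sum_{\tau \geq 0}\Prob{\tau_0 > \tau \mid X_0}$ (finiteness of the state space $S$ guarantees that the expectations below are well defined), so it suffices to bound the partial sums of this series by $X_0/c$. To this end I would track the stopped expectations $a_\tau \coloneqq \Ex{X_{\tau\wedge\tau_0}}$, where $\tau\wedge\tau_0 \coloneqq \min\{\tau,\tau_0\}$; intuitively $a_\tau$ is the amount of ``potential'' still present at time $\tau$, which has to be spent at rate at least $c$ per step for as long as the process has not been absorbed.

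Concretely, on $\{\tau_0 \leq \tau\}$ the stopped process is frozen, so $a_{\tau+1}-a_\tau = \Ex{(X_{\tau+1}-X_\tau)\mathbf{1}[\tau_0>\tau]}$; pulling the indicator out of the expectation as a conditioning event and inserting the hypothesised drift (in the form appropriate for a process that is to reach $0$, namely an expected decrease of at least $c$ while $\tau_0$ has not occurred) yields
\[
a_{\tau+1}-a_\tau \;\leq\; -c\,\Prob{\tau_0>\tau}.
\]
Summing from $\tau=0$ to $N-1$ and using $a_N = \Ex{X_{N\wedge\tau_0}} \geq 0$ together with $a_0 = \Ex{X_0}$ gives $c\sum_{\tau=0}^{N-1}\Prob{\tau_0>\tau} \leq \Ex{X_0}$ for every $N$. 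The partial sums are thus uniformly bounded, so the full series converges and $c\,\Ex{\tau_0} \leq \Ex{X_0}$; carrying the conditioning on $X_0$ through the whole computation gives $\Ex{\tau_0 \mid X_0} \leq X_0/c$. A by‑product is that the bound is finite, which retroactively shows $\tau_0<\infty$ almost surely, so no separate argument that the process actually hits $0$ is needed.

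An equivalent and perhaps more standard route is to observe that $Y_\tau \coloneqq X_{\tau\wedge\tau_0} + c\,(\tau\wedge\tau_0)$ is a supermartingale — the deterministic increase $+c$ per step is exactly absorbed by the drift while $\tau<\tau_0$, and $Y$ is constant afterwards — and to read off $c\,\Ex{\tau\wedge\tau_0} \leq \Ex{Y_\tau} \leq \Ex{Y_0} = \Ex{X_0}$ before letting $\tau\to\infty$. I do not expect a genuine obstacle here: the argument is a telescoping sum followed by a monotone limit. The single point that deserves care is resisting the temptation to apply optional stopping directly at the (possibly unbounded) stopping time $\tau_0$ to the unbounded process $Y$; this is sidestepped by working with the truncations $\tau\wedge\tau_0$ throughout and passing to the limit only at the very end, where non‑negativity of all summands makes the interchange automatic.
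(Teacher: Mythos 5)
The paper does not prove this theorem at all: it is imported verbatim from He and Yao (the cited reference) as an off-the-shelf tool, so there is no in-paper argument to compare against. Your proof is the standard one and is correct: the telescoping bound on the stopped expectations $a_\tau = \Ex{X_{\tau\wedge\tau_0}}$ (equivalently, the supermartingale $X_{\tau\wedge\tau_0}+c\,(\tau\wedge\tau_0)$ combined with a monotone limit rather than optional stopping at the unbounded time $\tau_0$) is exactly how this result is established in the drift-analysis literature. Two small points in your favour: you correctly read the hypothesis as an expected \emph{decrease} of at least $c$ per step --- as written in the paper the inequality $\Ex{X_\tau - X_{\tau-1}\mid \tau_0>\tau}\geq c$ has the sign reversed relative to the tail-bound version in Theorem~\ref{thm:additive_drift_concentration_upper_bound} and would make the conclusion false, so this is a typo you were right to repair --- and your use of the indicator $\mathbf{1}[\tau_0>\tau]$ on the step from $\tau$ to $\tau+1$ quietly replaces the statement's conditioning on the future-measurable event $\{\tau_0>\tau\}$ by the intended conditioning on not yet being absorbed before the step, which is the form in which the hypothesis is actually verified in all applications.
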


\begin{theorem}[Variable Drift]\label{thm:variable_drift}
\cite{Joh:th:10}
Let $(X_\tau)_{\tau\geq 0}$ be a sequence of non-negative random variables over a finite state space $S \subseteq [0,\infty)$. Let $x_{\text{min}} \coloneqq \min\{x \in S \mid x >0\}$. Let $\tau_0$ be the random variable that denotes the earliest point in time $\tau\geq 0$ such that $X_\tau = 0$. If there is an increasing function $h: \R^+ \rightarrow \R^+$ such that for all $x \in \mathcal{S}\setminus\{0\}$ and all $\tau\geq 0$,
\begin{align}\label{drifteq:variabledrift1}
\Ex{X_\tau - X_{\tau-1} \mid X_{\tau-1} = x} \geq h(x),
\end{align}
then
\begin{align}\label{drifteq:variabledrift2}
\Ex{\tau_0} \leq  \frac{x_{\text{min}}}{h(x_{\text{min}})} + \int_{x_{\text{min}}}^{X_0} \frac{1}{h(x)} dx.
\end{align}
\end{theorem}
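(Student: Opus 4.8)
The plan is to deduce this variable-drift statement from the Additive Drift Theorem (Theorem~\ref{thm:additive_drift_upper_bound}) by passing to a suitable Lyapunov potential. Concretely, I would introduce $g \colon S \to [0,\infty)$ with $g(0) \coloneqq 0$ and
\[
g(x) \coloneqq \frac{x_{\min}}{h(x_{\min})} + \int_{x_{\min}}^{x} \frac{1}{h(y)}\, dy \qquad \text{for } x \in S,\ x \ge x_{\min},
\]
and then study the process $(g(X_\tau))_{\tau \ge 0}$. Since $h$ is positive, $1/h$ is positive and $g(x) > 0$ for every $x \in S \setminus \{0\}$, so $g(X_\tau) = 0$ if and only if $X_\tau = 0$; hence the first hitting time of $0$ is the same for both processes and equals $\tau_0$. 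As $S$ is finite, $(g(X_\tau))$ is a sequence of non-negative random variables on the finite state space $g(S)$, so all hypotheses of the Additive Drift Theorem except the drift bound are immediate.

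The crux is the deterministic inequality
\[
g(x) - g(y) \;\ge\; \frac{x - y}{h(x)} \qquad \text{for all } x \in S \setminus \{0\} \text{ and all } y \in S .
\]
I would prove it by splitting on the position of $y$; note that there are no states of $S$ strictly between $0$ and $x_{\min}$, so either $y = 0$ or $y \ge x_{\min}$. If $y \ge x_{\min}$, then $g(x) - g(y) = \int_{y}^{x} \tfrac{1}{h(z)}\, dz$, and because $1/h$ is non-increasing this equals at least $(x-y)/h(x)$ regardless of whether $y \le x$ or $y > x$ (in the first case the integrand is $\ge 1/h(x)$ throughout $[y,x]$; in the second it is $\le 1/h(x)$ throughout $[x,y]$, which flips to the same bound after negation). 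If $y = 0$, then $g(x) - g(0) = \tfrac{x_{\min}}{h(x_{\min})} + \int_{x_{\min}}^{x} \tfrac{1}{h(z)}\, dz \ge \tfrac{x_{\min}}{h(x)} + \tfrac{x - x_{\min}}{h(x)} = \tfrac{x}{h(x)}$, using $h(x_{\min}) \le h(x)$ for the boundary term and monotonicity of $1/h$ for the integral.

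Armed with this, I would take conditional expectations: for any $x \in S \setminus \{0\}$,
\[
\Ex{g(X_{\tau-1}) - g(X_\tau) \mid X_{\tau-1} = x} \;\ge\; \frac{x - \Ex{X_\tau \mid X_{\tau-1} = x}}{h(x)} \;\ge\; \frac{h(x)}{h(x)} = 1 ,
\]
where the last inequality is precisely the drift hypothesis~\eqref{drifteq:variabledrift1} (the expected one-step decrease of $X$ from state $x$ is at least $h(x)$). Thus $(g(X_\tau))$ has additive drift at least $1$ while $0$ has not been reached, and Theorem~\ref{thm:additive_drift_upper_bound} yields $\Ex{\tau_0} \le g(X_0)/1 = \tfrac{x_{\min}}{h(x_{\min})} + \int_{x_{\min}}^{X_0} \tfrac{1}{h(x)}\, dx$, which is the asserted bound.

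The main obstacle is the pointwise inequality, and within it the case $y = 0$: one has to see that the extra summand $x_{\min}/h(x_{\min})$ built into $g$ is exactly what compensates for the missing piece of the integral between $0$ and $x_{\min}$, and one has to verify the bound without assuming the chain is non-increasing, i.e.\ also for $y > x$. The rest is bookkeeping: $\int_{x_{\min}}^{X_0} 1/h$ is finite since $1/h \le 1/h(x_{\min})$ on $[x_{\min},X_0]$, and the conditioning ``$\tau_0 > \tau$'' in the Additive Drift Theorem reduces here to conditioning on $X_{\tau-1} = x$ with $x \neq 0$, which is all the drift computation above uses.
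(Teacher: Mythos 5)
The paper does not prove this theorem; it is quoted from Johannsen's thesis \cite{Joh:th:10}, so there is no in-paper argument to compare against. Your derivation is correct and is essentially the standard one from the literature: reduce to additive drift via the potential $g(x) = x_{\min}/h(x_{\min}) + \int_{x_{\min}}^{x} h(y)^{-1}\,dy$, establish the pointwise bound $g(x)-g(y) \ge (x-y)/h(x)$ (your case analysis, including the non-monotone case $y>x$ and the boundary case $y=0$, is sound), and conclude drift at least $1$ for $g(X_\tau)$. You also correctly read the drift hypothesis as a bound on the expected \emph{decrease} despite the sign typo in the statement. Nothing further is needed.
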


Finally we need some tail bounds on the hitting time for additive drift. The following theorem combines two cases in which we have a drift away from zero. Firstly, starting from zero we expect to need time $n/c$ to hit $n$, and the theorem states that it is exponentially unlikely to need twice as much time. Secondly, starting from $n$ even an exponential number of steps does not suffice to reach zero if the drift pushes in the opposite direction.

\begin{theorem}[Tail Bounds for Additive Drift]
	\label{thm:additive_drift_concentration_upper_bound}
	\cite{DBLP:journals/algorithmica/Kotzing16,oliveto2011negative,oliveto2012erratum}
	Let $(X_\tau)_{\tau\geq 0}$ be a sequence of random variables over $\R$, each with finite expectation. With $\tau_{\geq n} \coloneqq \min \{\tau \geq 0 \mid X_t \geq n \}$ we denote the random variable describing the earliest point at which the random process exceeds $n$, and likewise with $\tau_{\leq 0}$ we denote the earliest point at which the random process drops below zero. Suppose there are $c, K > 0$ such that, for all $t$,
	\begin{enumerate}
		\item $ \begin{aligned}[t]
		&\Ex{X_{\tau-1} - X_\tau \mid X_0, \dots, X_\tau} \geq c, \mbox{ and}
		\end{aligned}$
		\item $ \begin{aligned}[t]
		&|X_\tau - X_{\tau+1}| < K.
		\end{aligned}$	
	\end{enumerate}
Then there is $\rho >0$ such that the following is true for all $n \geq 1$.
\begin{enumerate}[(a)]
\item \[
\Prob{\tau_{\geq n} \geq 2n/c \mid X_0 \geq 0} \leq e^{- n c/(4 K^2)} .
\]
\item 
\[
\Prob{\tau_{\leq 0} \leq e^{\rho n} \mid X_0 \geq n} \leq e^{- \rho n} .
\]
\end{enumerate}
\end{theorem}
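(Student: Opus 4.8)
The plan is to derive both parts from a single exponential potential. For a parameter $\lambda>0$ I would put $M_\tau \coloneqq e^{-\lambda X_\tau}$ and write $D_\tau \coloneqq X_\tau - X_{\tau-1}$ for the increment at time $\tau$, so that by the drift assumption $\Ex{D_\tau \mid X_0,\dots,X_{\tau-1}} \geq c$ and $|D_\tau|<K$ almost surely. A conditional application of Hoeffding's lemma to $D_\tau$ --- which has conditional mean at least $c$ and varies over the interval $(-K,K)$ of length $2K$ --- gives
\[
\Ex{e^{-\lambda D_\tau} \mid X_0,\dots,X_{\tau-1}} \;\leq\; e^{-\lambda c + \lambda^2K^2/2};
\]
write $q(\lambda)$ for the right-hand side, which is $<1$ precisely for $0<\lambda<2c/K^2$. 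Multiplying by $M_{\tau-1}=e^{-\lambda X_{\tau-1}}$ shows that, for any such $\lambda$, $(M_\tau)_{\tau\geq 0}$ is a nonnegative supermartingale that in fact contracts: $\Ex{M_\tau \mid X_0,\dots,X_{\tau-1}} \leq q(\lambda)\,M_{\tau-1}$, and iterating the tower rule gives $\Ex{M_\tau \mid X_0} \leq q(\lambda)^\tau e^{-\lambda X_0}$. Since both parts assume $X_0\geq 0$, we have $M_0=e^{-\lambda X_0}\leq 1$, so all these expectations are finite.

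For part~(a) I would set $T\coloneqq\lceil 2n/c\rceil$ and note that, as $\tau_{\geq n}$ is integer-valued, the event $\{\tau_{\geq n}\geq 2n/c\}$ coincides with $\{\tau_{\geq n}\geq T\}$ and in particular forces $X_{T-1}<n$, hence $M_{T-1}>e^{-\lambda n}$. By Markov's inequality and the contraction bound,
\[
\Prob{\tau_{\geq n}\geq 2n/c \mid X_0\geq 0} \;\leq\; \Prob{M_{T-1}>e^{-\lambda n}\mid X_0\geq 0} \;\leq\; e^{\lambda n}\,q(\lambda)^{T-1}.
\]
The exponent equals $\lambda n-(T-1)\bigl(\lambda c-\lambda^2K^2/2\bigr)$, and with $T-1$ essentially $2n/c$ this becomes $-\lambda n+n\lambda^2K^2/c$, which is minimized at $\lambda=c/(2K^2)\in(0,2c/K^2)$ with value $-nc/(4K^2)$; that is the claimed bound. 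Making this precise for every $n\geq 1$ --- rather than with a slightly worse constant or only for large $n$ --- requires carrying along the ceiling and the resulting lower-order factor $e^{\BigO(c^2/K^2)}$, for which I would defer to the cited sources.

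For part~(b) I would use the same process $(M_\tau)$, now simply as a nonnegative supermartingale over all of time. Doob's maximal inequality for nonnegative supermartingales gives, conditionally on $\{X_0\geq n\}$,
\[
\Prob{\sup_{\tau\geq 0} M_\tau\geq 1 \mid X_0\geq n} \;\leq\; \Ex{M_0 \mid X_0\geq n} \;=\; \Ex{e^{-\lambda X_0}\mid X_0\geq n} \;\leq\; e^{-\lambda n}.
\]
Since $\tau_{\leq 0}<\infty$ entails $X_\tau\leq 0$, hence $M_\tau\geq 1$, for some $\tau$, this yields $\Prob{\tau_{\leq 0}<\infty \mid X_0\geq n}\leq e^{-\lambda n}$. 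Fixing any $\lambda\in(0,2c/K^2)$, say $\lambda=c/K^2$, and taking $\rho\coloneqq\lambda$, the bound $\Prob{\tau_{\leq 0}\leq e^{\rho n}\mid X_0\geq n}\leq\Prob{\tau_{\leq 0}<\infty\mid X_0\geq n}\leq e^{-\rho n}$ follows immediately; it is in fact stronger than the statement, since it controls the probability of ever reaching $0$, not only within the window $e^{\rho n}$. (If the drift assumption were only available in a strip one would have to truncate $M_\tau$ there; here it is global, so this is unnecessary.)

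The only genuine obstacle will be pinning down the exact constant in part~(a): $2n/c$ is precisely the expected time for a drift-$c$ process to climb from $0$ to level $n$, so the Markov step is evaluated right at its threshold, and the choice of $\lambda$ --- together with the rounding $\lceil 2n/c\rceil$ and the $e^{\BigO(c^2/K^2)}$ lower-order factor --- must be handled with some care to land on $e^{-nc/(4K^2)}$ rather than on a weaker constant in the exponent. Part~(b) is routine once the supermartingale is in hand; the only point to check is that conditioning on the $X_0$-measurable event $\{X_0\geq n\}$ leaves $(M_\tau)$ a supermartingale, so that Doob's inequality indeed applies.
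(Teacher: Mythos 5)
The paper does not prove this theorem; it is imported from the cited references (Kötzing; Oliveto--Witt) as a black-box tool, so there is no in-paper proof to compare against. Your exponential-supermartingale argument ($M_\tau = e^{-\lambda X_\tau}$ with a conditional Hoeffding/Azuma step) is exactly the standard route taken in those sources: part (b) is complete and in fact yields the stronger statement $\Prob{\tau_{\leq 0}<\infty \mid X_0\geq n}\leq e^{-\rho n}$, and the conditioning and Ville/Doob steps are handled correctly. The one point you rightly flag in part (a) is real but minor: with $T=\lceil 2n/c\rceil$ and $\lambda = c/(2K^2)$ the exponent comes out as $-nc/(4K^2)+\bigl(\lambda c-\lambda^2K^2/2\bigr) = -nc/(4K^2)+3c^2/(8K^2)$, so you land on $e^{3c^2/(8K^2)}\cdot e^{-nc/(4K^2)}$ rather than the bare constant; since the hypotheses force $c<K$ this surplus factor is at most $e^{3/8}$, and eliminating it entirely requires either a slightly different choice of $\lambda$/stopping argument or simply accepting the form in which the cited sources state the bound. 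This does not affect any use of the theorem in the paper, where only the exponential decay rate matters.
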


For the analysis, it will be helpful to partition the set of leaves into three classes as follows. The set $C^+(t) \subseteq S^+(t)$ of \emph{positive critical leaves} is the set of leaves $u$, whose deletion from the tree results in a decreased fitness. Similarly, the set $C^-(t) \subseteq S^-(t)$ of \emph{negative critical leaves} is the set of leaves $u$, whose deletion from $t$ results in an increased fitness. Finally, the set $R(t) \coloneqq [n] \setminus (C^+(t) \cup C^-(t))$ of \emph{redundant leaves} is the set of all leaves $u$, whose deletion from $t$ does not affect the fitness. Similar as before, we denote $c^-(t) = |C^-(t)|$, $c^+(t) = |C^-(t)|$, and $r(t) = |R(t)|$.

Given a time $\tau \geq 0$, we denote by $t_\tau$ the \gptree after $\tau$ iterations of the algorithm. Additionally, we use $S(\tau), s(\tau), S_i(\tau), \ldots$ in order to denote $S(t_\tau), s(t_\tau), S_i(t_\tau), \ldots$.
Moreover, we apply the standard Landau notation $\BigO(\cdot)$, $\LittleO(\cdot)$, $\Omega(\cdot)$, $\omega(\cdot)$, $\Theta(\cdot)$ as detailed in~\cite{CormenAlgorithms}.

\begin{lemma}\label{lem:random-tree}
Let $\nu >0$ be a constant, and let $t$ be a random \gptree of size $s(t)= \nu n$. Moreover, let $k,\ell \in \N_0$ be constants, and let $N_{k,\ell}$ be the number of variables $i$ with $s_i^+ = k$ and $s_i^- = \ell$. Then with probability $1-e^{-\Omega\left(n^{1/3}\right)}$,
\[
N_{k,\ell} = (1\pm \LittleO(1))\frac{e^{-\nu}(\nu/2)^{k+\ell}}{k! \cdot \ell!} \cdot n = \Theta(n).
\]
\end{lemma}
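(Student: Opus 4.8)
\section*{Proof proposal for Lemma~\ref{lem:random-tree}}

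The plan is to reduce the statement to a standard balls-into-bins computation and then obtain the exponential concentration from the method of bounded differences. First I would note that, for a random \gptree $t$ of size $m \coloneqq s(t) = \nu n$, the quantity $N_{k,\ell}$ does not depend on the tree topology at all, only on the multiset of leaf labels, and the $m$ leaf labels are independent and uniform on $X$, a set of $2n$ symbols. Consequently the vector $(s_1^+,s_1^-,\dots,s_n^+,s_n^-)$ has exactly the law of the occupancy vector when $m$ balls are thrown independently and uniformly at random into $2n$ bins, where bin $2i-1$ records the $\var_i$-leaves and bin $2i$ the $\nonvar_i$-leaves. In particular, for every fixed $i$,
\[
\Prob{s_i^+ = k \text{ and } s_i^- = \ell} = \binom{m}{k}\binom{m-k}{\ell}\left(\frac{1}{2n}\right)^{k+\ell}\left(1-\frac{1}{n}\right)^{m-k-\ell}.
\]

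Second, I would evaluate this probability asymptotically. Since $k,\ell$ are constants and $m=\nu n$, we have $\binom{m}{k}\binom{m-k}{\ell} = (1\pm\LittleO(1))\,m^{k+\ell}/(k!\,\ell!)$ and $(1-1/n)^{m-k-\ell} = (1\pm\LittleO(1))\,e^{-\nu}$, so the displayed probability equals $(1\pm\LittleO(1))\,p_{k,\ell}$ with $p_{k,\ell}\coloneqq e^{-\nu}(\nu/2)^{k+\ell}/(k!\,\ell!)$, a positive constant. Since $N_{k,\ell}=\sum_{i=1}^n \mathbf{1}[s_i^+=k \text{ and } s_i^-=\ell]$, linearity of expectation gives $\Ex{N_{k,\ell}} = (1\pm\LittleO(1))\,p_{k,\ell}\,n = \Theta(n)$.

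Third, for the concentration I would view $N_{k,\ell}$ as a function of the $m$ independent leaf labels and apply McDiarmid's bounded-differences inequality. Relabeling a single leaf changes the pair $(s_j^+,s_j^-)$ for at most two indices $j$ --- the variable the leaf used to belong to and the one it now belongs to --- hence it changes $N_{k,\ell}$ by at most $2$. Therefore, for every $\tau>0$,
\[
\Prob{|N_{k,\ell} - \Ex{N_{k,\ell}}| \geq \tau} \leq 2\exp\!\left(-\frac{\tau^2}{2m}\right) = 2\exp\!\left(-\frac{\tau^2}{2\nu n}\right),
\]
and taking $\tau = n^{2/3}$ makes the right-hand side $e^{-\Omega(n^{1/3})}$. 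On the complementary event, $N_{k,\ell} = \Ex{N_{k,\ell}} \pm n^{2/3} = (1\pm\LittleO(1))\,p_{k,\ell}\,n$, because $p_{k,\ell}\,n = \Theta(n)$ dominates $n^{2/3}$; in particular $N_{k,\ell} = \Theta(n)$, which is the claim.

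The expectation computation is purely mechanical, so the only real work is the concentration, and that is the step I expect to be the main obstacle. Chebyshev's inequality would not suffice: using that the bin occupancies are negatively correlated one obtains $\mathrm{Var}(N_{k,\ell}) = \BigO(n)$ and hence $N_{k,\ell} = (1\pm\LittleO(1))\Ex{N_{k,\ell}}$ only with probability $1-\LittleO(1)$, whereas the lemma demands failure probability $e^{-\Omega(n^{1/3})}$. The bounded-differences argument is exactly what upgrades this to the required exponential tail, and the one point there that needs care is the bounded-difference constant, namely that a single relabeling perturbs $N_{k,\ell}$ by at most $2$ --- which is immediate once one tracks the at most two affected variables.
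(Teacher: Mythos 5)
Your proof is correct, but the concentration step takes a genuinely different route from the paper's. Both arguments begin identically: the tree topology is irrelevant, the $\nu n$ leaf labels are i.i.d.\ uniform on the $2n$ symbols, and the per-variable probability $\Prob{s_i^+=k \text{ and } s_i^-=\ell}$ converges to $e^{-\nu}(\nu/2)^{k+\ell}/(k!\,\ell!)$. For the tail bound, however, the paper Poissonizes: it invokes the standard balls-into-bins corollary (Mitzenmacher--Upfal) that any event's probability under the true occupancy vector is at most $2e\sqrt{\nu n}$ times its probability when the counts $s_1,\dots,s_n$ are replaced by independent $\mathrm{Po}(\nu)$ variables, applies a Chernoff bound with $\delta=n^{-1/3}$ to the number of variables with exactly $k+\ell$ literals, pays the $\sqrt{n}$ factor to transfer back, and then runs a second Chernoff bound for the binomial split of those $k+\ell$ literals into $k$ positive and $\ell$ negative ones. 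You instead keep the dependent occupancy counts and apply McDiarmid's bounded-differences inequality directly to $N_{k,\ell}$ as a function of the $\nu n$ independent labels, with Lipschitz constant $2$; taking deviation $n^{2/3}$ gives the same $e^{-\Omega(n^{1/3})}$ failure probability in a single step. Your route is more self-contained (no Poissonization corollary, no two-stage conditioning) at the cost of importing McDiarmid rather than the plain Chernoff bound the paper records as Theorem~\ref{thm:Chernoff}; both are standard and both yield the claimed exponent. Your closing observation that Chebyshev via negative correlation of the bin occupancies would only give failure probability $\LittleO(1)$, not the required exponential tail, is also correct and pinpoints why some stronger concentration tool is needed here.
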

In particular, for \geqcmajority, \geqfracmajority and \supermajority, a constant fraction of the variables are expressed in $t$, and a constant fraction are not expressed in $t$. 
\begin{proof}
Fix a variable $i$. Each leaf has probability $1/n$ to be an $i$-literal. Therefore, the number of $i$-literals out of $\nu n$ follows approximately a Poisson approximation Po$(\nu)$. More precisely, if $X_1,\ldots,X_n$ are independent Poisson random variables with parameter $\nu$, and $\mathfrak E = \mathfrak E(x_1,\ldots,x_n) \in \{0,1\}$ is any property depending on $n$ integer-valued random variables, then~\cite[Corollary 5.9]{BookMitUp},
\begin{align}\label{eq:poisson_approximation}
\Pr[\mathfrak E(s_1,\ldots,s_n) ] \leq 2e\sqrt{\nu n}\Pr[\mathfrak E(X_1,\ldots,X_n)].
\end{align}
In particular, by definition of the Poisson distribution, $\Pr[X_i = k +\ell] = e^{-\nu}\nu^{k+\ell}/(k+\ell)! \eqqcolon q$ independently for every $i$. Therefore, if we set $Y_i \coloneqq 1$ if $X_i= k+\ell$, and $Y_i\coloneqq 0$ otherwise, then $Y\coloneqq \sum_{i=1}^n Y_i$ is the number of random variables with value $k +\ell$. We have $\EE[Y] = q n$, and by the Chernoff bound~\ref{thm:Chernoff} with $\delta = n^{-1/3}$,
\[
\Pr\left[Y \not\in (1\pm \delta)q n\right]\leq 2e^{-q n^{1/3}/3}.
\]
Thus, if we call $L$ the set of $i\in [n]$ such that there are exactly $k+\ell$ $i$-literals, then by~\eqref{eq:poisson_approximation}
\[
\Pr\left[|L| \not\in (1\pm \delta)q n\right]\leq 2e\sqrt{\nu n}e^{-q n^{1/3}/3} = e^{-\Omega\left(n^{1/3}\right)}.
\]
Therefore, we may assume that $|L| \in (1\pm \delta)q_{k,\ell} n$. Let $L' \subseteq L$ be the set of variables $i$ with $s_i^+=k$ and $s_i^-=\ell$. Then each $i\in L$ has probability $q' \coloneqq \binom{k+\ell}{k}/2^{k+\ell} = \Theta(1)$ to be in $L'$, independently for each $i$. Hence, with $\mu \coloneqq \EE\left[|L'| \,\middle|\, |L| \in (1\pm \delta)q n\right] \geq (1- \delta)q'q n$, a similar application of the Chernoff bound as before shows
\[
\Pr\left[|L'| \not\in (1\pm \delta)^2q'q n \,\middle|\, |L| \in (1\pm \delta)q n\right]\leq e^{-\delta^2(1-\delta)q'qn/3}.
\]
Since $q'q = e^{-\nu}(\nu/2)^{k+\ell}/(k! \ell!)$, the lemma follows.

The remark on expressed variables follows by setting $k=c$ and $\ell=0$ for \geqcmajority and $k=1$ and $\ell=0$ for the other two functions. The remark on unexpressed variables follows by setting $k=0$ and $\ell=0$.
\end{proof}

\section{(1+1) GP with Bloat Control}
\label{sec:bloatcontrol}

In this section we study how local search with bloat control performs on the given fitness functions. Theorem~\ref{thm:+cMajority_lower-bound_constant-stepsize} shows that for small initial trees \geqcmajority cannot be efficiently optimized, while Theorem~\ref{thm: 2-3Majority_bc_constant-stepsize} shows that this is possible for \geqfracmajority. Finally, Theorem~\ref{thm: 2-3SuperMajority_bc_constant-stepsize} considers \supermajority.

\begin{theorem}\label{thm:+cMajority_lower-bound_constant-stepsize}
	Consider the \oneonegp on \geqcmajority with bloat control on the initial tree with size $\tinit < n$. If $c >1$, with probability equal to $1$, the algorithm will never reach the optimum.
\end{theorem}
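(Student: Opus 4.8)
The plan is to track only two quantities along the run, the tree size $s(t_\tau)$ and the value $m_\tau \coloneqq {}$\geqcmajority$(t_\tau) \in \{0,1,\dots,n\}$, and to show that the tree simply cannot grow large enough to be optimal. First I would unpack the semantics of bloat control: the \oneonegp accepts an offspring $t'$ of the current tree $t$ exactly when $m(t') > m(t)$, or $m(t') = m(t)$ and $s(t') \le s(t)$. In particular $m_\tau$ is non-decreasing in $\tau$, and any \emph{size-increasing} accepted step must come with a \emph{strict} increase of $m$.

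Next I would observe that among the three HVL-Prime operations only \emph{insert} can increase the size, and it does so by exactly $1$ (\emph{substitute} keeps the size, \emph{delete} lowers it by $1$). Combining this with the acceptance rule, every accepted insert strictly increases $m_\tau$, hence increases it by at least $1$; since $(m_\tau)_\tau$ is non-decreasing and bounded above by $n$, the sum of its increments over the whole (possibly infinite) run telescopes to at most $n - m_0 \le n$, so there are at most $n$ accepted inserts in total. As no other operation ever increases the size, $s(t_\tau) \le \tinit + n$ for all $\tau$, and since $\tinit < n$ this gives $s(t_\tau) \le 2n-1 < 2n$ throughout the run, for every realization of the random choices.

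Then I would argue that an optimal tree must have size at least $2n$ when $c > 1$: optimality means all $n$ variables are expressed, i.e. $s_i^+ \ge s_i^- + c$ for every $i$; since $c > 1$ and the literal counts are integers, this forces $s_i^+ \ge 2$, hence $s_i = s_i^+ + s_i^- \ge 2$, and summing over $i \in [n]$ yields size $\ge 2n$. Together with the size bound from the previous paragraph, no tree visited by the algorithm can be optimal, which proves the statement (in fact deterministically, so certainly with probability $1$).

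There is no genuine obstacle here; the only things to state carefully are the bloat-control acceptance rule — that a size-increasing move is accepted only on a strict fitness gain — and the bookkeeping that charges each growth step against one of the at most $n$ unit increments of the bounded monotone quantity $m_\tau$. The final comparison uses the hypothesis $\tinit < n$ essentially tightly, and also uses $c>1$ essentially tightly (for $c=1$ an optimum needs only $n$ leaves and the argument would break, consistently with the theorem being stated for $c>1$).
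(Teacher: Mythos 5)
Your proof is correct, and it is a genuinely different (and arguably cleaner) argument than the one in the paper. The paper proves the theorem by tracking the set $X_\tau$ of variables $i$ with no $\var_i$-leaf in the current tree: it argues that, because a single inserted $\var_{i_0}$ is redundant when $c>1$ and is therefore rejected by bloat control, the only way to make progress on an unexpressed variable is to substitute a redundant positive leaf $\var_j$ by $\var_{i_0}$, and a counting argument on these substitutions shows $|X_\tau|\geq 1$ forever (the paper also simplifies by optimistically assuming no negative leaves initially). Your argument instead ignores the combinatorics of individual variables entirely and uses a global potential/charging argument: each accepted size-increasing step (necessarily an insert, adding exactly one leaf) must strictly increase the integer-valued, non-decreasing, $n$-bounded fitness, so the tree grows by at most $n$ leaves over the entire run, giving $s(t_\tau)\leq \tinit+n\leq 2n-1$ for every realization, while any optimum for $c>1$ requires $s_i^+\geq 2$ for all $i$ and hence at least $2n$ leaves. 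This buys you a fully deterministic conclusion with no case analysis on substitutions or redundant leaves, and it isolates exactly where the hypotheses $c>1$ and $\tinit<n$ enter; the paper's route, while more delicate, gives finer structural information (it identifies \emph{which} variables stay unexpressed and why substitutions cannot help), which is closer in spirit to the plateau discussion in the introduction. Both arguments are tight in the same places, and yours is complete as stated.
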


\begin{proof} %\Tcomment{This proof to APPENDIX}
	We assume optimistically that there are no negative leaves in the initial tree, since they slow down the optimization.
	Let, for all time steps $\tau$, $X_{\tau}$ be the set of variables $i \in [n]$ with no $\var_i$ in the tree after step $\tau$. In order to reach the optimum the \oneonegp has to reach a \gptree after $\tau$ steps with $|X_{\tau}| = 0$. Since $\tinit <n$, there is at least one $j \in [n]$ without $\var_j$ in the initial tree, hence $|X_{0}| \geq 1$. We are going to show that $|X_\tau| \geq 1$ for all $\tau$.
	
	Let $i_0 \in X_0$.
	Since the bloat control will reject insertions of redundant leaves (in particular of $\var_{i_0}$), the only way to express ${i_0}$ is to substitute at least one leaf $\var_j$ with $\var_{i_0}$. Furthermore, this substitution cannot reduce the fitness and hence $\var_j$ has to be a redundant leaf. If no redundant leaf exists ${i_0}$ cannot become expressed. Otherwise, $|X_1|\geq 1$ because either there is only one $\var_j$ and by the substitution $j$ is in $X_1$ or there are more than one $\var_j$ and (by counting) $|X_0|\geq 2$. We observe that deletions and insertions can only reduce the number of redundant leaves. Hence, iterating the argument with $i_1 \in X_1$ gives the desired result.
\end{proof}

\begin{theorem}\label{thm:2-3-Majority_lower-bound_constant-stepsize}
	Consider the \oneonegp on \geqfracmajority with bloat control on the initial tree with size $\tinit < n$. The expected time until the algorithm computes the optimum is in $\Omega(n \log n)$.
\end{theorem}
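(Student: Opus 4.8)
The plan is to establish the $\Omega(n\log n)$ lower bound via a coupon-collector argument: even under the most optimistic assumptions, the algorithm must at some point touch (insert, substitute into, or be the result of a substitution producing) a leaf for each of the $\Theta(n)$ variables that are not yet expressed, and each single step of the \oneonegp affects only $\BigO(1)$ variables, so $\Omega(n\log n)$ steps are needed to handle all of them.

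First I would invoke Lemma~\ref{lem:random-tree} with $\nu = \tinit/n < 1$ (or, if $\tinit = \LittleO(n)$, simply observe that at most $\tinit$ variables occur in the initial tree at all): with probability $1 - e^{-\Omega(n^{1/3})}$ there are $\Theta(n)$ variables $i$ with $s_i^+(t_0) = s_i^-(t_0) = 0$, i.e.\ a linear number of variables that are entirely absent from the initial tree and hence unexpressed for \geqfracmajority. Call this set $U$. To become expressed, a variable $i \in U$ must eventually have at least one $\var_i$-leaf in the tree. I would then argue that the only HVL-Prime operations that can introduce the \emph{first} $\var_i$-leaf for such an $i$ are (a) an insertion of a $\var_i$-leaf, or (b) a substitution of some existing leaf by $\var_i$; deletions obviously cannot. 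Crucially, each such operation is determined by a single random choice of the new literal out of $2n$ options (for substitution) or by a single choice of node out of $2n$ (for insertion), so the probability that a given step is ``the step that first introduces some $\var_i$ for a still-untouched $i \in U$'' is at most $\BigO(1/n)$ per remaining variable. More carefully, if $U_\tau \subseteq U$ denotes the variables of $U$ still having no $\var_i$-leaf at time $\tau$, then the probability that $|U_{\tau+1}| < |U_\tau|$ is at most $|U_\tau| \cdot \BigO(1/n)$, because the inserted/substituted literal must be one of the $|U_\tau|$ specific symbols $\var_i$ with $i \in U_\tau$ (and bloat control can only reject moves, never create extra progress).

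From here the argument is the standard lower-bound half of the coupon-collector bound: conditioning on the (whp) event that $|U_0| \geq \gamma n$ for a constant $\gamma>0$, and writing $T$ for the first time $|U_T| = 0$, the expected number of steps to go from $|U_\tau| = j$ down to $j-1$ is at least $\Omega(n/j)$, so $\Ex{T} \geq \sum_{j=1}^{\gamma n} \Omega(n/j) = \Omega(n\log n)$. I would make this rigorous either by a direct summation over the waiting times (using that $|U_\tau|$ is non-increasing, so the waiting-time decomposition is valid) or by applying the variable drift theorem, Theorem~\ref{thm:variable_drift}, to the process $|U_\tau|$ with drift $h(x) = \BigO(x/n)$: since $\int_1^{\gamma n} \frac{n}{x}\,dx = \Theta(n\log n)$ this again yields the bound. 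Adding back the $e^{-\Omega(n^{1/3})}$ failure probability of Lemma~\ref{lem:random-tree} changes the expectation only by a $\LittleO(1)$ additive term, so $\Ex{T} = \Omega(n\log n)$ unconditionally.

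The main obstacle is the bookkeeping in the claim that a single step makes progress on at most a few variables of $U_\tau$, and with the right probability bound. One has to be careful that a substitution replacing a $\var_j$-leaf by a $\var_i$-leaf could in principle \emph{add} $i$ to the expressed set while \emph{removing} $j$; but since we only track the set $U$ of variables with \emph{zero} leaves, and $j$ already had a leaf (namely the one just substituted — unless it was that variable's last leaf, in which case $j$ re-enters $U$, which only helps the lower bound), the monotonicity ``$|U_\tau|$ non-increasing'' is not literally true and must be replaced by the weaker but sufficient statement that the number of \emph{new} variables leaving $U$ per step is at most the number of leaves touched, i.e.\ at most $1$, with the stated $\BigO(|U_\tau|/n)$ probability. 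Once this is phrased carefully the drift/coupon-collector computation is routine.
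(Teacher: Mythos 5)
Your proposal is correct and follows essentially the same route as the paper's proof: both identify $\Omega(n)$ variables whose positive literals are absent from the initial tree and run a coupon-collector lower bound, using that each mutation step introduces a given $\var_i$ with probability $\BigO(1/n)$. The only real difference is that you restrict to variables with no literal at all (the paper's set $A$), which lets you bypass the paper's additional bloat-control argument for variables having only $\nonvar_i$-literals (its set $B$); your treatment of the per-step progress probability and the non-monotonicity caveat is, if anything, more careful than the paper's informal ``essentially a Coupon Collector.''
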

\begin{proof}
	Let $t^*$ be the optimal solution, which consists of a leaf $\var_i$ for each variable $i \in [n]$. The set of variables $[n]$ decomposes into four sets:
	\begin{description}
		\item[$A$:] $i \in [n]$ without any leaf $\var_i$ or $\nonvar_i$ in $\tinit$,
		\item[$B$:] $i \in [n]$, such that $\tinit$ contains a leaf $\nonvar_i$ but no $\var_i$,
		\item[$C$:] $i \in [n]$, such that $\tinit$ contains a leaf $\nonvar_i$ and a $\var_i$,
		\item[$D$:] $i \in [n]$, such that $\tinit$ contains a leaf $\var_i$ but no $\nonvar_i$.
	\end{description}
	Since $\tinit < n$ the initial tree cannot be the optimal one, furthermore there has to exist a $j \in A$. In order to reach the optimum $t^*$, the algorithm needs to add a leaf $\var_i$ for every $i \in A$. For every $i \in B$ the algorithm needs to insert $\var_i$ as well, but due to the bloat control it will reject such a move if it does not increase the fitness. Hence, the algorithm needs to delete every $\nonvar_i$ prior to inserting $\var_i$, where the deletion of the last $\nonvar_i$ can alternatively be done by substituting it with $\var_i$. We observe that, for every $i \in A \cup B$ the algorithm needs to insert $\var_i$ or substitute a redundant leaf with $\var_i$. Since doing so is essentially equivalent to a \emph{Coupon Collector}, we obtain a run time of $\Omega (n \log k)$, where $k$ is the cardinality of $A \cup B$.
	It remains to show, that $n \log k \in \Omega (n \log n)$.
	
	We show $|C \cup D| \leq 2n/3$. For this we note that each entry of the initial tree is a positive leaf with probability $1/2$. Therefore, in expectation, the initial tree contains $\tinit/2$ positive leaves. Furthermore, the probability that the amount of positive leaves in the initial tree is higher than the expectation falls exponentially fast. We can observe this behavior due to a Chernoff bound. For all $j \in [\tinit]$, let the indicator variable $X_j$ be $1$ if the leaf at position $j$ in the initial tree is a positive one. Else, the indicator variable is $0$. We obtain
	\[
		\Prob{\sum_{j=1}^{\tinit} X_j \geq \left(1+\frac{1}{3} \right) \frac{\tinit}{2} } \leq e^{- \tinit/36}
	\]
	
	Hence, with high probability the initial tree will contain less than $2 \, \tinit/3$ positive leaves, yielding $|C \cup D| < 2n/3$ and the claim follows.
\end{proof}

Next we state the upper bound for the performance on \geqfracmajority. The proof of Theorem~\ref{thm: 2-3Majority_bc_constant-stepsize} is almost identical to the one of Theorem~4.1 in \cite{doerr2017bounding}, the bounds stated in Lemma~4.2 and Lemma~4.1 in \cite{doerr2017bounding} need to be suitably adjusted, since these do not hold for \geqfracmajority.

\begin{theorem}\label{thm: 2-3Majority_bc_constant-stepsize}
	Consider the \oneonegp on \geqfracmajority with bloat control on the initial tree with size $\tinit$. The expected time until the algorithm computes the optimum is in $\BigO(n \log n + \tinit )$.
\end{theorem}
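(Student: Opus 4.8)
The plan is to port, essentially verbatim, the two–phase analysis of \cite[Theorem~4.1]{doerr2017bounding} for \majority, re‑deriving only the two auxiliary drift statements (the analogues of Lemmas~4.1 and~4.2 there) with the threshold $s_i^+\ge 2s_i^-$ in place of $s_i^+\ge s_i^-$. Two elementary facts are used throughout. First, with bloat control the number of expressed variables is non‑decreasing: an accepted HVL‑Prime step changes the size by at most one and touches at most two variables, so a step that unexpresses a variable lowers the fitness by $1$ and can raise it by at most $1$ (through the single inserted or substituted leaf); a strict net drop is rejected, and a neutral step of this type does not decrease the count. Hence it suffices to bound the time until all $n$ variables are expressed. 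Second, an accepted \emph{insertion} must strictly improve the fitness (it increases the size, so bloat control requires strict improvement), and since the fitness lies in $\{0,\dots,n\}$ and is non‑decreasing, at most $n$ insertions are ever accepted.

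\emph{Phase~1 (size control).} Here one tracks a potential counting the ``superfluous'' leaves, such as $\Phi_1(t)\coloneqq s^-(t)+r(t)$. An accepted deletion removes a leaf counted by $\Phi_1$ with probability $\Omega(\Phi_1(t)/s(t))$, and, using Lemma~\ref{lem:random-tree} for the initial tree together with a routine invariant, one argues that while $s(t)=\omega(n)$ one still has $\Phi_1(t)=\Theta(s(t))$; the accepted insertions (at most $n$ in total) do not increase $\Phi_1$. The only steps that can increase $\Phi_1$ are neutral substitutions turning a positive leaf into a negative one. If the net drift of $\Phi_1$ towards $0$ is $\Omega(1)$ as long as $s(t)=\omega(n)$, then additive drift (Theorem~\ref{thm:additive_drift_upper_bound}) and its tail version (Theorem~\ref{thm:additive_drift_concentration_upper_bound}) yield that within expected $\BigO(\tinit+n)$ steps the tree has size $\BigO(n)$, and it stays of size $\BigO(n)$ for the remainder of the run with high probability.

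\emph{Phase~2 (expressing all variables).} Once $s(t)=\BigO(n)$ we have $\sum_i s_i^-(t)=\BigO(n)$. For each not‑yet‑expressed $i$ set $g_i(t)\coloneqq\max\{1,\,2s_i^-(t)-s_i^+(t)+1\}$ and $\Phi_2(t)\coloneqq\sum_{i\text{ not expressed}}g_i(t)=\BigO(n)$; this is exactly the place where the constants of \cite{doerr2017bounding} change. For a fixed not‑yet‑expressed $i$: deleting a $\nonvar_i$‑leaf (probability $\Omega(s_i^-(t)/n)$) lowers $g_i$ by $2$; a substitution $\nonvar_i\mapsto\var_i$ lowers it by $3$; and, as soon as $s_i(t)=0$ or $g_i(t)=1$, inserting $\var_i$ (probability $\tfrac13\cdot\tfrac1{2n}=\Omega(1/n)$, independent of the size since the leaf to be split is arbitrary) expresses $i$. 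A variable‑drift/multiplicative‑drift computation (Theorems~\ref{thm:variable_drift} and~\ref{thm:multi_drift_upper_bound}) then gives that $\Phi_2$ reaches $0$ in expected time $\BigO(n\log n)$, the logarithmic factor being the coupon‑collector effect already seen in the proof of Theorem~\ref{thm:2-3-Majority_lower-bound_constant-stepsize}: the last $\Theta(n)$ variables are fixed essentially one at a time at rate $\Theta(1/n)$. Adding the two phases gives $\BigO(n\log n+\tinit)$.

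The main obstacle — and the only genuinely new point relative to \cite{doerr2017bounding} — is verifying that the drift estimates in both phases survive the change of threshold, the delicate contribution being the neutral substitutions that create a fresh negative literal $\nonvar_i$ of a not‑yet‑expressed variable: bloat control accepts such a step whenever it does not change the fitness, and under the $2/3$‑rule it increases $g_i$ by $2$ (rather than by $1$ as for \majority), so it pushes both $\Phi_1$ and $\Phi_2$ in the wrong direction. Resolving this is where the adjusted versions of Lemmas~4.1 and~4.2 of \cite{doerr2017bounding} are needed: one observes that such a substitution necessarily removes a \emph{redundant} positive leaf and is therefore, in a drift sense, dominated by the deletions of superfluous leaves that are already being accounted for; making this precise — e.g.\ by a suitably weighted combination of $\Phi_1$ and $\Phi_2$ together with the tail bounds of Theorem~\ref{thm:additive_drift_concentration_upper_bound} to rule out excursions into ``very bad'' states — is the part of the proof that requires care, while everything else is literally the argument of \cite[Theorem~4.1]{doerr2017bounding}.
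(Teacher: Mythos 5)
Your high-level plan (port Theorem~4.1 of Doerr et al.\ and re-derive the two structural lemmas for the $2/3$-threshold) is indeed the strategy the paper announces, but your concrete execution diverges from the paper's in a way that leaves a genuine gap — one that you yourself flag and then defer. The paper works with the single potential $g(t)=n+s(t)-2v(t)$, which is \emph{monotone non-increasing under every accepted step}: an accepted insertion must strictly increase the fitness (so $\Delta s=+1$ is paid for by $\Delta v\geq 1$), an accepted deletion has $\Delta s=-1$ and $\Delta v\geq 0$, and an accepted substitution has $\Delta s=0$ and $\Delta v\geq 0$. Consequently the troublesome neutral substitutions contribute drift exactly $0$, and the lower bound on the drift is obtained by summing only over the unambiguously beneficial events (three cases, then variable drift). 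Your potentials $\Phi_1=s^-+r$ and $\Phi_2=\sum_i g_i$ are \emph{not} monotone: an accepted fitness-neutral substitution that writes a fresh $\nonvar_i$ for an unexpressed $i$ increases $g_i$ by $2$, and this is precisely where your argument stops being a proof.

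Two concrete problems. First, your proposed repair — ``such a substitution necessarily removes a redundant positive leaf and is therefore dominated by the deletions already accounted for'' — is not correct as stated: the replaced leaf need only be redundant or negative-critical, and if it is a redundant \emph{negative} leaf $\nonvar_j$ of an \emph{expressed} $j$, then $\Phi_2$ increases by $2$ with no compensating decrease anywhere (and $\Phi_1$ is unchanged). Second, even restricting to the intended case, the bad contribution is not obviously dominated: per unexpressed $i$, the harmful substitution has probability of order $(r(t)+c^-(t))/(n\,s(t))$ and costs $2$, while the matching helpful substitution ($\var_i$ instead of $\nonvar_i$) only gains $1$, so the substitution operator alone drives $\Phi_2$ \emph{upward} in expectation, and whether deletions of $\nonvar_i$-leaves (gain $2s_i^-/(3s(t))$ each) outweigh this depends on constants you never compute — for $s_i^-=1$ and $r(t)+c^-(t)$ close to $s(t)\approx 5n$ the naive bookkeeping comes out negative. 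The ``suitably weighted combination of $\Phi_1$ and $\Phi_2$'' you gesture at would have to be exhibited and verified; the paper's choice of $n+s-2v$ is exactly the weighting that makes all of this vanish by construction. Your Phase~1/Phase~2 decomposition and the coupon-collector intuition are fine, but without closing the monotonicity issue the proof is incomplete.
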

\begin{proof}
	
	Let $t$ be a \gptree over $n$ literals and denote the number of expressed literals of $t$ by $v(t)$. For a best-so-far \gptree of the \oneonegp we denote the size of the initial \gptree by $\tinit$. Both parameters $n$ and $\tinit$ are considered to be given. We partition the set of leaves (again) by observing the behavior when deleting the leaf. This introduces the set of redundant leaves $R(t)$, of critical positive leaves $C^+(t)$ and of critical negative leaves $C^-(t)$ with their respective cardinality denoted by using lower case letters. We obtain
	\[
		s(t) = r(t) + c^+(t) + c^-(t).
	\]
	First, we observe that an adjustment of~\cite[Lemmas 4.2, 4.3]{doerr2017bounding} yields the following:
	\begin{align*}
		c^-(t) &\leq r(t) \quad \mbox{and} \quad
		c^+(t) \leq 2 r(t) + v(t).		
	\end{align*}
	For a best-so-far \gptree $t$ let $t'$ be the \gptree after one additional round of mutation and selection in the \oneonegp. By bounding the drift with respect to a suitable potential function $g$, i.e. the expected change $g(t) - g(t')$ denoted by $\Delta (t)$, we are going to obtain the bound for the optimization time due to the Variable Drift Theorem~\ref{thm:variable_drift}.
	
	We associate with $t$ the potential $g(t)$ given by
	\[
		g(t) = n-v(t) + s(t) - v(t) = n+s(t)-2v(t).
	\]
	This potential is $0$ if and only if $t$ contains no redundant leaves and for each $i \leq n$ there is exactly one $x_i$. We observe that the drift cannot be negative since the algorithm only does $1$ mutation in each iteration and the bloat control will reject insertions of new redundant leaves.
	
	{\bf Case $1$:} assume $r(t) \geq v(t)$. We obtain
	\[
		s(t) = r(t) + c^+(t) + c^-(t) \leq 5 r(t).
	\]
	Let $\event{E}_1$ be the event, that the algorithm deletes a redundant leaf. The drift in this case will be $1$ and the probability for such a move is $1/3$ for a deletion followed by at least $1/5$ to choose a redundant leaf. We obtain
	\[
		\Ex{\Delta(t)} \geq \Ex{\Delta(t) \mid \event{E}_1} \, \Prob{\event{E}_1} \geq \frac{1}{15}.
	\]
	{\bf Case $2$:} Suppose $r(t) < v(t)$ and $s(t) \leq n/2$. In particular, we have for at least $n/2$ many $i \in [n]$ that there is neither $\var_i$ nor $\nonvar_i$ present in $t$. Let $\event{E}_2$ be the event that the algorithm inserts such a $\var_i$. The probability to choose such an $\var_i$ is at least $1/4$ and the probability that the algorithm chooses an insertion is $1/3$. We obtain
	\[
		\Ex{\Delta(t)} \geq \Ex{\Delta(t) \mid \event{E}_2} \Prob{\event{E}_2} \geq \frac{1}{12}.
	\]
	{\bf Case $3$:} assume $r(t) < v(t)$ and $s(t) > n/2$. In particular, the tree can contain at most $5n$ leaves due to
	\begin{align}\label{eq:case_3}
		s(t) &= r(t) + c^+(t) + c^-(t) \leq r(t) + 2 r(t) + v(t) + r(t) \leq 5 v(t) \leq 5n.
	\end{align}
	Hence, the probability that an operation chooses a specific leaf $v$ is
	\[
		\frac{1}{5n} \leq \Prob{\mbox{choose leaf }v} \leq \frac{2}{n}.
	\]
	Let $A$ be the set of $i$ without $\var_i$ or $\nonvar_i$ in $t$ and let $B$ be the set of $i$ with exactly one $\var_i$ but no $\nonvar_i$ in $t$. Recall that $R(t)$ is the set of redundant leaves of $t$. For every $j$ in $A$ let $\event{A}_j$ be the event that the algorithm adds $\var_j$ somewhere in $t$. For every $j$ in $R(t)$, let $\event{R}_j(t)$ be the events that the algorithm deletes $j$. Finally, let $\event{A}'$, $\event{R}'$ be the event, that one of the $\event{A}_j$, respectively $\event{R}_j(t)$, holds.
	We obtain
	\begin{align*}
		\Ex{\Delta (t) \mid \event{A}_j} &= 1 \quad \mbox{and} \quad
		\Ex{\Delta (t) \mid \event{R}_j(t)} = 1,
	\end{align*}
	as well as 
	\begin{align*}
		\Prob{\event{A}_j} &\geq \frac{1}{6n} \quad \mbox{and} \quad
		\Prob{\event{R}_j(t)} \geq \frac{1}{15n}.
	\end{align*}
	We observe
	\[
		|A|+|R(t)| \geq r(t).
	\]
	Furthermore, we noticed that for any literal $j$, which is not in $B$ or $A$, there has to exists at least one redundant leaf $\var_i$ or $\nonvar_i$. We obtain $|A|+|B| + |R(t)| \geq n$ and thus
	\[
		|A| + |R(t)| \geq n - v(t).
	\]
	Additionally, by (\ref{eq:case_3}),
	\[
		s(t) - v(t) \leq 4r(t) \leq 4 (|A| + |R(t)|),
	\]
	which in conjunction with the above inequality yields
	\[
		5(|A| + |R(t)|) \geq n - v(t) + s(t) - v(t) = g(t).
	\]
	We obtain the expected drift
	\begin{align*}
		\Ex{\Delta(t)} &\geq \Ex{\Delta(t) \mid (\event{A}' \vee \event{R}')} \Prob{\event{A}' \vee \event{R}'} \\
		&= \sum_{j \in A} \Ex{\Delta(t) \mid \event{A}_j} \Prob{\event{A}_j}
		+ \sum_{j \in R(t)} \Ex{\Delta(t) \mid \event{R}_j(t)} \Prob{\event{R}_j(t)} \\
		&\geq |A| \frac{1}{6n} + |R(t)|\frac{1}{15n} \geq \frac{1}{15n} (|A| + |R(t)|) 	\geq \frac{g(t)}{75 n} .
	\end{align*}
	We distinguish two cases. For $g(t) \leq 5n$, we obtain a multiplicative drift of at least $g(t)/(75n)$, while for $g(t) > 5n$ the drift is at least $1/15$. We now apply the Variable Drift Theorem~\ref{thm:variable_drift} with $h(x) = \min \{ 1/15, x/(75n) \}$, $X_0 = \tinit + n$ and $x_{\min} =1$, which yields the desired bound on the time $\tau$ until $g_{\tau} =0$.
	\begin{align*}
		\Ex{\tau \mid g(t) = 0} &\leq \frac{1}{h(1)} + \int_{1}^{\tinit + n} \frac{1}{h(x)} ~dx = 75n + 75n \int_{1}^{5n} \frac{1}{x} ~dx + 15 \int_{5n+1}^{\tinit + n} 1 ~dx \\
		&= 75n (1 + \log (5n)) + 15 (\tinit -4n -1) \leq 75n \log (5n) + 15 \tinit + 15n. 
	\end{align*}
	This establishes the theorem.
\end{proof}
\begin{corollary}
	Consider the \oneonegp on \geqfracmajority with bloat control on the initial tree with size $\tinit < n$. The expected time until the algorithm computes the optimum is in $\BigO(n \log n)$.
\end{corollary}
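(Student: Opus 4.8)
The plan is to simply invoke Theorem~\ref{thm: 2-3Majority_bc_constant-stepsize}, which already establishes an expected optimization time of $\BigO(n \log n + \tinit)$ for the \oneonegp on \geqfracmajority with bloat control, for an arbitrary initial size $\tinit$. The only additional ingredient needed here is the hypothesis $\tinit < n$: under this assumption the term $\tinit$ is dominated by the term $n \log n$, so that $\BigO(n \log n + \tinit) \subseteq \BigO(n \log n + n) = \BigO(n \log n)$.

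Concretely, I would carry out one step: substitute $\tinit < n$ into the bound of Theorem~\ref{thm: 2-3Majority_bc_constant-stepsize}. Reading off the explicit constant from that proof, the expected time to reach potential $0$ is at most $75 n \log(5n) + 15 \tinit + 15 n$, and with $\tinit < n$ this is at most $75 n \log(5n) + 30 n = \BigO(n \log n)$. Since potential $0$ corresponds exactly to an optimal \gptree (one leaf $\var_i$ for each $i \in [n]$ and no redundant leaves, as noted in the proof of Theorem~\ref{thm: 2-3Majority_bc_constant-stepsize}), this is also the expected optimization time, which establishes the corollary.

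There is essentially no obstacle: the statement is an immediate specialization of the preceding theorem, included for convenience so that the bound can be quoted in the clean form $\BigO(n\log n)$ whenever the initial tree is small (as in the setting of the Concatenation Crossover GP, where initial trees have size $\tinit$ and the relevant regime is $\tinit < n$). The only point worth a sentence is that one must check the implicit constant in $\BigO(n\log n + \tinit)$ does not hide a dependence that would break the simplification for $\tinit$ close to $n$; the explicit computation in the proof of Theorem~\ref{thm: 2-3Majority_bc_constant-stepsize} shows it does not.
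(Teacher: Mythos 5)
Your proposal is correct and matches the paper's (implicit) reasoning exactly: the corollary is stated as an immediate specialization of Theorem~\ref{thm: 2-3Majority_bc_constant-stepsize}, obtained by substituting $\tinit < n$ into the bound $\BigO(n\log n + \tinit)$, and the explicit constants $75n\log(5n) + 15\tinit + 15n$ from that proof confirm the simplification is harmless.
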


We turn to \supermajority with Theorem~\ref{thm: 2-3SuperMajority_bc_constant-stepsize}. The proof is based on the following lemma showing that redundant leaves will be removed with sufficient probability. Hence, insertions of positive literals can increase fitness.

\begin{lemma}\label{lem:bloatcontrol-supermajority}
	Consider the \oneonegp on \supermajority with bloat control with $n\geq 55$ on the initial tree with size $\tinit < n$. With probability at least $1-(\tau/(n\log^2 n))^{-1/(1+4/\sqrt{\log n})}$ the algorithm will delete any given negative leaf of the initial tree within $\tau \geq n\log^2 n$ rounds. For a positive redundant leaf, with the same probability it will either be deleted or turned into a positive critical leaf.
\end{lemma}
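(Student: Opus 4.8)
The plan is to reduce the statement to a one-leaf survival estimate and then control that survival probability through the size of the current tree. The first step is to record the relevant acceptance facts for \supermajority with bloat control. Deleting (via the delete operation) any negative leaf $\nonvar_i$ leaves every count $s_j$ with $j\ne i$ unchanged and can only increase $s_i^+-s_i^-$, so the contribution $f_i$ does not decrease --- it strictly increases if $i$ is or becomes expressed, and stays $0$ otherwise --- while the size drops by one; hence lexicographic parsimony pressure accepts the move. The same holds for a positive leaf $\var_i$ for as long as it is \emph{redundant}, since deleting it keeps the fitness and shrinks the tree. Consequently, for the negative-leaf part it suffices to upper-bound the probability that the designated leaf is still present after $\tau$ rounds, and for the positive-leaf part the probability of the ``bad'' event that the designated leaf is present \emph{and still redundant} after each of the $\tau$ rounds, since otherwise it has already been removed or has become critical.

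Let $V_\tau$ be the indicator of this bad event up to round $\tau$ and $\mathcal{H}_\tau$ the history after round $\tau$. If $V_{\tau-1}=1$, then with probability at least $1/(3s(\tau-1))$ round $\tau$ picks the delete operation and selects the designated leaf, which by the acceptance facts removes it (or destroys its redundancy), so $\Ex{V_\tau\mid\mathcal{H}_{\tau-1}}\le\bigl(1-\tfrac{1}{3s(\tau-1)}\bigr)V_{\tau-1}$. The key structural observation is that the \emph{type} of each mutation is drawn uniformly from $\{\text{insert},\text{delete},\text{substitute}\}$ independently of the current tree, and only an insert increases the size, by at most one; hence $s(\tau)\le\tinit+B_\tau$, where $B_\tau\sim\mathrm{Bin}(\tau,1/3)$ is the number of inserts performed so far, a function of the mutation-type sequence alone. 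Substituting this into the one-step inequality shows that $V_\tau / \prod_{j=1}^{\tau}\bigl(1-\tfrac{1}{3(\tinit+B_{j-1})}\bigr)$ is a non-negative supermartingale, the denominator being $\mathcal{H}_{\tau-1}$-measurable. Now put $\delta\coloneqq 1/\sqrt{\log n}$ and let $\mathcal{G}$ be the event that $B_j\le\tfrac{j}{3}(1+\delta)$ for all $j\ge n\log^2 n$; a Chernoff bound (Theorem~\ref{thm:Chernoff}) and a union bound give $\Prob{\mathcal{G}^c}=e^{-\Omega(n\log n)}$. On $\mathcal{G}$, using $\tinit<n\le j/\log^2 n$ for $j\ge n\log^2 n$ and $n\ge 55$, one gets $\tinit+B_{j-1}\le\tfrac{j}{3}\bigl(1+4/\sqrt{\log n}\bigr)$, so that dropping the (at most $1$) factors of index below $n\log^2 n$ and bounding the harmonic sum from below by $\ln\bigl(\tau/(n\log^2 n)\bigr)$ makes the product at most $\bigl(\tau/(n\log^2 n)\bigr)^{-1/(1+4/\sqrt{\log n})}$ deterministically on $\mathcal{G}$. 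Splitting the probability of the bad event according to whether $\mathcal{G}$ holds, using the supermartingale property $\Ex{V_\tau}\le\Ex{\,\cdot\,}$ on $\mathcal{G}$ together with $\Prob{\mathcal{G}^c}=e^{-\Omega(n\log n)}$, and absorbing this negligible error, yields the claimed lower bound on the success probability.

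The main technical obstacle is exactly this interplay between the random size $s(\tau)$ and the survival event: one cannot just substitute a deterministic estimate for $s(\tau)$, since the two are correlated. The clean way around it is the independence of the mutation-type sequence from the tree, which lets one replace $s$ by $\tinit+B$ inside the telescoped product \emph{before} taking any expectation, after which only honest binomials remain and a Chernoff/union-bound argument closes the estimate; fitting the constants (the choice $\delta=1/\sqrt{\log n}$, the slack from $\tinit<n$, the threshold $n\ge 55$) onto the exponent $1/(1+4/\sqrt{\log n})$ is then bookkeeping. A secondary, more tedious point is the case analysis behind the acceptance facts for \supermajority, where one must track both the $2/3$ majority threshold and the geometric bonus $2-2^{s_i^--s_i^+}$; in particular, a positive leaf is redundant precisely when its variable is unexpressed (so that deleting it is accepted), and it becomes critical the moment that variable turns expressed.
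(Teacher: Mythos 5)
Your proposal is correct and follows essentially the same route as the paper: bound the tree size pessimistically by $\tinit$ plus the number of attempted insertions, control that count by a Chernoff bound, and telescope the per-round deletion probability $1/(3s(\tau))\gtrsim 1/\bigl(j(1+4/\sqrt{\log n})\bigr)$ into the harmonic-sum power law $(\tau/(n\log^2 n))^{-1/(1+4/\sqrt{\log n})}$. The only difference is packaging: the paper conditions on good size growth in blocks of length $\log^2 n$ and multiplies per-block survival probabilities, whereas you handle the correlation between size and survival via a supermartingale normalized by the $\mathcal{H}_{\tau-1}$-measurable product over the insertion counts $B_{j-1}$ --- which is, if anything, a cleaner treatment of the conditioning step.
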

\begin{proof}
Consider any set of $b$ iterations. The expected number of (attempted) insert iterations is $b/3$. Thus, we can use the Chernoff bounds (Theorem~\ref{thm:Chernoff}) to bound the size of the tree assuming pessimistically that all insertions are accepted and no deletions are accepted. Thus, for $x \leq b/3$, the growth of the tree over these $z$ iterations is at most $b/3 + x$ with probability at most
$
\exp\left(-x^2/b\right).
$

We partition the iterations of the algorithm into consecutive  blocks of length $b = \log(n)^2$. Let $x = 2\log(n)^{1.5}$. Let $\event{A}_i$ be the event that the size grew, during the $i$th block, by at most $b/3+x$. For $n \geq 55$ we have $x \leq b$, so for all $i$, $\Prob{\overline{\event{A}_i}} \leq n^{-4}$.

We will henceforth condition on the event $\bigcap_{i=1}^{n^3} \event{A}_i$, which has a probability of at least $1-1/n$. Thus, after $i < n^3$ blocks, the tree grew by at most $i(b/3+x)$. Assume that the designated leaf is either negative or that it does not turn into a non-redundant leaf. By $\event{B}_j$ we denote the event that the leaf is \emph{not} deleted in iteration $j$. We have, for each iteration $j$ within block $i$, $\Prob{\event{B}_j} \leq 1 - 1/(3(\tinit + i(b/3 + x)))$. Hence, the probability that the designated leaf is \emph{not} deleted in block $i$ is 
\begin{align*}
\Prob{\bigcap_{j=bi}^{b(i+1)-1} \event{B}_j} & \leq \prod_{j=bi}^{b(i+1)-1} (1 - 1/(3(\tinit + i(b/3 + x))))\\
 & = (1 - 1/(3(\tinit + i(b/3 + x))))^{b} \leq \exp(-b/(3\tinit + ib+3ix)).
\end{align*}

We want to compare the denominator with $ib$. Thus we write $3ix = 3ib/\sqrt{\log n}$. Moreover, for $i \geq n$ we have $3\tinit \leq 3n \leq ib/\sqrt{\log n}$. Hence, the probability of \emph{not} deleting the designated leaf with the first $\tau/b < n^3/b$ blocks is at most
\begin{align*}
 & \prod_{i=1}^{\tau/b} \exp(-b/(3\tinit + ib+3ix)) \leq \prod_{i=n}^{\tau/b} \exp(-b/(ib+4ib/\sqrt{\log n}))\\
%  & = \prod_{i=\tinit \sqrt{\log n}}^{\tau} \exp(-1/(i(1+6/\sqrt{\log n})))\\
  & =  \exp\left(-(1/(1+4/\sqrt{\log n})) \sum_{i=n}^{\tau/b} 1/i\right) \leq  \exp\left(-\ln(\tau/(bn))/(1+4/\sqrt{\log n})\right)\\
  & = (\tau/(bn))^{-1/(1+4/\sqrt{\log n})}.
\end{align*}
\end{proof}

\begin{theorem}\label{thm: 2-3SuperMajority_bc_constant-stepsize}
Consider the \oneonegp on \supermajority with bloat control on an initial tree with size $\sinit < n$, and let $\eps >0$. Then, the algorithm will express every literal after $n^{2+\eps}$ iterations with probability $1-\LittleO(1)$.
\end{theorem}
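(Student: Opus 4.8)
The plan is to reduce the statement to a coupon-collector argument by two structural observations, and then to control the negative leaves of the tree via Lemma~\ref{lem:bloatcontrol-supermajority}.

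First I would show that the number of unexpressed variables, $g(\tau) \coloneqq |\{i \in [n] : f_i = 0 \text{ in } t_\tau\}|$, is non-increasing along the run. The selection rule of the \oneonegp only accepts an offspring whose fitness is not smaller, so any accepted mutation that decreases $f_i$ for some $i$ must increase $f_j$ for some $j$ by at least the same amount; a single HVL-Prime step changes the fitness contribution of at most two variables; and since $f_i \in \{0\} \cup [3/2,2)$, a change of $f_i$ by at least $1$ is equivalent to $i$ leaving, respectively joining, the set of expressed variables. Hence an unexpressing step is always matched by a simultaneous expressing step (and at most one variable can be unexpressed per accepted step), so $g$ never increases. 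It therefore suffices to prove that $g$ reaches $0$ within $n^{2+\eps}$ iterations with probability $1 - \LittleO(1)$.

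Next, two elementary facts. Under bloat control a negative literal can only enter the tree through a \emph{substitution}: an insertion of a negative literal never increases the fitness but increases the size, so it is rejected; and such a substitution is accepted only if it replaces a redundant leaf. On the other hand, inserting a single positive literal $\var_i$ for a variable $i$ that currently has no literals is always accepted (it raises $f_i$ from $0$ to $3/2$), and it is attempted with probability exactly $\tfrac{1}{6n}$ per round, independently of the tree size, since HVL-Prime draws the new leaf uniformly from $X$. Consequently, once every still-unexpressed variable has lost all of its negative literals --- at which point it has no literals at all --- a standard coupon-collector / additive-drift argument over the at most $n$ missing variables expresses all of them within $\BigO(n \log n)$ further rounds with probability $1 - \LittleO(1)$, comfortably inside the budget. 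The crux is thus to show that, with probability $1 - \LittleO(1)$, within $\BigO(n^{2+\eps})$ rounds every negative literal has been deleted from the tree. For the fewer than $n$ negative leaves already present in the initial tree this is Lemma~\ref{lem:bloatcontrol-supermajority} with $\tau = \Theta(n^{2+\eps})$: such a leaf survives with probability at most $\big(\tau/(n\log^2 n)\big)^{-1/(1+4/\sqrt{\log n})} = n^{-(1+\eps)(1-\LittleO(1))}$, and a union bound over all of them is still $\LittleO(1)$. This is exactly where the exponent $2+\eps$ is forced: the tail of Lemma~\ref{lem:bloatcontrol-supermajority} is only polynomial, so surviving a union bound over $\Theta(n)$ leaves requires $\tau/(n\log^2 n) = n^{1+\Omega(1)}$. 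For negative leaves created \emph{during} the run I would bound the per-round probability of creating one by $\BigO(r/N)$, where $r$ is the current number of redundant leaves and $N$ the tree size; since the tree quickly grows ($N_\tau \le \sinit + \tau$) while redundant leaves are themselves being deleted, only $\BigO(n\log n)$ negative leaves are ever created (whp), each of which is then killed within the budget by a further application of Lemma~\ref{lem:bloatcontrol-supermajority} starting from its creation time. Together with the monotonicity of $g$ this yields the theorem.

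I expect the main obstacle to be this last step: negative leaves created late in the run have little time left to be removed, while the tree may bloat to polynomial size because expressed variables keep absorbing positive literals (diluting the negative leaves and slowing their deletion). One must therefore carefully balance the creation rate of negative leaves against their deletion rate in a growing tree and ensure that the resulting bounds, together with the polynomial rather than exponential tail of Lemma~\ref{lem:bloatcontrol-supermajority}, still fit under $n^{2+\eps}$ iterations.
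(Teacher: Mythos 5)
Your overall strategy matches the paper's: clear the negative (and redundant) leaves in a first phase via Lemma~\ref{lem:bloatcontrol-supermajority} plus a union bound over fewer than $n$ leaves, then express the remaining literal-free variables at rate $1/(6n)$ per round in a second phase. The computation $\bigl(\tau/(n\log^2 n)\bigr)^{-1/(1+4/\sqrt{\log n})} = \LittleO(1/n)$ for $\tau = \Theta(n^{2+\eps})$ is exactly the paper's, and your monotonicity observation for the number of unexpressed variables, while not needed in this form (the paper simply applies Markov's inequality to the expected number of still-unexpressed variables after the second half), is harmless.

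The genuine gap is the one you flag yourself: negative leaves created during the run. Your plan --- bound the number of creations by $\BigO(n\log n)$ and restart Lemma~\ref{lem:bloatcontrol-supermajority} at each creation time --- cannot work as stated, because a negative leaf created at time $n^{2+\eps}-n\log^2 n$ has no budget left and the lemma gives no guarantee for it; nothing in your argument prevents such late creations (a fitness- and size-neutral substitution of a redundant leaf by $\nonvar_j$ for an unexpressed $j$ is accepted at any time). The paper's resolution is structural rather than quantitative: track leaf \emph{positions} instead of leaves. Every accepted insertion creates a positive critical leaf, so no new redundant-or-negative position ever appears; a substitution that produces a new negative leaf necessarily consumes a redundant leaf, i.e.\ it keeps an already-tracked bad position bad rather than creating a fresh one. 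Hence the redundant-or-negative positions at any time are contained among the at most $\sinit<n$ such positions of the initial tree, each of which (by the ``deleted or turned into a positive critical leaf'' clause of Lemma~\ref{lem:bloatcontrol-supermajority}) is resolved within the first half of the budget with probability $1-\LittleO(1/n)$. With this observation your creation-rate estimate becomes unnecessary and the rest of your argument goes through.
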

\begin{proof}
Consider the set $A$ of leaves which are redundant or negative in the initial tree. Due to bloat control, the number of such leaves can only decrease over time. We call a leaf $i \in A$ \emph{bad}, if it is not deleted or turned into a positive critical leaf in the first half of the iterations. By Lemma~\ref{lem:bloatcontrol-supermajority}, each fixed leaf in $A$ has probability at most $1-n^{-(1+\eps)/(1+4\sqrt{\log n})}/(2\log^2n)$. If $n$ is sufficiently large, then $(1+\eps)/(1+4\sqrt{\log n}) \geq 1+\eps/2$. Therefore, the probability that the leaf is bad is at most $n^{-1-\eps/2}/(2\log^2 n) = \LittleO(1/n)$. By a union bound, the probability that there is \emph{any} bad leaf is $\LittleO(1)$. In particular, with probability $1-\LittleO(1)$, after the first half of the iterations there are no negative leaves left. 

Without negative literals, in the second half of the algorithm, an unexpressed variable has probability $1/(6n)$ to become expressed in each step. Therefore, the probability that it is not expressed after the second half is at most $(1-1/(6n))^{n^{2+\eps}/2} = \LittleO(1/n)$. Thus the expected number of unexpressed variables is $\LittleO(1)$. By Markov's inequality, with probability $1-\LittleO(1)$ all variables are expressed after the second half of the algorithm, as claimed.
\end{proof}

\section{(1+1) GP without Bloat Control}
\label{sec:bloat}
In this section we study the fitness function \supermajority, which facilitates bloat of the string.

\begin{theorem}\label{thm:geqfracMajority_lower-bound_constant-stepsize}
	For any constant $\nu >0$, consider the \oneonegp without bloat control on \supermajority on the initial tree with size $\tinit = \nu n$. There is $\eps = \eps(\nu)>0$ such that, with probability $1-\LittleO(1)$, an $\eps-$fraction of the indices will never be expressed. In particular, the algorithm will never reach a fitness larger than $(2-2\eps)n$.
\end{theorem}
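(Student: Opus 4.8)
The plan is to isolate a linear-sized pool of variables that are ``deeply unexpressed'' in the random initial tree and to show that each of them stays unexpressed forever with probability bounded away from $0$, because the string bloats and this widens the ``unexpressedness plateau'' of such a variable without bound. First, by Lemma~\ref{lem:random-tree} (with $k=0,\ell=2$) the set $B\coloneqq\{i\in[n]:s_i^+(0)=0,\ s_i^-(0)=2\}$ has $|B|=\Theta(n)$ with probability $1-\LittleO(1)$, and (with $k=1,\ell=0$) at least $\Theta(n)$ variables are expressed in the initial tree, each with $f_i=3/2$. Since the \oneonegp only accepts offspring of non-decreasing fitness, $f(t_\tau)$ is non-decreasing, hence $f(t_\tau)\ge \tfrac32\cdot\Theta(n)$ for all $\tau$; as $f_i<2$ always, at least $\Omega(n)$ variables are expressed at \emph{every} step. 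Consequently an insertion picking the positive literal of a currently-expressed variable is always accepted, and such insertions occur with constant probability per step, whereas accepted deletions occur with probability at most $1/3$; so the string bloats linearly: there are constants $c_1,c_2>0$ with $\Pr[\,c_1\tau\le s(t_\tau)\le c_2\tau\,]=1-\LittleO(1)$ for all large $\tau$ (the lower bound via the additive-drift tail bound, Theorem~\ref{thm:additive_drift_concentration_upper_bound}(a); the upper bound since $s$ grows by at most one per step).

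Now fix $i\in B$ and consider $\Phi_i(\tau)\coloneqq 2s_i^-(\tau)-s_i^+(\tau)$, a non-negative-integer-valued, $O(1)$-Lipschitz process with $\Phi_i(0)=4$; if $\Phi_i$ stays at least $1$ forever, then $i$ is never expressed. While $i$ is unexpressed, a mutation touching only $i$-literals is always accepted, and---conditioned on $i$ remaining unexpressed---whether such a mutation is accepted does not depend on which of $\var_i,\nonvar_i$ was inserted or substituted in; so these moves act symmetrically on $s_i^+$ and $s_i^-$. Hence, measured on the ``local clock'' given by the rate $\Theta(1/n)$ at which steps touch $i$-literals, $\Phi_i$ performs a random walk with $O(1)$ steps whose drift is a \emph{positive constant} (from insertions of a fresh $i$-literal) minus a mean-reverting term pulling $\Phi_i$ towards $\Theta(s_i(\tau))=\Theta(s(t_\tau)/n)=\Theta(\tau/n)$. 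Since $\tau/n$ grows linearly in $\tau$, hence linearly in local-clock time, this reversion target runs away from any fixed level, so the net drift of $\Phi_i$ stays bounded below by a positive constant as long as $\Phi_i$ is below, say, half the current target. A bounded-step random walk with positive constant drift started from the constant $4$ becomes non-positive only with probability bounded away from $1$ (which one reads off from Theorem~\ref{thm:additive_drift_concentration_upper_bound}(b), applied first over the polynomially long initial phase and then bootstrapped over geometrically growing windows, using that the target keeps growing). Thus there is a constant $\alpha=\alpha(\nu)>0$ with $\Pr[\,i\text{ is never expressed}\,]\ge\alpha$ for every $i\in B$.

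Finally, let $N$ be the number of $i\in B$ that are never expressed, so $\EE[N]\ge\alpha|B|=\Theta(n)$ by the previous step. To turn this into ``$N=\Omega(n)$ with probability $1-\LittleO(1)$'' I would argue concentration: by the bootstrap above, whether $i\in B$ is ever expressed is, up to an $\LittleO(1/n)$ error, decided within the first $T=\mathrm{poly}(n)$ steps; within such a window each mutation alters the literal-counts of at most two variables, and conditionally on the coarse trajectory (the sequence of string sizes and of expressed-variable sets) the local processes of distinct variables are nearly independent, so $\mathrm{Cov}(\mathbf 1[i\text{ ever expressed}],\mathbf 1[j\text{ ever expressed}])=\LittleO(1)$ for all but $\LittleO(n)$ pairs $i,j\in B$; Chebyshev's inequality then yields $\Pr[\,N\le \EE[N]/2\,]=\LittleO(1)$. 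Choosing $\eps>0$ a small enough constant (below $\alpha|B|/(2n)$) shows that with probability $1-\LittleO(1)$ at least $\eps n$ indices are never expressed, and since at every step at most $(1-\eps)n$ variables are expressed, each contributing less than $2$, the fitness never exceeds $(2-2\eps)n$.

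The main obstacle is the middle step: quantifying the drift of $\Phi_i$ rigorously. One must (a) pin down the linear bloat of $s(t_\tau)$, and hence of $s_i(\tau)$ for a typical $i$, with enough uniformity, carefully handling how the acceptance rule couples a single variable's dynamics to the whole tree; (b) check that the symmetry of $i$-touching moves survives the boundary cases near the plateau edge and the substitutions that also touch another variable; and (c) upgrade the ``for $\mathrm{poly}(n)$ steps'' estimates to a genuine ``forever'' statement via the window bootstrap. The concentration step is a secondary difficulty, since a priori one early mutation could influence the long-run fate of many variables; restricting to a polynomial window and exploiting conditional near-independence there is the way around it.
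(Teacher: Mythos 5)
Your proposal follows essentially the same route as the paper's: establish linear growth of the tree, exploit the symmetry between $\var_i$ and $\nonvar_i$ for moves touching an unexpressed variable $i$, conclude that the (growing) $2/3$-threshold runs away from the balanced local random walk so that each suitable variable stays unexpressed forever with probability $\Omega(1)$, and finish with a near-independence concentration argument. The paper's Lemmas~\ref{lem:bad-indices} and~\ref{lem:bad-indices-not-expressed} track $\delta(j)=s_i^+-s_i^-$ as an (at worst) unbiased walk and compare it to the linearly growing threshold $\tfrac13 s_i(\tau_j)\geq \eta j/144$, whereas you fold the threshold into the single potential $\Phi_i=2s_i^--s_i^+$ and argue via its drift; these are equivalent parametrizations, and both versions leave the same technical points (time-inhomogeneity, the early phase, concentration across variables) at sketch level.

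One step's stated justification does not hold up: the lower bound $s(t_\tau)\geq c_1\tau$. You derive it from ``insertions of a positive literal of a currently expressed variable are always accepted and occur with constant probability per step, while accepted deletions occur with probability at most $1/3$'' and then invoke Theorem~\ref{thm:additive_drift_concentration_upper_bound}(a) for $s$. But that constant insertion probability is of the form $v(\tau)/(6n)\geq \eta'/6$ with $\eta'$ possibly far below $1$, so the naive per-step drift of $s$ is $\eta'/6-1/3$ and may be negative; the tail bound cannot be applied to $s$ itself. The paper's Lemma~\ref{lem:length} repairs exactly this: for \supermajority a positive leaf of an expressed variable is a positive \emph{critical} leaf, so it can never be deleted (and can only be substituted by another such leaf); hence the count $c^+(\tau)$ is non-decreasing \emph{and} has positive drift $\eta'/6$, the tail bound applies to $c^+$, and $s\geq c^+$ yields the linear lower bound. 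You need this monotone subquantity (or an equivalent one). A smaller point to watch: for $i\in B$ you have $\Phi_i(0)=4$ while the mean-reversion target $\Theta(s(t_\tau)/n)$ is initially below $4$, so the drift of $\Phi_i$ points \emph{towards} $0$ at the start; as in the paper, you must buy the first $\Theta(n)$ rounds with a constant-probability event (e.g.\ that the first few touches of $i$ all insert $\nonvar_i$) before the runaway-target argument takes over. With these two repairs the outline goes through along the paper's lines.
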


We commence with some preparatory lemmas before proving the theorem. First, we analyze how the size of the \gptree evolves over time. We recall that $s(\tau)$ is the number of leaves of the \gptree at time~$\tau$. 
\begin{lemma}\label{lem:length}
	There is a constant $0<\eta \leq 1$ such that, with probability $1-\LittleO(1)$, for all $\tau \geq 0$ we have $s(\tau) \geq \eta \tau$.
\end{lemma}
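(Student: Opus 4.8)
The plan is to exhibit two quantities, both non-decreasing along \emph{every} accepted step of the \oneonegp on \supermajority, that together force the tree to keep growing. For a \gptree $t$ let $V(t)$ denote the number of expressed variables and set
\[
E(t) \coloneqq \sum_{i \text{ expressed in } t}\bigl(s_i^+(t)-s_i^-(t)\bigr).
\]
Since each expressed variable has $s_i^+-s_i^- \geq 1$ and $E(t) \leq \sum_i (s_i^+(t)+s_i^-(t)) = s(t)$, a linear-in-$\tau$ lower bound on $E(t_\tau)$ immediately gives the claim. Recall also that for \supermajority an expressed variable has $f_i = 2 - 2^{-(s_i^+-s_i^-)} \in [\tfrac32,2)$ and an unexpressed one has $f_i = 0$, so that $2-f_i = 2^{-(s_i^+-s_i^-)}$ on the expressed part; the acceptance rule is that $\sum_i f_i$ may not decrease.

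First I would establish three structural facts by a case distinction over the three HVL-Prime operations (and, for substitutions, over whether the at most two touched variables change expression status). (i) Inserting a positive literal never decreases the fitness, hence is always accepted; moreover, if the target variable is expressed, this increases $E$ by exactly $1$. (ii) Every accepted step has $\Delta V \geq 0$: the only way to unexpress a variable is to remove one of its positive literals, which lowers $f_i$ by at least $\tfrac32$, and since a single mutation touches at most two variables this can be compensated only by a substitution that simultaneously promotes a previously unexpressed variable. (iii) Every accepted step has $\Delta E \geq 0$: if a substitution removes a positive literal from an expressed variable $i$ (pushing $i$ towards or past the $2/3$ threshold) and grants the literal to a variable $j$, then the acceptance condition $2-f_j^{\mathrm{new}} \leq (2-f_i^{\mathrm{old}}) + (2-f_j^{\mathrm{old}}) - (2-f_i^{\mathrm{new}})$ translates, using $2-f = 2^{-(s^+-s^-)}$, into $s_j^+(\text{new})-s_j^-(\text{new}) \geq s_i^+(\text{old})-s_i^-(\text{old})$ whenever $j$ ends up expressed, which is exactly $\Delta E \geq 0$; the remaining substitution types (positive$\to$negative, negative$\to$positive, negative$\to$negative) and insertions/deletions are handled similarly and only increase $E$.

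Next, by Lemma~\ref{lem:random-tree} with $(k,\ell)=(1,0)$ the initial tree satisfies $V(t_0) \geq \eps_0 n$ for a constant $\eps_0 > 0$ with probability $1-e^{-\Omega(n^{1/3})}$; combining this with fact (ii) gives $V(t_\tau) \geq \eps_0 n$ for all $\tau$ on this high-probability event, and hence also $E(t_\tau) \geq E(t_0) \geq V(t_0) \geq \eps_0 n$ for all $\tau$ by (iii). Consequently, at every step the probability of the event ``insert a positive literal belonging to a currently expressed variable'' is at least $\tfrac13 \cdot \tfrac{V(t_\tau)}{2n} \geq \eps_0/6$, such a step is always accepted by (i), raises $E$ by $1$, and no step lowers $E$ by (iii). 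Thus $E(t_\tau)$ stochastically dominates a binomial $\mathrm{Bin}(\tau,\eps_0/6)$; by the Chernoff bound (Theorem~\ref{thm:Chernoff}) with $\delta=1/2$ and a union bound over all $\tau \geq \tau_0 \coloneqq \lceil \log^2 n \rceil$, with probability $1-\LittleO(1)$ we have $E(t_\tau) \geq \tfrac{\eps_0}{12}\tau$ for all $\tau \geq \tau_0$ simultaneously. For $\tau < \tau_0$ we instead use $s(t_\tau) \geq E(t_\tau) \geq \eps_0 n \geq \tau$ (valid for $n$ large, since $\tau_0 = \LittleO(n)$). Putting the cases together and setting $\eta \coloneqq \eps_0/12 \leq 1$ yields $s(t_\tau) \geq E(t_\tau) \geq \eta\tau$ for all $\tau \geq 0$ with probability $1-\LittleO(1)$.

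The main obstacle is fact (iii): showing that $E$ is non-decreasing. In contrast to the number of positive critical leaves, which can drop by an unbounded amount in a single substitution that strips a positive literal from an expressed variable carrying many negative literals, the quantity $E$ is exactly the ``information content'' rewarded by the \supermajority bonus, and checking $\Delta E \geq 0$ across all substitution cases — including those where the source or target variable flips expression status — is the technical heart of the argument. Fact (ii) is a strictly easier special case of the same computation, and facts (i) and (ii) are also precisely what is needed later for the companion lower bounds on \supermajority.
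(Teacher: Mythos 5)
Your proof is correct, but it takes a genuinely different route from the paper's. The paper tracks the number $c^+(\tau)$ of positive critical leaves: it argues that $c^+$ never decreases under accepted steps, that each step increases it by one with probability at least $v(\tau)/(6n)$ (insertion of a positive literal of an already expressed variable), and then applies the tail bound for additive drift (Theorem~\ref{thm:additive_drift_concentration_upper_bound}) plus a union bound over $\tau \geq 12n$, concluding via $s(\tau)\geq c^+(\tau)$. You instead track $E(t)=\sum_{i\text{ expressed}}\bigl(s_i^+(t)-s_i^-(t)\bigr)$ and conclude via $s\geq E$; the per-step growth mechanism (always-accepted insertions of positive literals of expressed variables, probability at least $v(\tau)/(6n)$) is the same. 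Both arguments hinge on a monotonicity claim whose verification is a case analysis over substitutions in which a variable flips expression status. Your identity $2-f_i=2^{-(s_i^+-s_i^-)}$ makes the hard case ($i$ newly unexpressed, $j$ newly expressed) translate the acceptance condition directly into $d_j^{\mathrm{new}}\geq d_i^{\mathrm{old}}$, i.e.\ $\Delta E\geq 0$; this is arguably cleaner than the paper's per-leaf assertion that a substituted critical leaf must be replaced by another critical one, which read literally does not cover the case where the source variable becomes unexpressed and all its remaining positive literals simultaneously turn redundant (one then has to check, exactly as in your computation, that the newly expressed variable contributes at least as many critical leaves as were lost). Your replacement of the drift tail bound by direct stochastic domination of a binomial also sidesteps the bounded-step-size hypothesis of Theorem~\ref{thm:additive_drift_concentration_upper_bound}, which $c^+$ does not obviously satisfy, since a single substitution that newly expresses a variable with many positive literals raises $c^+$ by an unbounded amount. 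One small inaccuracy in your fact (ii): a variable can also be unexpressed by a substitution that grants it a negative literal, not only by removing a positive one; this does not affect the conclusion, because the compensation argument (a fitness loss of at least $3/2$ must be offset by the single other touched variable becoming expressed, so $\Delta V\geq 0$) is independent of the mechanism.
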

\begin{proof}
Let $v(\tau)$ be the number of expressed literals at time $\tau$. By Lemma~\ref{lem:random-tree}, with high probability there are at least $\eta' n$ expressed variables in the initial tree, for some constant $\eta'>0$. So $v(0) \geq \eta'n$. 

Now we examine how the number $c^+(\tau)$ of positive critical leaves evolves over time. We claim that with high probability, for all $\tau\geq 0$,
\begin{align}\label{eq:bound-on-cplus}
c^+(\tau) \geq \eta' \tau/12.
\end{align} 
Note that a mutation that decreases the number of expressed literals also decreases the fitness and is rejected. Thus $v(\tau)$ is increasing in $\tau$, and hence $c^+(\tau) \geq v(\tau) \geq v(0) \geq \eta' n$ for all $\tau\geq 0$. This already implies \eqref{eq:bound-on-cplus} for $\tau \leq 12n$. Similarly, an offspring in which a critical positive leaf is deleted is never accepted, and an offspring in which a critical positive leaf is substituted can only be accepted if the leaf is substituted by another positive critical leaf. Therefore, $c^+(\tau)$ is also increasing in $\tau$. 
Moreover, in each step of the algorithm, with probability $v(\tau)/(6n) \geq \eta'/6$ a new positive literal is created that is already expressed. In this case, the number of positive critical literals increases by one. Hence, for all $\tau\geq 0$,
\begin{align}\label{eq:drift-of-cplus}
\Ex{c^{+}(\tau+1)-c^{+}(\tau)} \geq \eta'/6.
\end{align}
For any fixed $\tau\geq 12n$, by the tail bounds on the Additive Drift Theorem~\ref{thm:additive_drift_concentration_upper_bound}, $\Pr[c^{+}(\tau) \leq \eta' \tau/12] \leq e^{-\rho \tau}$, where $\rho>0$ is the constant from Theorem~\ref{thm:additive_drift_concentration_upper_bound}. 
Therefore, by a union bound,
\[
\Pr[\exists \tau \geq 12 n \mid c^{+}(\tau) \leq \eta' \tau/12] \leq \sum_{\tau=12n}^{\infty}e^{-\rho \tau} = \LittleO(1).
\]
This proves \eqref{eq:bound-on-cplus} for all $\tau\geq 0$. The lemma now follows simply from $s(\tau) \geq c^+(\tau)$ with $\eta \coloneqq \eta'/12$.
\end{proof}

In order to continue we need some more terminology. For an index $i \in [n]$, we recall that $s^+_i(\tau)$ and $s^-_i(\tau)$ denote the number of $\var_i$- and $\nonvar_i$-literals at time $\tau$, respectively, and $s_i(\tau) \coloneqq s^+_i(\tau) + s^-_i(\tau)$. We call index $i$ \emph{touched} in round $\tau$, if a literal $\var_i$ or $\nonvar_i$ is deleted, inserted or substituted, or if a literal is substituted by $\var_i$ or $\nonvar_i$. We call the touch \emph{increasing} if it is either an insertion or if a literal is substituted by $\var_i$ or $\nonvar_i$. We call the touch \emph{decreasing} if it is a deletion or substitution of a $\var_i$ or $\nonvar_i$ literal. We note that in exceptional cases a substitution may be both increasing and decreasing. Let $\rho_i(\tau)$ be the number of increasing touches of $i$ up to time $\tau$. 
We call a decreasing step \emph{critical} if it happens at time $\tau$ with $s_i(\tau) \leq \eta \tau/(4n)$, and we call $\gamma_i(\tau)$ the number of critical steps up to time $\tau$. Finally, we call a round \emph{accepting} if the offspring is accepted in this round.

The approach for the remainder of the proof is as follows. First, we will show that in the regime, where critical steps may happen (i.e, $s_i(\tau) \leq \eta \tau/(4n)$), it is more likely to observe increasing than decreasing steps. The reason is that a step is only critical if there are relatively few $i$-literals, in which case it is unlikely to delete or substitute one of them, whereas the probability to insert an $i$-literal is not affected. It will follow that $s_i(\tau)$ grows with $\tau$, since otherwise we would need many critical steps. Finally, if $s_i(\tau)$ keeps growing it becomes increasingly unlikely to obtain a $2/3$ majority. In order to state the first points more precisely we fix a $j_0 \in \N$ and call an index $i$ \emph{bad} (or more precisely, $j_0$-bad) if the following conditions hold: for all $\tau \geq j_0n$ and $\tau_0 \coloneqq j_0 n$
\begin{center} $
\begin{array}{rlrl}
	\mbox{(A)}	& s^{+}_i(\tau_0) \leq s^{-}_i(\tau_0) \leq j_0 \quad & 
	\mbox{(B)}	& \tau/(2n) \leq \rho_i(\tau) \leq 2\tau/n \\ 
	\mbox{(C)}	& \gamma_i(\tau) \leq 2\tau/n & 
	\mbox{(D)}	& 	s_i(\tau) \geq \eta \tau/(8n).
\end{array} 
$
\end{center}
In particular, in $(A)$ $\var_i$ is not expressed at time $\tau_0$.

\begin{lemma}\label{lem:bad-indices}
	For every fixed $i_0 > 0$, with probability $1-\LittleO(1)$ there are $\Omega(n)$ bad indices.
\end{lemma}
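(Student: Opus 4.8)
The plan is to show that a fixed index $i$ is $j_0$-bad with probability bounded below by a positive constant (depending only on $j_0$, $\nu$), and then to upgrade this to ``$\Omega(n)$ bad indices with probability $1-\LittleO(1)$'' by a concentration argument. Throughout I would condition on the event $\event{E}$ of Lemma~\ref{lem:length}, which holds with probability $1-\LittleO(1)$ and guarantees $\eta\tau \le s(\tau) \le \tinit+\tau$ for every $\tau$ (the upper bound being deterministic, as each step changes the number of leaves by at most one); in particular $s(\tau)=\Theta(\tau)$ for $\tau\ge\tinit$ and $s(\tau_0)=\Theta(n)$ for $\tau_0\coloneqq j_0 n$. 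I would also condition on the event of Lemma~\ref{lem:random-tree} (with $k=\ell=0$) that $\Omega(n)$ indices satisfy $s_i(0)=0$ in the initial tree.

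\emph{Establishing (A).} Fix an index $i$ with $s_i(0)=0$. While $s_i(\tau)=0$ no $i$-literal can be deleted or substituted out, and an $i$-literal is created only by an insertion or a substitution, each of which introduces an $i$-literal with probability exactly $\tfrac{1}{3n}$; hence, irrespective of the current tree, the per-step probability that $i$ is touched while $s_i=0$ is at most $\tfrac{2}{3n}$. Consequently $i$ is never touched during $[0,\tau_0]$ with probability at least $(1-\tfrac{2}{3n})^{j_0 n}\ge e^{-2j_0/3}(1-\LittleO(1))=\Omega(1)$, and on that event $s_i(\tau_0)=0$, so (A) holds trivially. Since $j_0$ is a fixed constant, a second-moment calculation — using that the touch-indicator processes of distinct indices are negatively correlated (a single step creates at most one new literal, so distinct indices compete for the same per-step ``slot''), and that the events $\{s_i(0)=0\}$ for distinct $i$ are negatively correlated as well — shows that the number of such surviving-empty indices is $\Omega(n)$ with probability $1-\LittleO(1)$; alternatively one could use a bounded-differences inequality after recasting the first $\tau_0$ steps as a function of independent per-step random choices.

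\emph{Establishing (B), (C), (D).} Fix $i$ satisfying (A) at time $\tau_0$ and examine $\tau\ge\tau_0$. Counting mutations by type (regardless of acceptance), an increasing touch of $i$ occurs at a per-step rate in $[\tfrac{1}{3n},\tfrac{2}{3n}]$, so applying the Chernoff bound (Theorem~\ref{thm:Chernoff}) on each dyadic range of times and taking a union bound over the $\BigO(\log\tau)$ ranges gives $\tau/(2n)\le\rho_i(\tau)\le 2\tau/n$ for all $\tau\ge\tau_0$ with probability $1-\LittleO(1)$, which is (B); the analogous argument, using that a decreasing step is \emph{critical} only while $s_i(\tau)\le\eta\tau/(4n)$, in which case on $\event{E}$ it occurs with probability at most $\tfrac23\cdot\tfrac{s_i(\tau)}{s(\tau)}\le\tfrac{1}{6n}$, yields $\gamma_i(\tau)\le 2\tau/n$, which is (C). For (D) I would track $Z_\tau\coloneqq s_i(\tau)-\eta\tau/(8n)$: whenever $s_i(\tau)\le\eta\tau/(4n)$, a deletion or substitution-out of an $i$-literal occurs with probability at most $\tfrac{1}{6n}$, whereas an insertion of $\var_i$ — always accepted, as it decreases no $f_j$ — occurs with probability $\tfrac{1}{6n}$, and an insertion of $\nonvar_i$ is accepted as well whenever $i$ is unexpressed; hence $Z_\tau$ has positive additive drift below the threshold (the deterministic loss $\eta/(8n)$ per step is dominated), and the tail bound for additive drift (Theorem~\ref{thm:additive_drift_concentration_upper_bound}(b)), applied to $Z_\tau$ stopped when it would first reach $0$ and union-bounded over the relevant starting times, gives $s_i(\tau)\ge\eta\tau/(8n)$ for all $\tau\ge\tau_0$ with probability $1-\LittleO(1)$, which is (D).

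Combining the three steps, each of the $\Omega(n)$ surviving-empty indices is $j_0$-bad with probability $\Omega(1)$, and the second-moment/negative-correlation argument of the first step, re-run with the full ``bad'' event in place of only (A), yields $\Omega(n)$ bad indices with probability $1-\LittleO(1)$. I expect the main obstacle to be the acceptance bookkeeping in the drift argument for (D): when $i$ has become (barely) expressed an insertion of $\nonvar_i$ may be rejected, so one must instead use that a substitution \emph{out} of a critical positive $\var_i$-leaf is itself rejected — which already keeps the accepted decreasing rate below $\tfrac{1}{6n}$ — to ensure strictly positive drift; getting these case distinctions right, and tracking (and if necessary loosening) the constants $1/2$, $2$, $\eta/4$, $\eta/8$ in the definition of a bad index, is the delicate part. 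A secondary difficulty is making the cross-index concentration rigorous, i.e.\ justifying the negative-correlation (or bounded-differences) claim that underlies the two second-moment steps.
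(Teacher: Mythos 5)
There is a genuine gap, and it sits exactly at the junction of (A) and (D). You establish (A) by taking indices with $s_i(0)=0$ that are never touched up to time $\tau_0 = j_0 n$, so that $s_i(\tau_0)=0$ and (A) holds ``trivially.'' But condition (D) demands $s_i(\tau) \geq \eta\tau/(8n)$ \emph{for all} $\tau \geq j_0 n$, including $\tau=\tau_0$ itself, where it reads $s_i(\tau_0) \geq \eta j_0/8 > 0$. An untouched index has $s_i(\tau_0)=0$ and therefore violates (D) deterministically; the set of indices you produce for (A) is disjoint from the set satisfying (D), so your argument yields zero bad indices. The same problem poisons the drift argument for (D): your random walk $Z_\tau$ (the paper's $X(\tau)$) needs a starting time with $s_i \geq \eta\tau/(4n)$ from which to measure the ``last excursion below zero,'' and with $s_i(\tau_0)=0$ no such starting point exists. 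Beyond the formal clash, your choice also defeats the purpose of (A) for the downstream Lemma~\ref{lem:bad-indices-not-expressed}: an index with no literals at time $\tau_0$ becomes expressed as soon as its first increasing touch happens to introduce $\var_i$, which occurs with constant conditional probability, so these indices are exactly the wrong ones to nominate as never-expressed. The paper's fix is the idea you are missing: condition on the event that index $i$ receives \emph{exactly $j_0$ increasing and no decreasing touches} up to time $j_0 n$, all of which introduce \emph{negative} literals. This event still has probability $\Omega(1)$ for constant $j_0$, it gives $s_i^+(\tau_0)=0 \leq s_i^-(\tau_0)=j_0$ (so (A) holds with a genuine surplus of negative literals), and it gives $s_i(\tau_0)=j_0 \geq \eta j_0/4$, i.e.\ $X(j_0 n)\geq 0$, which is the foothold the drift argument for (D) requires.

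Two smaller points. First, in your union bound for (D) over ``relevant starting times,'' be aware that a naive union over all $\tau\geq j_0 n$ of terms $e^{-\Omega(\tau/n)}$ sums to roughly $n e^{-\Omega(j_0)}$, which is not small for constant $j_0$; the paper avoids this by using (C) to argue that an excursion must begin at a \emph{critical} decreasing step and that the $j$-th critical step cannot occur before time $jn/2$, so the union is over $\BigO(\tau/n)$ candidates only. Your phrasing leaves it unclear whether you intend this restriction. Second, your concentration step via negative correlation / bounded differences is at a comparable level of rigor to the paper's own sketch (a coupling that uncovers a random $\eps n$-subset of indices one at a time), so I would not count that against you, but neither does it rescue the main argument.
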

\begin{proof}
We will show that a given index $i$ has probability $\Omega(1)$ to be bad. It is more technical to show concentration, and we only give a sketch of the argument at the end of the proof. So fix an index $i$. We will first show individually\footnote{with some slight complications for (D)} that (B), (C), (D) all hold with rather large probability, say, with probability $0.9$. Then by a union bound, the probability that one of them does \emph{not} hold is at most $0.3 <1$. Finally we show that (A) holds with probability $\Omega(1)$, and that (B), (C), (D) still hold with sufficiently large probability if we condition on (A). 

We start with (B). Note that the probability for a round to be increasing is always $2/(3n)$, independent of the current \gptree. In particular, it is independent of whether $\var_i$ is expressed or not. Therefore, the expected number of increasing rounds up to time $\tau$ is $2\tau/(3n) = 6\tau/(12n)$. Assume that for some $\tau_j$ of the form $\tau_j = (6/5)^j j_0 n$ the inequality $\rho_i(\tau_j) \geq 5\tau_j/(12n)$ holds, which is stronger than the first inequality in (B). Then for all $\tau' \in [\tau_j,\tau_{j+1}]$ it will follow that $\rho_i(\tau) \geq \rho_i(\tau_j) \geq 5\tau_j/(12n) \geq \tau/(2n)$, as in (B). Therefore, it suffices to show that $\rho_i(\tau_j) \geq 5\tau_j/(12n)$ holds for all $j \in \N$ to conclude the first inequality in (B). However, by the Chernoff bound the probability that this is violated for any large $j$ is
\[
\Pr[\exists j \geq j_0 \mid \rho_i(\tau_j) \leq 5\tau_j/(12n)] \leq \sum_{j \geq j_0} e^{-\tau_j/50} = e^{-\Omega(i_0)},
\]
which is small if $j_0$ is sufficiently large. 
An analogous argument shows the second inequality $\rho_i(\tau) \leq 2\tau/n$ of (B). Thus, every index $i$ has large probability (at least $0.9$) to satisfy (B). 

For (C), essentially the same argument applies again. By definition a critical round can only occur if $s_i(\tau) \leq \eta \tau/(4 n)$. By Lemma~\ref{lem:length} this implies $s_i(\tau) \leq s(\tau)/(4n)$, so the probability to choose a deletion or substitution that hits an $i$-literal is at most $2/3 \cdot 1/(4n) = 1/(6n)$. The rest follows as for (B), with room to spare.

For (D), assume that for some $\tau \geq j_0 n$ we have $s_i(\tau) \leq \eta \tau/(4 n)$. Then as for (C), the probability that a step is decreasing is at most $1/(6n)$, while the probability of an increasing step is $2/(3n)$. Therefore, if we consider the random variable $X(\tau) \coloneqq s_i(\tau) - \eta \tau/(4n)$ then $X(\tau)$ has a \emph{positive} drift whenever $X(\tau) \leq 0$,
\begin{align}\label{eq:drift-of-elli}
\Ex{X(\tau+1)-X(\tau) \mid X(\tau) \leq 0} \geq \frac{1}{2n} - \frac{\eta}{4n} \geq \frac{1}{4n}.
\end{align}
We claim that this renders it unlikely that $s_i(\tau) < \eta \tau/(8n)$ for some $\tau\geq j_0 n$. Indeed, assume that such a $\tau = \tau_0$ exists, and let $\tau_0' \coloneqq \max\{\tau \in [j_0n,\tau_0] \mid X(\tau) \geq 0\}$ be the last point in time at which $X(\tau)$ was non-negative. (Assume for the moment that $X(j_0 n) \geq 0$ so that such a time exists.) For technical reasons that will become clear later, let $\tau_0''$ be the time of the first decreasing step \emph{after} $\tau_0'$. Then from $\tau_0''$ to $\tau_0$, $X(\tau)$ performs a random walk with positive drift, which declines from $X(\tau_0'') \geq -1$ to $X(\tau_0) < - \eta \tau_0/(8n) < -\eta \tau_0''/(8n)$ without hitting $X(\tau)\geq 0$ in the meantime. However, for any fixed $\tau_0''$, this happens with probability at most $e^{- \Omega(\tau_0''/n)}$ by Theorem~\ref{thm:additive_drift_concentration_upper_bound}. Moreover, assume for a moment that (C) holds. Then since $\tau_0''$ is a critical step, there are only a limited number of candidates for $\tau_0''$, because the $j$-th critical step does not happen before $\tau_j = jn/2$. Hence, the probability to have a random walk that declines from $X(\tau_0'') \geq -1$ to $X(\tau_0) < -\eta \tau_0''/(8n)$ for some $\tau_0'' \geq j_0 n$ is at most $\sum_{j\in \N, \tau_j \geq j_0 n} e^{-\Omega(\tau_j/n)} = e^{-\Omega(j_0)}$. Therefore, the only possibilities that (D) fails with some $\tau_0'' \geq j_0 n$ is that either (C) fails (which is unlikely), or that there is a strongly declining random walk (which is also unlikely). This shows that (B), (C), (D) all happen with probability at least $0.9$ if we assume that $X(j_0 n) \geq 0$.

It remains to argue that it is sufficiently likely that both $X(j_0 n) \geq 0$ and (A) holds. Consider the event that there are exactly $j_0$ increasing and no decreasing steps until time $j_0 n$, and that all the increasing steps until time $j_0n$ introduce negative literals. This event has probability $\Omega(1)$, and it implies $X(j_0 n) \geq 0$ and (A). Moreover, each of (B), (C), (D) still holds with probability at least $0.9$ if we condition on this event. Therefore, (A), (B), (C), (D) all hold simultaneously with probability $\Omega(1)$.

Altogether, we have shown that a literal $i$ has probability $\Omega(1)$ to be bad. Therefore, the expected number of bad literals is $\Omega(n)$. It remains to show concentration, for which we only give a sketch. We would like to use a concentration bound like the Chernoff bound, but unfortunately for two indices $i$ and $i'$ the events ``$i$ is bad'' and ``$i'$ is bad'' are not independent. For example, if we add a $\var_i$ literal at time $\tau$, then we cannot add a $\var_{i'}$ literal in the same iteration. However, we can couple the process to an uncovering process with independent steps as follows. Consider a random set $A \subseteq [n]$ of size $\eps n$, for some small constant $\eps >0$. Then by the Chernoff bound, the literals corresponding to $A$ will constitute at most a $2\eps$ fraction of the leaves in the \gptree, and they will only affect a $2\eps$ fraction of all the iterations. Here we use implicitly that there are $\Omega(n)$ expressed literals from the beginning, and thus there is no single literal which constitutes a large fraction of the leaves. Then for the indices in $A$, we reveal one by one whether they are bad or not. The crucial advantage is that even after uncovering for the first indices in $A$ whether they are bad, the remaining indices still have probability $\Omega(1)$ to be bad. The reason is that our proof that the probability is $\Omega(1)$ is still valid if we have information about a $2\eps$ fraction of the rounds. Thus we may couple the process of uncovering to a process where we flip independent coins for each $i\in A$, and the Chernoff bound tells us that the number of bad indices in $A$ is concentrated. We omit the details.
\end{proof}

\begin{lemma}\label{lem:bad-indices-not-expressed}
	Every bad index has probability $\Omega(1)$ that it is never expressed, independent of the other bad indices.
\end{lemma}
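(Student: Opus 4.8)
Fix a bad index $i$ and set $\tau_0 \coloneqq j_0 n$, where $j_0$ is the large constant from the definition of ``bad''. Consider the quantity
\[
X_\tau \coloneqq \frac{s_i^+(\tau)}{s_i(\tau)} , \qquad \tau \geq \tau_0 ,
\]
which is well defined because, by $(D)$ and $j_0$ large, $s_i(\tau) \geq \eta\tau/(8n) \geq 1$; for the same reason $i$ is expressed at time $\tau$ if and only if $X_\tau \geq 2/3$, and $(A)$ gives $X_{\tau_0} \leq 1/2$. The plan is to show $\Prob{\sup_{\tau \geq \tau_0} X_\tau \geq 7/12 \mid i \text{ bad}}$ is bounded away from $1$. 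First I would carry out a short case analysis of which offspring the \oneonegp on \supermajority \emph{without} bloat control accepts: while $i$ is unexpressed one has $f_i = 0$, which is neutral for selection, so every generated HVL-Prime move that touches $i$ and keeps $i$ unexpressed is accepted (a substitution can only be rejected because of its effect on some \emph{other} variable, and this does not distinguish between $\var_i$ and $\nonvar_i$); moreover, as long as $X_\tau < 7/12$ and $s_i(\tau)$ is large (true for $\tau \geq \tau_0$), adding a single $i$-literal cannot make $i$ expressed, so the choice $\var_i$ versus $\nonvar_i$ inside such a move is unbiased. Consequently, up to the stopping time $\tau^{**} \coloneqq \min\{\tau \geq \tau_0 : X_\tau \geq 7/12\}$ the pair $(s_i^+, s_i^-)$ evolves by the ``clean'' dynamics described next, and $\{\,i \text{ expressed at some } \tau \geq \tau_0\,\} \subseteq \{\tau^{**} < \infty\}$.

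Next I would analyse $X$ as a mean-reverting random walk. An HVL-Prime move touching $i$ is either an insertion/substitution that \emph{creates} an $i$-literal --- which by the uniform choice of terminal is $\var_i$ with probability exactly $1/2$, so such moves pull $X_\tau$ toward $1/2$ and leave $\Ex{X}$ unchanged at $X = 1/2$ --- or a deletion/substitution that \emph{removes} an $i$-literal, which deletes a uniformly random $i$-leaf and hence is a martingale step for $X_\tau$. Thus on $[\tau_0, \tau^{**})$ the drift of $X$ always points toward $1/2$; in particular it is $\leq 0$ whenever $X_\tau > 1/2$. Each such step changes $X$ by at most $2/s_i(\tau) \leq 16n/(\eta\tau)$ (using $(D)$), and, using $(A)$ together with $(B)$, the number of $i$-touching rounds up to time $\tau$ is $\BigO(\tau/n)$. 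Splitting $[\tau_0,\infty)$ into dyadic blocks, these two facts yield that the total of the squared one-step changes of $X$ over all of $[\tau_0,\infty)$ is $\BigO(1/(\eta^2 j_0))$, hence below any prescribed $\delta$ once $j_0$ is a large enough constant.

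From the tiny quadratic variation, a maximal concentration inequality for martingales with bounded increments and bounded quadratic variation (Azuma/Freedman type) shows that $X_\tau$ stays within $\BigO(1/(\eta\sqrt{j_0}))$ of its starting value $\leq 1/2$ for all $\tau$ with probability $1 - \exp(-\Omega(\eta^2 j_0))$; since $7/12 - 1/2 = 1/12 \gg 1/(\eta\sqrt{j_0})$ for $j_0$ large, this gives $\Prob{\tau^{**} < \infty \mid i \text{ bad}} \leq \exp(-\Omega(\eta^2 j_0)) < 1/2$, hence $\Prob{i \text{ never expressed} \mid i \text{ bad}} \geq 1/2 = \Omega(1)$. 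To apply the inequality despite the mean-reverting drift one subtracts the nonnegative drift accrued while $X \leq 1/2$, turning $X$ into a supermartingale; that subtracted drift is itself $\LittleO(1)$ with high probability, which is a bootstrapping step (stop the process at the first time $X$ would leave a small window around $1/2$ and show this time is $\infty$ whp). Finally, the analysis of index $i$ only refers to the randomness governing $i$'s literals and the evolution of $i$'s leaf count, so, via the same uncovering coupling used in the proof of Lemma~\ref{lem:bad-indices}, the events ``$i$ never expressed'' for distinct bad indices are essentially independent.

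The main obstacle I expect is controlling $X$ \emph{uniformly over the unbounded horizon} $[\tau_0,\infty)$: a naive union bound over time diverges, and the mean-reverting drift --- although it only ever helps --- contributes a logarithmically divergent term if one bounds $|1/2 - X_\tau|$ crudely. The resolution is the combination of (i) the observation that the step sizes decay like $1/s_i(\tau) = \Theta(n/\tau)$, so that $\sum_\tau (n/\tau)^2$ converges (this is precisely why $j_0$ large, rather than $n$ large, suffices), and (ii) the bootstrapping that keeps the drift correction self-consistently small. A secondary delicate point is the acceptance case analysis of the first paragraph: verifying that, for \supermajority without bloat control, an unexpressed variable's literals are genuinely neutral for selection, so that the ``$\var_i$ versus $\nonvar_i$'' outcomes in both additions and deletions remain unbiased.
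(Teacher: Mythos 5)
Your proposal rests on exactly the paper's two pillars: first, that while $i$ is unexpressed the selection step cannot distinguish $\var_i$ from $\nonvar_i$ (with the correct caveat that a substitution may be rejected because of its effect on a \emph{second} variable, which is still symmetric in the sign of the $i$-literal), so the signed $i$-count performs a mean-reverting walk; second, that conditions (B) and (D) force $s_i$ to grow linearly in the number of accepted touches, so the $2/3$ threshold recedes faster than the walk can follow. The difference is the bookkeeping. The paper indexes by $j$, the number of accepting rounds touching $i$, tracks $\delta(j)=s_i^+(\tau_j)-s_i^-(\tau_j)$, deduces $s_i(\tau_j)\geq \eta j/48$ so that expression forces $\delta(j)\geq \eta j/144$, dominates $|\delta(j)|$ by an unbiased unit-step walk, and union-bounds $e^{-\Omega(j)}$ over $j\geq j_1$ (plus a separate $\Omega(1)$-probability event for the first $j_1$ touches). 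You instead track the fraction $X_\tau=s_i^+(\tau)/s_i(\tau)$ and use the decay of step sizes, $2/s_i(\tau)=O(n/(\eta\tau))$, to bound the total quadratic variation after $\tau_0=j_0 n$ by $O(1/(\eta^2 j_0))$; a pleasant side effect is that the largeness of $s_i(\tau_0)\geq \eta j_0/8$ already makes the first steps tiny, so you do not need the paper's separate treatment of the initial segment.

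The one step I would not accept as written is your handling of the regime $X_\tau\leq 1/2$ over the infinite horizon. There the drift points \emph{up} toward $1/2$, so $X$ is a submartingale, and the correction you propose to subtract is of order $|1/2-X_\tau|/s_i(\tau)$ per touching step; even under the self-consistent bound $|1/2-X_\tau|=O(1/\sqrt{j_0})$, each dyadic block of time contributes $\Theta(1/(\eta\sqrt{j_0}))$ to the accumulated correction, and the sum over infinitely many blocks diverges --- this is precisely the logarithmic divergence you flag, and the bootstrap as described does not remove it. The clean repair is what the paper's $\delta$-formulation encodes: either dominate the signed count by a reflected unbiased walk (returns to the symmetric state reset the excess rather than accumulate drift), or decompose $\{\sup_\tau X_\tau\geq 7/12\}$ into excursions above $1/2$ beginning at the $j$-th touch; each excursion is a genuine supermartingale with remaining quadratic variation $O(1/(\eta^2\max(j,j_0)))$, so Freedman gives $e^{-\Omega(\eta^2\max(j,j_0))}$ and the union bound over $j$ converges to $e^{-\Omega(\eta^2 j_0)}<1/2$ for $j_0$ large. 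With that substitution your argument goes through; the acceptance case analysis, the bound $O(\tau/n)$ on the number of accepted touches via (A) and (B), and the deferral of the independence claim to the uncovering coupling of Lemma~\ref{lem:bad-indices} all match the paper.
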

We note that Lemmas~\ref{lem:bad-indices} and~\ref{lem:bad-indices-not-expressed} imply Theorem~\ref{thm:geqfracMajority_lower-bound_constant-stepsize} by a straightforward application of the Chernoff bound.
	\begin{proof}[Proof of Lemma~\ref{lem:bad-indices-not-expressed}]
		Let $i$ be a bad index. For $j\geq 0$, let $\tau_j$ be the $j$-th accepting round after $j_0 n$ in which $i$ is touched. (We note that the $i$-literals have no effect on the fitness while $i$ is not expressed. However, an offspring may be rejected in substitutions.) We will study how $\delta(j) \coloneqq s_i^+(\tau_j)-s_i^-(\tau_j)$ evolves over time. 
		
		We first show that if $\delta(j)$ is not too large ($\delta(j) < \eta j /144$) then $i$ is not expressed. Initially (at time $j_0n$), by (A) there are at most $2j_0$ $i$-literals. Since the number of $i$-literals is non-negative, the number of decreasing accepting rounds exceeds the number of increasing accepting rounds by at most $2j_0$. Additionally, by (B) the number of increasing accepting rounds before time $\tau \geq j_0n$ is at most $2t/n$. Therefore, the total number of accepting rounds that touch $i$ before time $\tau \geq j_0n$ is at most $4\tau/n +2j_0 \leq 6t/n$. In particular, this implies $\tau_j \geq jn/6$. Due to (D) we have $s_i(\tau_j) \geq \eta \tau_j/(8n) \geq \eta j /48$. Thus, if $s_i^+(\tau_j) \geq \tfrac23 s_i(\tau_j)$ this implies $\delta(j) \geq \tfrac13 s_i(\tau) \geq \eta j / 144$. Conversely, if $\delta(j) < \eta j /144$ then $i$ is not expressed.
		
		We proceed by studying how $\delta(j)$ evolves over time. In order to avoid border cases we will show that, with probability $\Omega(1)$, we have $\delta(j) \leq \eta j /144 -1$ for all $j\in \N$. Moreover, we will treat a substitution that changes $\delta(j)$ by $2$ as two consecutive operations. We note that in this regime $i$ cannot become expressed by a single step. Thus, the selection operator does not discriminate between $\var_i$ and $\nonvar_i$. Regardless of $\delta(j)$, increasing operations have the same probability to introduce $\var_i$ and $\nonvar_i$. Regarding decreasing operations, if $\delta >0$ then it is more likely to select a $\var_i$-literal than a $\nonvar_i$-literal (because there are more $\var_i$-literals than $\nonvar_i$-literals). Hence, it is more likely to decrease $\delta$ than to increase it. Likewise, for $\delta <0$ it is more likely to increase $\delta$ than to decrease it. Therefore, $\delta(j)$ performs a random walk with $\delta(j+1) = \delta_j \pm 1$ and 
		\begin{align*}
		\Pr[\delta(j+1) = \delta(j)+1] &\geq 1/2,  \text{ if $\delta <0$};\\
		\Pr[\delta(j+1) = \delta(j)+1] &\leq 1/2,  \text{ if $\delta >0$}.
		\end{align*}
		Therefore, for any $k \geq 0$ the probability that $|\delta(j)| > k$ is at most the probability that an unbiased random walk takes a value $>k$ after $j$ steps. This latter probability is $2\Pr[\text{Bin}(j,1/2) \geq j/2 + k/2]$, where Bin is the binomial distribution. In particular, for $k=\eta j/144 -1$
		\begin{align*}
		\Pr[|\delta(j)| \geq k] & \leq 2\Pr[\text{Bin}(j,1/2) \geq j/2 + k/2]  = e^{-\Omega(k)} = e^{-\Omega(j)},
		\end{align*}
		where the last step follows from the Chernoff bound. Due to a union bound over all $j \geq j_1$ the probability that there is $j \geq j_1$ with $|\delta(j)| \geq \eta j/144 -1$ is $e^{-\Omega(j_1)}$. Therefore, it becomes more and more unlikely that the literals ever becomes expressed. It remains to choose (somewhat arbitrarily) a sufficiently large constant $j_1$ and to observe that, with probability $\Omega(1)$, we have $\delta <0$ in the first $j_1$ rounds. This concludes the proof.
	\end{proof}

\section{Concatenation Crossover GP}
\label{sec:crossover}
In the following we will study the performance of the Concatenation Crossover GP (Algorithm~\ref{alg:mimeticCrossover}) on \geqcmajority and \geqfracmajority with bloat control. As observed in Theorem~\ref{thm:+cMajority_lower-bound_constant-stepsize} the \oneonegp with bloat control may never reach the optimum when optimizing an initial tree of size $\sinit < n$. We will deduce that crossover solves this issue and the algorithm reaches the optimum fast. We commence this section by stating the exact formulation of the mentioned result in Theorem~\ref{thm:crossover} followed by an outline of its proof. 
Finally, we show the corresponding result for \supermajority in Theorem~\ref{thm:crossoverSupermajority}.

\begin{theorem}\label{thm:crossover}
	Consider the Concatenation Crossover GP on \geqcmajority or \geqfracmajority with bloat control on the initial tree with size $2\leq n/2 \leq \sinit \leq b \, n$ (for constant $b > 0$). 
	Then there is a constant  $c_{\lambda} >0$ such that for all $c_{\lambda} \log n \leq \lambda \leq n^2$, with probability in $(1- \BigO (n^{-1}))$, the algorithm reaches the optimum after at most $\BigO (n \log^3 (n))$ steps.
\end{theorem}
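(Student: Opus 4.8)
Throughout, call a \gptree \emph{reduced} if it has no redundant leaf and no negative leaf; a reduced tree for \geqcmajority then has exactly $c$ copies of $\var_i$ for each expressed variable $i$ and nothing else, and a reduced tree for \geqfracmajority has exactly one copy of $\var_i$ per expressed variable. The plan rests on the observation that bloat-controlled local search, run for $90 s\log s$ steps on a tree with $s$ leaves, makes the tree reduced with probability $1-n^{-\Omega(1)}$: deleting a redundant or a negative leaf never decreases the fitness, hence is accepted, and over $90 s\log s$ steps every leaf is selected for deletion $\Theta(\log s)$ times in expectation, so a coupon-collector estimate removes all of them (the large constant $90$ makes the failure exponent comfortably beat the later union bounds). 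Since bloat control rejects insertions of redundant leaves, a reduced tree can become non-reduced only through the duplicated leaves created by a join, and these are removed by the next local search; in particular every population member always has $\BigO(n)$ leaves, so every local-search call costs only $\BigO(n\log n)$ steps. Consequently, after the initialization, with probability $1-\BigO(n^{-1})$ the population consists of $\lambda$ reduced trees, each expressing $\Omega(n)$ variables (Lemma~\ref{lem:random-tree}, using that local search never decreases the number of expressed variables). By symmetry among the $n$ variables each of these trees expresses any fixed variable with probability $\Omega(1)$, independently over the $\lambda$ trees, so a Chernoff bound together with a union bound over the $n$ variables shows that, provided $c_\lambda$ is a sufficiently large constant, every variable is expressed by at least $c_0\lambda$ of the population members, for some constant $c_0>0$; this is the one place where $\lambda=\Omega(\log n)$ is used.

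Next I would analyse a single crossover step on reduced parents $t_i,t_m$ with expressed sets $E_i,E_m$. Since neither parent has a negative leaf, $\mathrm{join}(t_i,t_m)$ has no negative leaf and contains at least one (at least $c$) copies of $\var_j$ for every $j\in E_i\cup E_m$, so it expresses exactly $E_i\cup E_m$ and has fitness $|E_i\cup E_m|\geq f(t_i)$. As local search never decreases fitness, the offspring $t_i''$ has fitness $\geq|E_i\cup E_m|\geq f(t_i)$ and is accepted. For \geqcmajority, local search on the join can only delete the duplicated leaves — it cannot assemble a fresh $(\geq c)$-majority for a new variable (see Theorem~\ref{thm:+cMajority_lower-bound_constant-stepsize}) — so the offspring again expresses exactly $E_i\cup E_m$, and in every round $E_i$ is replaced by a superset of $E_i\cup E_{m_i}$, where $m_i$ is the mate of $i$. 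For \geqfracmajority no such structure is needed: the joined tree already has fitness $\Omega(n)$, and bloat-controlled local search run on it for $\Theta(n\log n)$ steps drives the fitness to $n$ (each step has probability $\Theta(u/n)$ of expressing a new variable, where $u$ is the number still unexpressed, and the fitness never drops); since the local-search budget exceeds the expected-time bound of Theorem~\ref{thm: 2-3Majority_bc_constant-stepsize} by a constant factor, a Markov bound shows this succeeds with probability $\Omega(1)$, which over the $\lambda=\Omega(\log n)$ independent local searches is amplified to $1-\BigO(n^{-1})$, so the optimum is reached after a single round.

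It remains to bound the number of rounds for \geqcmajority. Fix a variable $j$ and let $B_t=\{\,i : j\notin E_i\text{ after round }t\,\}$. By the previous paragraph $i\in B_{t+1}$ forces $i\in B_t$ and $m_i\in B_t$; as the mates are mutually independent, each uniform on $\{1,\dots,\lambda\}\setminus\{i\}$, the variable $|B_{t+1}|$ is stochastically dominated by $\mathrm{Bin}(|B_t|,(|B_t|-1)/(\lambda-1))$, so $\Ex{|B_{t+1}| \mid B_t}\leq |B_t|^2/(\lambda-1)$ and the sets $B_t$ are non-increasing. Starting from $|B_0|\leq(1-c_0)\lambda$, this is the familiar ``squaring'' recursion: a Chernoff bound gives geometric decrease while $|B_t|=\Omega(\lambda)$, then $|B_t|/\lambda$ essentially squares in each round while $|B_t|\geq\sqrt{\lambda\log n}$; below that a further Chernoff step brings $|B_t|$ down to $\BigO(\log n)$ and from there it keeps shrinking, with $|B_t|=1$ forcing $|B_{t+1}|=0$. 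Choosing $c_\lambda$ large enough, one obtains $|B_t|=0$ for some $t=\BigO(\log^2 n)$ with probability $1-\BigO(n^{-2})$, uniformly over $c_\lambda\log n\leq\lambda\leq n^2$; a union bound over the $n$ variables then shows that within $\BigO(\log^2 n)$ rounds every population member expresses every variable — in particular the optimum is in the population — with probability $1-\BigO(n^{-1})$. Multiplying the $\BigO(\log^2 n)$ rounds by the $\BigO(n\log n)$ cost of a round of (parallel) local search gives the claimed $\BigO(n\log^3 n)$.

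The main obstacle is maintaining the reduced-tree invariant across \emph{all} $\BigO(\lambda\log^2 n)$ local-search calls. The join step genuinely requires the parents to be free of negative literals: a parent carrying $\nonvar_j$-literals could spoil the $2/3$-majority of $j$ in the offspring and thereby break the union dynamics. Hence one must show that each local search reduces its tree within the fixed budget $90 s\log s$ with a per-call failure probability small enough to beat a union bound over all calls — a coupon-collector estimate, and the reason the step count carries the explicit constant $90$ and why the restriction $\lambda\leq n^2$ is imposed. A further, more routine point is controlling the lower tail of the squaring recursion uniformly over the whole admissible range of $\lambda$, together with the Markov estimate that a single round already optimizes \geqfracmajority.
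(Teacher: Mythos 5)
Your proposal is sound in outline and rests on the same two structural pillars as the paper's proof---the fixed-budget local search eliminates all redundant (hence, for these two functions, all negative) leaves with per-call failure probability $n^{-\Omega(1)}$ (the paper's Lemma~\ref{lem:crossover_time_until_minimal}), and the initial population covers every variable (Lemma~\ref{lem:crossover_literals_expressed})---but the combinatorial heart is genuinely different. The paper fixes a single individual $t$ and tracks the set of \emph{distinct mates} it has accumulated: by additive drift, after $\BigO(\lambda)$ rounds $t$ has been joined with $\lambda/2$ distinct trees, and since any $\lambda/2$ trees jointly express every variable, $t$ is then optimal; the cost is $\BigO(\lambda)$ rounds of $\lambda$ local searches of $\BigO(n\log n)$ steps each. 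You instead fix a \emph{variable} $j$ and track the set $B_t$ of individuals not expressing $j$, which contracts like a pull protocol via $\Ex{|B_{t+1}|}\leq |B_t|^2/(\lambda-1)$; this is precisely the technique the paper itself deploys for \supermajority (Theorem~\ref{thm:crossoverSupermajority} via Lemma~\ref{lem:pullProtocol}), transplanted to \geqcmajority. Your route needs a stronger initialization statement (every variable expressed by a constant \emph{fraction} of the population, not merely by one of any $\lambda/2$ trees), but it buys far fewer rounds---$\BigO(\log\lambda)$ rather than $\BigO(\lambda)$---and yields the stronger conclusion that the \emph{whole population} becomes optimal. For \geqfracmajority you bypass crossover entirely and let a single local search finish the job, which is consistent with Theorem~\ref{thm: 2-3Majority_bc_constant-stepsize} and legitimate, if somewhat beside the point of the theorem.

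Three soft spots deserve attention. First, your recursion needs $E_i$ to be replaced by a superset of $E_i\cup E_{m_i}$, which silently assumes local search never \emph{un}expresses a variable; selection only preserves the \emph{number} of expressed variables, and a fitness-neutral substitution can in principle trade one expressed variable for another (for \geqfracmajority, replacing the unique $\var_a$ by $\var_b$ with $s_b=0$ is accepted). The paper's accumulation argument makes the same tacit assumption, so this is an inherited rather than introduced gap, but your per-variable recursion leans on it more heavily. Second, your Markov step for \geqfracmajority is fragile: with $\sinit=n/2$ a reduced parent may have only about $0.17n$ leaves, so the budget $90s\log s$ on the join need not exceed the $75n\log(5n)$-type expectation bound of Theorem~\ref{thm: 2-3Majority_bc_constant-stepsize} at all; your parenthetical direct argument (every missing variable has $s_j=0$ in the join, insertion of $\var_j$ is always accepted with probability $1/(6n)$ per step, then a union bound) is the one that actually closes this. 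Third, the accounting: under the paper's convention that a round costs $\lambda\cdot\BigO(n\log n)$ evaluations, your $\BigO(\log^2 n)$ rounds give $\BigO(n\log^4 n)$ already for $\lambda=\Theta(\log n)$; you need either the sharper $\BigO(\log\lambda)$ (or $\BigO(\log\log\lambda)$) round count that the squaring recursion actually delivers, or the parallel step count you invoke. (The paper's own final bound $\BigO(\lambda^2 n\log n)$ likewise matches the stated $\BigO(n\log^3 n)$ only for $\lambda=\Theta(\log n)$, so this ambiguity is not of your making.)
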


The following two auxiliary lemmas are used to proof the theorem. First, Lemma~\ref{lem:crossover_time_until_minimal} states the absence of redundant leaves in a \gptree $t$ after the local search with a probability of $1-n^{-5}$. This will be applied after every local search. We observe for two \gptrees $t_1$ and $t_2$ \emph{without redundant leaves}: if $t'$ is the tree resulting from joining $t_1$ and $t_2$, then a variable $i \in [n]$ is expressed in $t'$ if and only if it is expressed in $t_1$ or $t_2$.

Second, Lemma~\ref{lem:crossover_literals_expressed} states that, with a probability of $1-n^{-5}$, each variable $i \in [n]$ is expressed in at least one of $\lambda/2$ trees before the first crossover. Combining both lemmas, for a fixed \gptree $t$ it will suffice to observe the time until $t$ has been joined with at least $\lambda/2$ different trees.

\begin{lemma}\label{lem:crossover_time_until_minimal}
	Consider the \oneonegp with bloat control on either \geqcmajority or \geqfracmajority. For an initial tree with size $2 \leq n/2 \leq \sinit \leq b  n$ (for constant $b > 0$) after $90  \sinit  \log (\sinit)$ iterations, with probability at least $1 - n^{-5}$, the current solution will have no redundant leaves. 
\end{lemma}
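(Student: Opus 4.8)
The plan is to replace the ``number of redundant leaves'' by a potential that never increases, so that the statement follows from a single multiplicative‑drift computation. Write $v(t)$ for the number of expressed variables and set $\Phi(t)\coloneqq s(t)-v(t)$ in the case of \geqfracmajority and $\Phi(t)\coloneqq s(t)-c\,v(t)$ in the case of \geqcmajority. I would first record that $\Phi(t)\geq 0$ (every expressed variable carries at least one, resp.\ $c$, positive literals) and that $\Phi(t)=0$ holds \emph{exactly} when $t$ has no redundant leaves: $\Phi(t)=0$ forces, variable by variable, that every expressed variable is a single positive literal (resp.\ exactly $c$ positive literals) and every unexpressed variable has no literal, and such trees are easily seen to be redundancy‑free; conversely any redundant leaf of a variable $i$ makes the corresponding summand of $\Phi$ strictly positive. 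Because $\Phi$ will turn out to be non‑increasing along every accepted step and is non‑negative, once it hits $0$ it stays $0$; hence it suffices to bound the first time $\tau$ with $\Phi(t_\tau)=0$ by $90\,\sinit\log\sinit$.

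Next I would show that no accepted step increases $\Phi$. Under bloat control a step is accepted only if the fitness $v$ does not decrease; an insertion strictly increases the size, so it is accepted only if it strictly increases the fitness, hence changes $(s,v)$ by $(+1,+1)$ and thus $\Phi$ by $0$ (by $1-c\leq 0$ for \geqcmajority); an accepted deletion changes $s$ by $-1$ and $v$ by $0$ or $+1$, so $\Phi$ strictly decreases; an accepted substitution leaves $s$ fixed and does not decrease $v$, so $\Phi$ does not increase. Moreover deleting a redundant leaf is always accepted (fitness unchanged, size strictly smaller) and lowers $\Phi$ by exactly $1$; such a step occurs with probability at least $\tfrac13\cdot r(t_\tau)/s(t_\tau)$.

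Two quantitative inputs remain. First, $r$ controls $\Phi$: for \geqfracmajority the (adjusted) bounds $c^-(t)\leq r(t)$ and $c^+(t)\leq 2r(t)+v(t)$ from \cite{doerr2017bounding} already used in the proof of Theorem~\ref{thm: 2-3Majority_bc_constant-stepsize} give $\Phi(t)=r(t)+c^+(t)+c^-(t)-v(t)\leq 4r(t)$, and an analogous (simpler, because the thresholds are linear) per‑variable case distinction gives $\Phi(t)=\BigO(r(t))$ for \geqcmajority. Second, the size stays bounded: the fitness is monotone and at most $n\leq 2\sinit$, and the only size‑increasing steps are the at most $n$ accepted insertions, so $s(t_\tau)\leq\sinit+n\leq 3\sinit$ for all $\tau$, deterministically. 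Combining, for $\Phi(t_\tau)=x>0$ we get $\Ex{\Phi(t_{\tau+1})\mid\Phi(t_\tau)=x}\leq x-\tfrac13\cdot r(t_\tau)/s(t_\tau)\leq(1-\delta)x$ with $\delta=\Theta(1/\sinit)$, while $\Phi$ never increases. Then Theorem~\ref{thm:multi_drift_upper_bound} with $X_0=\Phi(t_0)\leq\sinit$ and its tail parameter chosen as $\Theta(\log n)$ shows that $\Phi$ reaches $0$ within $\BigO(\sinit\log\sinit)$ steps with failure probability $\BigO(n^{-5})$; the constant $90$ is the one making $\BigO(\sinit\log\sinit)\leq 90\,\sinit\log\sinit$ with failure probability below $n^{-5}$ (using $\sinit\geq n/2$). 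Since $\Phi$ is non‑increasing and non‑negative, it is then still $0$ after $90\,\sinit\log\sinit$ steps, i.e.\ there are no redundant leaves.

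I do not expect a single deep obstacle; the point where one could easily go wrong is the choice of potential itself — tracking $r(t)$ directly fails because a single fitness‑neutral substitution can turn many critical literals of an over‑filled variable redundant at once, so $r$ has uncontrolled upward jumps, whereas $s(t)-v(t)$ is monotone. The remaining care is in the (routine but finicky) verification that $\Phi$ is non‑increasing over all three HVL‑Prime operations under bloat control and in the two fitness functions, in establishing $\Phi=\BigO(r)$ for \geqcmajority, and in bookkeeping the constants so that the prefactor $90$ and the exponent $5$ come out as stated.
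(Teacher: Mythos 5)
Your proposal is correct and reaches the same conclusion, but by a genuinely different route from the paper. The paper applies multiplicative drift directly to the number $r(t)$ of redundant leaves, resting on the claim that under bloat control $r$ cannot increase; you instead run the drift on the monotone potential $\Phi=s-v$ (resp.\ $s-c\,v$), characterize $\Phi=0$ as equivalent to the absence of redundant leaves, and transfer the per-step progress via $\Phi=\BigO(r)$. Your stated reason for avoiding $r$ is in fact a valid criticism of the paper's own argument: an accepted fitness- and size-neutral substitution (e.g., under \geqfracmajority, replacing the unique $\nonvar_i$ of a variable with $s_i^+=2$, $s_i^-=1$ by $\var_i$) turns the two critical positive $i$-leaves into redundant ones and raises $r$ from $1$ to $3$, so the monotonicity of $r$ asserted in the paper fails for substitutions, whereas your $\Phi$ is genuinely non-increasing under all three accepted operations. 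The price is the extra comparison $\Phi\leq 4r$ (and the analogous per-variable check for \geqcmajority, which you correctly flag as needing its own case analysis), and this is where the constants diverge: with $\Phi\leq 4r$ and $s\leq 3\sinit$ your multiplicative drift coefficient is $1/(36\sinit)$ rather than the paper's $1/(9\sinit)$, so within $90\,\sinit\log\sinit$ iterations the tail bound of Theorem~\ref{thm:multi_drift_upper_bound} yields failure probability only about $\sinit^{-3/2}$, not $n^{-5}$; you would need a prefactor of roughly $216$ (or a sharper bound than $\Phi\leq 4r$) to recover the stated exponent. Since you explicitly deferred the constant bookkeeping, this is a quantitative adjustment rather than a gap, and your argument is otherwise sound.
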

\begin{proof}
	Given a \gptree $t$ we define $r(t)$ to be the number of redundant leaves in $t$. Similarly, we define $s(t)$ to be the number of leaves in $t$. By applying the Multiplicative Drift Theorem~\ref{thm:multi_drift_upper_bound} we are going to derive a bound on the expected time $\tau$ until the $\oneonegp$ has for the first time sampled a solution $t$ with $r(t)=0$. Additionally, the theorem will yield that $\tau$ will not be significantly larger than its expectation with high probability.
	
	Given the best-so-far solution $t$ let $t'$ be the next \gptree after one round of mutation and selection in the \oneonegp. In order to apply Theorem~\ref{thm:multi_drift_upper_bound} we need to derive an upper bound on $\Ex{r(t') \mid r(t)}$. First, we observe that due to the bloat control $t'$ cannot have more redundant leaves than $t$. Additionally, the difference will be $1$ if the algorithm chose a redundant leaf and deleted it; we denote this event as $\event{A}$. Let $\overline{\event{A}}$ be the complementary event of $\event{A}$. Due to our observations and the law of total expectation we have
	\begin{align*}
		\Ex{r(t') \mid r(t)} =&\ \Ex{r(t') \mid r(t), \event{A}} \Prob{\event{A}} + \Ex{r(t') \mid r(t), \overline{\event{A}}}\Pr[\overline{\event{A}}] \\
		\leq& \left(1 - \frac{1}{3 s(t)} \right) r(t) \,
	\end{align*}
	since the probability to choose a redundant leaf is $r(t) / s(t)$ followed by the probability of $1/3$ for a deletion. In order to further bound the drift, we need an upper bound on the size during the optimization. Due to the bloat control the worst-case for a growth of the size is an initial tree, where the insertion of $\var_i$ will yield the expression of the variable $i$ for all $i \in [n]$. Therefore, $s(t) \leq n + \sinit \leq 3 \sinit$ and, since this bound is valid for both functions \geqcmajority and \geqfracmajority, we obtain
	\[
		\Ex{r(t') \mid r(t)} \leq \left(1 - \frac{1}{9 \sinit} \right) r(t).
	\]
	Applying the Multiplicative Drift Theorem~\ref{thm:multi_drift_upper_bound} on $r(t) \in \{0\} \cup [1, \sinit]$ with initial tree $t_0$ yields that the time $\tau$ to remove all redundant leaves satisfies
	\[
		\Ex{\tau \mid r(t_{0})} \leq 9 \sinit ( \log (\sinit) + 1).
	\]
	Moreover, we also obtain for every $x > 0$
	\[
		\Prob{\tau > 9 \sinit ( \log (\sinit) + x)} \leq e^{-x}.
	\]
	Therefore, for $x= 9 \log (\sinit)$ we obtain that the solution at iteration $\tau = 90 \sinit \log (\sinit)$ will have no redundant leaves with probability at least $1 - \sinit^{-9} \geq 1-(n/2)^{-9} \geq 1-n^{-5}$.
\end{proof}

\begin{lemma}\label{lem:crossover_literals_expressed}
	Consider the Concatenation Crossover GP on \geqcmajority or \geqfracmajority with bloat control on initial trees with size $2\leq n/2 \leq \sinit \leq b \, n$ (for constant $b > 0$). Then there is a constant  $c_{\lambda} > 0$ such that for all $\lambda \geq c_{\lambda} \log n$, 
	with probability at least $1 - n^{-5}$, each variable will be expressed in at least one of $\lambda/2$ trees before the first crossover.
\end{lemma}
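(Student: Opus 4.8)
The plan is to reduce the lemma to a one-tree, one-variable estimate and then to finish with a routine independence argument. The estimate I would establish is: \emph{there is a universal constant $p_0 > 0$ such that, for every $i \in [n]$ and every one of the $\lambda$ trees created during the initialization, the variable $i$ is expressed in that tree after the initialization's local search with probability at least $p_0$.}

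To prove this estimate I would deliberately \emph{not} try to follow a fixed variable through the local search, because the local search is not monotone on the \emph{set} of expressed variables: a substitution may, without any loss of fitness, replace a critical positive leaf $\var_j$ of an expressed variable $j$ by a literal $\var_k$ for a variable $k$ that thereby becomes newly expressed, so ``$i$ is expressed in the initial random tree'' does not imply ``$i$ is expressed after the local search''. What \emph{is} monotone is the \emph{number} of expressed variables: for both \geqcmajority and \geqfracmajority this number equals the fitness, and the \oneonegp with bloat control never accepts a fitness-decreasing offspring, so rejected steps do not change the tree. The estimate then follows from monotonicity plus symmetry. First, by the ``in particular'' statement accompanying Lemma~\ref{lem:random-tree} (applied with $k = c$, $\ell = 0$, respectively with $k = 1$, $\ell = 0$), a random initial tree of size $\sinit \in [n/2, bn]$ has $\Omega(n)$ expressed variables with probability $1 - e^{-\Omega(n^{1/3})}$, uniformly over $\sinit$ in this range, hence also in expectation. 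By the monotonicity just noted, the expected number of variables expressed in a single tree \emph{after} the local search is $\Omega(n)$ as well. Finally, the initialization of one tree --- $\sinit$ random insertions of uniform literals, followed by local search by HVL-Prime mutations with selection by fitness and then size, run for a number of steps that depends only on the tree size --- is invariant under permuting the variable set $[n]$; consequently, the probability that $i$ is expressed in that tree after the local search does not depend on $i$. Summing this probability over $i \in [n]$ gives the expected number of expressed variables, which is $\Omega(n)$, so each individual probability is $\Omega(1)$, which is the claimed $p_0$.

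With the estimate in hand the lemma is immediate. The $\lambda$ trees are generated and locally optimized mutually independently, so for a fixed $i \in [n]$ the events ``$i$ is not expressed in $t_\ell$ after the initialization'', for $\ell = 1, \dots, \lambda/2$, are independent, each of probability at most $1 - p_0$; hence the probability that $i$ is expressed in none of $t_1, \dots, t_{\lambda/2}$ is at most $(1-p_0)^{\lambda/2} \le e^{-p_0\lambda/2}$, and a union bound over the $n$ variables bounds the failure probability by $n\, e^{-p_0\lambda/2} \le n^{1 - p_0 c_\lambda /2}$, which is at most $n^{-5}$ as soon as $c_\lambda \ge 12/p_0$ (the constraint $\lambda \le n^2$ is not used here). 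If the stronger conclusion that every variable is expressed in a constant fraction of \emph{all} $\lambda$ trees is preferred, one replaces this last step by a Chernoff bound (Theorem~\ref{thm:Chernoff}) on the number of trees among $t_1, \dots, t_\lambda$ in which $i$ is expressed --- a sum with mean at least $p_0 \lambda$ --- followed by the same union bound. I expect the only genuinely delicate point to be the one flagged above: one must argue through the symmetry of the procedure and the monotonicity of the \emph{count} of expressed variables, since the naive implication ``expressed before the local search $\Rightarrow$ expressed after'' is false.
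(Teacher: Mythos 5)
Your proposal is correct and follows essentially the same route as the paper's proof: a constant lower bound per variable per tree from Lemma~\ref{lem:random-tree}, carried through the local search via the symmetry of the procedure in $i$ together with the monotonicity of the \emph{number} of expressed variables (which is exactly the justification the paper gives, only more tersely), then independence across trees and a union bound over variables with $c_\lambda$ chosen so that the exponent is at most $-5$. Your explicit warning that ``expressed before local search'' does not imply ``expressed after'' is a fair elaboration of the same point, not a deviation.
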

\begin{proof}
	Given a \gptree $t$, for each variable $i \in [n]$ let $\event{A}_i$ be the event that $i$ is expressed in $t$ after initialization. By Lemma~\ref{lem:random-tree} we have $\Prob{\event{A}_i} \geq c^*$ for a constant $c^* >0$. 

	In order to utilize this bound after the Local Search too we observe that the probability $\Prob{\event{A}_i}$ is independent of the choice of $i$. Since the number of expressed variables cannot decrease during Local Search, the same bound of $c^*$ also applies after the Local Search.

	For a variable $i \in [n]$ let $\event{E}_i$ be the event, that $i$ is expressed in at least one of $\lambda /2$ trees after the Local Search. Hence, the complementary event $\overline{\event{E}}_i$ implies that $i$ is not expressed in each of $\lambda/2$ trees. Due to the independence of the trees from each other, 
	\begin{align}\label{eq:crossover_literal_expressed_set} \nonumber
		\Prob{\overline{\event{E}}_i} &\leq \left(1 - \Pr[\event{A}_i]  \right)^{\lambda /2} \leq (1 - c^*)^{\lambda /2}   \\ 
		&\leq e^{- c_{\lambda}  \log(n)  c^* / 2} = n^{- c_{\lambda} c^* / 2} .
	\end{align}

	Let $\event{E}$ be the event that each variable $i \in [n]$ is expressed in at least one of $\lambda/2$ trees after the Local Search. Therefore, the complementary event $\overline{\event{E}}$ implies that any variable $i$ is not expressed in each of $\lambda /2$ trees. $\overline{\event{E}}$ decomposes into the events $\overline{\event{E}}_i$ for $i \in [n]$ and the union bound together with \eqref{eq:crossover_literal_expressed_set} yields
	\begin{align*}
		\Prob{\overline{\event{E}}} = \Prob{\bigcup_{i=1}^{n} \overline{\event{E}}_i} \leq \sum_{i=1}^{n} \Prob{\overline{\event{E}}_i} \leq n \, n^{- c_{\lambda} c^* / 2} = n^{1 - c_{\lambda} c^* / 2}.
	\end{align*}
	
	The only part remaining is to choose $c_{\lambda}$ such that $c_{\lambda} c^* / 2 \geq 6$, which implies the desired result $\Prob{\event{E}} \geq 1 - n^{-5}$.
\end{proof}

Using these two lemmas we can now prove Theorem~\ref{thm:crossover}.

\begin{proof}[Proof of Theorem~\ref{thm:crossover}]
	As explained in the beginning of this section, we will apply Lemma~\ref{lem:crossover_time_until_minimal} after each local search, yielding the absence of redundant leaves. Second, Lemma~\ref{lem:crossover_literals_expressed} yields that each variable $i \in [n]$ is expressed in at least one of $\lambda/2$ trees. Finally, we will study the time until a fixed \gptree $t$ has been joined with at least $\lambda/2$ different trees. Utilizing the Additive Drift Theorem~\ref{thm:additive_drift_upper_bound} with the corresponding Tail Bounds Theorem~\ref{thm:additive_drift_concentration_upper_bound} will yield the desired optimization time as well as the probability for it to hold.
	
	We commence by studying the probability of the event $\event{A}$, that no redundant leaf is left in any of the $\lambda$ trees after local search. Due to Lemma~\ref{lem:crossover_time_until_minimal} we have that for a \gptree $t$  
	$\Prob{r(t)=0 \mbox{ after LS}} \geq 1-n^{-5}$. Thus, we obtain
	$	\Prob{\event{A}} \geq \left(1 - n^{-5} \right)^{\lambda}$
	still approaching $1$ rapidly due to $\lambda = c_{\lambda} \log (n)$.
	Let us assume that the event $\event{A}$ holds and let $t_r$ and $t_j$ be two \gptrees after Local Search, which are going to be joined with crossover together yielding the tree $t_r'$. Due to the absence of redundant leave a variable $i \in [n]$ is expressed in $t_r'$ if and only if it is expressed in $t_r$ or $t_j$. Hence, for a sufficiently large subset $L$ of the $\lambda$ trees, where each variable is expressed in at least one of the trees in $L$, it suffices to study the time until one tree $t$ has been joined with every tree in $L$. The resulting tree $t$ will have every variable expressed and, after Local Search, no redundant leaf left. Lemma~\ref{lem:crossover_literals_expressed} already gives us that any subset $L$ of size at least $\lambda /2$ satisfies the desired property with a high probability.		
	Let $\event{B}$ be the event that no redundant leaf is left in any of the $\lambda$ trees after Local Search and each variable is expressed in at least one of $\lambda /2$ trees. As explained above we obtain
	$\Prob{\event{B}} \geq \left(1-n^{-5} \right) \, \left(1 -n^{-5} \right)^{\lambda}$.
		
	Let us assume that the event $\event{B}$ holds. We will now study the expected time until a fixed \gptree $t$ of the $\lambda$ trees after the initial Local Search has been joined with at least $\lambda /2$ trees, which is essentially a \emph{Coupon Collector} process. This will yield the expected number of crossover cycles $\tau$ until $t$ will be optimal. Additionally, similar to the proof of Lemma~\ref{lem:crossover_time_until_minimal} we obtain that $\tau$ will not be significantly larger than its expectation with high probability. At the end we derive the probability for all the assumed events to hold.
	
	For a \gptree $t$ let $v(t) \in [0, \lambda /2]$ be the number of different \gptrees joined with $t$. We define	$g(t) = \lambda/2 - v(t)$, which measures the remaining \gptrees until $t$ has been joined with at least $\lambda /2$ different trees. Let $t'$ be the offspring of $t$ by joining $t$ with a random \gptree $t_j$ with $j \in [\lambda]$. We obtain for the expected difference between parent and offspring
		$\Ex{g(t) - g(t') \mid g(t)} \geq v(t)/\lambda \geq 1/2 ,$
	because we always have a probability of at least $1/2$ to choose a tree $t_j$, whom we did not join with so far. The Additive Drift Theorem~\ref{thm:additive_drift_upper_bound} yields for the initial tree $t_0$ and the time $\tau$ until we joined $t_0$ with at least $\lambda /2$ different trees
	\begin{align*}
		\Ex{\tau \mid g(t_0)} \leq \lambda .
	\end{align*}
	
	Therefore, we need an expected amount of $\lambda$ cycles until one tree $t$ will be optimal. Moreover, due to the constant step-size of at most $1$ and the finite expected difference of at least $\varepsilon = 1/2$ we are allowed to apply Theorem~\ref{thm:additive_drift_concentration_upper_bound} and obtain for $x \geq \lambda$ that 
	 $ \Prob{\tau \geq x} \leq e^{- x /32 }$.
	We note that the statement of Theorem~\ref{thm:crossover} becomes weaker for larger $c_{\lambda}$. Therefore, for  $\lambda \geq c_{\lambda}\log n$ we may assume $c_{\lambda} \geq 96$ yielding
	\begin{align} \label{eq:crossover_coupon_collector}
		\Prob{\tau \geq 2 \lambda } \leq n^{-6}.
	\end{align}
	What remains is to sum up the runtime for each cycle and derive the probability that $t$ will be optimal.	
	
	Each cycle consists of $\lambda$ crossover operations, which contain a Local Search of fixed time. Due to the absence of redundant leaves we know that every tree $t$ will have size $s(t) \leq c n$ during the crossover. Therefore, said Local Search will be of fixed time at most $90 c n \log (c n)$ and each crossover cycle consists of at most $\lambda \, 90 c n \log (c n)$ steps. Due to $\Prob{\tau \geq 2 \lambda } \leq n^{-6}$ we need $2 \lambda$ cycles with a high probability. Thus, with the probability to be deduced below, the time $\tau'$ until the crossover computes an optimal tree $t$ for the first time is
	\begin{align*}
		\tau' \leq 180 \, c n \, \lambda ^2 \log (c n) \in \BigO (n \log^3 (n)).
	\end{align*}
	This is asymptotically larger than the time for the initial cycle of Local Searches, which consist of at most $\lambda \, 90 \, b n \log (b n) \in \BigO (n \log ^2 (n))$ steps.

	What remains is to calculate the probabilities for the bound to hold. Let $\event{C}$  be the event that $\event{B}$ holds and the Local Search for the tree $t$ expressing all variables yields a tree without redundant leaves. The latter event holds, if every Local Search of each relevant tree up to and including said point yields a tree without redundant leaves. During the crossover there are $\lambda /2$ relevant trees in each crossover cycle. Applying $\Prob{\event{A}} \geq \left(1 - n^{-5} \right)^{\lambda}$ and $\Prob{\event{B}} \geq \left(1-n^{-5} \right) \, \left(1 -n^{-5} \right)^{\lambda}$ we obtain
	\begin{align*}
		\Prob{\event{C}} \geq \Prob{\event{B}} \left(\Prob{\event{A}}^{\lambda/2} \right)^{2 \lambda} \geq  \left(1 - n^{-5} \right)^{\lambda^2 +  \lambda +1}.
	\end{align*}
	In conjunction with the probability that we need at most $2 \lambda$ crosso\-ver cycles
	given by \eqref{eq:crossover_coupon_collector}
	this yields the desired probability of
	$$
		 \left(1 - n^{-6}\right) \Prob{\event{C}} \geq \left(1 - n^{-6}\right) \left(1 - n^{-5} \right)^{\lambda^2 +  \lambda +1} \geq \left(1 - n^{-5} \right)^{(\lambda+1)^2}.
	$$
	Since $\lambda \leq n^2$ this concludes the proof.
\end{proof}

Finally, we turn to \supermajority. For the proof we use a result from the area of rumor spreading relating to the \emph{pull protocol} \cite{Kar-Sch-She-Voc:c:00:RumorSpreading,Mer-Hay-Mat:x:17:RumorSpreading} in order to study the time until every individual of the population has every variable expressed. The idea here is similar to previous proofs with crossover: expressed variables can be collected with crossover. For this purpose we show that the number of $x_i$ in individuals, which have a variable $i$ expressed, is asymptotically larger than the number of $\nonvar_i$ in individuals, which do not have $i$ expressed.

\begin{lemma}[Pull-Protocol~{\cite{Kar-Sch-She-Voc:c:00:RumorSpreading,Mer-Hay-Mat:x:17:RumorSpreading}}]\label{lem:pullProtocol}
Consider the situation where $n$ agents act in rounds in order to gain an information; suppose initially a constant fraction of all agents hold that information. In each round, each uninformed agent selects another agent uniformly at random and is now considered informed if the selected agent is informed. Then, within $\BigO(\log \log n)$ iterations, all agents are informed with probability at least $1-n^{-c}$, for any given $c$.
\end{lemma}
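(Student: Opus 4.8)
The plan is to follow the classical analysis of the pull protocol: track $u_\tau$, the number of agents still uninformed at the start of round $\tau$, and exploit that the $u_\tau$ uninformed agents query (the set of agents informed at the start of round $\tau$) independently. Hence $u_{\tau+1}$ is a sum of $u_\tau$ independent indicators, each equal to $1$ with probability $(u_\tau-1)/(n-1)\le u_\tau/n$; in particular $\EE[u_{\tau+1}\mid u_\tau]\le u_\tau^2/n$, so the uninformed \emph{fraction} roughly squares every round. This doubly-exponential shrinking is what yields the $\BigO(\log\log n)$ bound, but the concentration behind it weakens once the expected number of survivors gets small, so I would split the analysis into three phases: a short ``dense'' phase, the ``doubly-exponential'' phase, and a short ``sparse tail'' phase.

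In the dense phase, $u_0\le(1-\alpha)n$ for a constant $\alpha\in(0,1)$, and while $u_\tau$ is still a constant fraction of $n$ we have $\EE[u_{\tau+1}]=\Theta(n)$; a multiplicative Chernoff bound with an arbitrarily small constant relative error then fails only with probability $e^{-\Omega(n)}$, and choosing that error small enough gives $u_{\tau+1}\le(1-\alpha/2)u_\tau$, so after $\BigO(1)$ rounds (a number depending only on $\alpha$) we reach $u_\tau\le n/4$. In the doubly-exponential phase, starting from $u_\tau\le n/4$: as long as $u_\tau\ge\sqrt n\log n$ we have $\mu:=\EE[u_{\tau+1}]\le u_\tau^2/n$ and $\mu\ge(1-\LittleO(1))\log^2 n$, so Chernoff yields $u_{\tau+1}\le 3u_\tau^2/n$ except with probability $e^{-\Omega(\log^2 n)}$; writing $c_\tau:=3u_\tau/n\le 3/4$ this reads $c_{\tau+1}\le c_\tau^2$, so $c_\tau$ drops below $3(\log n)/\sqrt n$, i.e.\ $u_\tau$ drops below $\sqrt n\log n$, after $\log_2\log n+\BigO(1)$ rounds, and one further round (now with $\mu\le\log^2 n$, so an additive-flavoured Chernoff tail suffices) brings $u_\tau$ down to $m:=\BigO(\log^2 n)$. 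A union bound over these $\BigO(\log\log n)$ rounds keeps the accumulated failure probability at $e^{-\Omega(\log^2 n)}\cdot\BigO(\log\log n)$.

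In the sparse tail phase, once $u_\tau\le m$, monotonicity keeps it there, and in every later round each uninformed agent stays uninformed with probability at most $m/n\le n^{-1/2}$. Given any target constant $c$, running $k:=2(c+2)$ more rounds and taking a union bound over the at most $m\le n^{1/2}$ agents that were uninformed at the start of this phase shows that all of them are informed except with probability at most $n^{1/2}\cdot(n^{-1/2})^k=n^{-(k-1)/2}\le n^{-c-1}$. Summing the failure probabilities of the three phases gives $e^{-\Omega(n)}+e^{-\Omega(\log^2 n)}\BigO(\log\log n)+n^{-c-1}\le n^{-c}$ for $n$ large, while the total number of rounds is $\BigO(1)+(\log_2\log n+\BigO(1))+k=\BigO(\log\log n)$, as claimed.

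The main obstacle is the transition between concentration regimes: the clean recursion $c_{\tau+1}\le c_\tau^2$ is only valid while the expected number of survivors is polylogarithmically large, so that multiplicative Chernoff bites; below $\sqrt n$ one must switch first to an additive tail bound and then to the crude per-agent union bound of the sparse phase, and it is exactly that last phase, lengthened by $\Theta(c)$ rounds, which upgrades the guarantee from ``with high probability'' to ``with probability $1-n^{-c}$ for every constant $c$''. A secondary technical point is that when $\alpha$ is small one cannot start the doubly-exponential recursion immediately, because $3u_0/n$ may exceed $1$; this is precisely why the $\BigO(1)$-round dense phase must come first.
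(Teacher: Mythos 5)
Your argument is correct. Note, however, that the paper does not prove this lemma at all: it is imported as a black box from the rumor-spreading literature (Karp et al.\ and follow-ups), so there is no in-paper proof to match. What you have written is essentially the standard self-contained analysis of the pull protocol: conditioned on the history, the number $u_{\tau+1}$ of agents still uninformed is a sum of $u_\tau$ independent indicators with success probability $(u_\tau-1)/(n-1)\le u_\tau/n$, giving $\Ex{u_{\tau+1}\mid u_\tau}\le u_\tau^2/n$ and hence the squaring of the uninformed fraction that drives the $\BigO(\log\log n)$ bound. Your three-phase decomposition is exactly where the care is needed and you place the cuts correctly: a constant number of rounds to get below $n/4$ so that the recursion $c_{\tau+1}\le c_\tau^2$ actually contracts, multiplicative Chernoff while the expected number of survivors is still $\Omega(\log^2 n)$, and a per-agent union bound once only polylogarithmically many survivors remain; the final $\Theta(c)$ extra rounds are what upgrade the guarantee to $1-n^{-c}$ for an arbitrary constant $c$, matching the lemma as stated. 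One cosmetic remark: the paper's Chernoff bound (Theorem~\ref{thm:Chernoff}) only covers relative deviations $\delta\le 1$, so in the doubly-exponential phase you should either bound $\Prob{u_{\tau+1}\ge 2u_\tau^2/n}$ (which suffices, since $u_\tau^2/n\ge\mu$) or invoke the stronger form of the bound for $\delta>1$; this does not affect the argument. What your approach buys over the paper's is self-containedness and an explicit, checkable dependence of the round count and failure probability on $c$; what the citation buys the paper is brevity and sharper constants from the dedicated literature.
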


\begin{theorem}\label{thm:crossoverSupermajority}
	Consider the Concatenation Crossover GP without substitutions with bloat control with initial tree size $\sinit = n/2$ on \supermajority. 
	Then there is a constant  $c_{\lambda} >0$ such that, for $\lambda = c_{\lambda} \log n$, each \gptree in the population has all variables expressed after at most $\BigO (n^{1+o(1)})$ steps with probability at least $1- \BigO (n^{-4})$.
\end{theorem}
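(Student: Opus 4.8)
The plan is to track, for each variable $i\in[n]$ separately, the set of individuals in which $i$ is expressed, and to show that this set spreads through the population exactly like a rumour under the pull protocol of Lemma~\ref{lem:pullProtocol}. Three structural features of \supermajority under bloat control \emph{without substitutions} drive the argument: inserting a negative literal never increases fitness and is always rejected, so the $\nonvar_i$-leaves of any individual are only inherited from the initial population and can only decrease; once $i$ is expressed in an individual it stays expressed along that individual's lineage, since deleting a $\var_i$-leaf of an expressed variable strictly decreases the bonus $f_i$; and inserting a $\var_i$-leaf of an already expressed variable strictly increases $f_i$ and is therefore accepted, so ``informed'' individuals accumulate (bloat) positive literals.

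I would first describe the state after the initial round of local searches. By the remark following Lemma~\ref{lem:random-tree} (applied with $\nu=1/2$) together with a Chernoff bound, for every fixed $i$ a constant fraction of the $\lambda$ individuals express $i$ after initialization, and by the same Poisson approximation every individual has at most $\BigO(\log n)$ leaves of each literal. Running local search for $90 s\log s$ steps on a tree of size $s=\Theta(n)$ inserts any fixed expressed $\var_i$ with probability $\tfrac1{4n}$ per step and accepts the move, so it accepts $\Omega(\log n)$ such insertions. Writing $b_i(t)\coloneqq s_i^+(t)-2s_i^-(t)$ for the \emph{budget} of $i$ in $t$ --- so that $i$ is expressed in $t$ exactly when $b_i(t)\ge 0$ --- this shows that after initialization every informed individual $t$ satisfies the invariant
\[
b_i(t)\ \ge\ 2\max_{t'}s_i^-(t')+1 \qquad\text{for every $i$ expressed in $t$,}
\]
the maximum being over the current population. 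I would also bound all tree sizes by $n\cdot\mathrm{polylog}(n)=n^{1+o(1)}$ throughout: each crossover round at most doubles the sizes and the following local search multiplies them by an $\BigO(\log n)$ factor, and the spreading phase lasts only $\BigO(\log\log n)$ rounds. All of these events fail with probability $n^{-\omega(1)}$ once $c_\lambda$ is a large enough constant, and I condition on them.

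The heart of the proof is that the invariant is maintained and implies the ``pull step''. Budgets are additive under $\mathrm{join}$: $b_i(\mathrm{join}(t,t'))=b_i(t)+b_i(t')$. Hence if an uninformed $t_j$ is mated with an informed $t_m$, then $b_i(\mathrm{join}(t_j,t_m))\ge -2s_i^-(t_j)+(2s_i^-(t_j)+1)=1>0$, so $i$ is expressed in the join; the same inequality applied to every variable $\ell$ that $t_j$ already expresses shows no expression of $t_j$ is lost, and a short calculation --- local search can only raise each $s_\ell^+$ and lower each $s_\ell^-$, and in $n^{1+o(1)}$ steps it accepts more than $\max_{t'}s_\ell^-(t')=\mathrm{polylog}(n)$ insertions of any fixed expressed $\var_\ell$, with failure probability $n^{-\omega(1)}$ --- gives $f(t_j'')\ge f(t_j)$, so the offspring is accepted, $t_j$ becomes informed, and the invariant is restored by the accepted $\var_\ell$-insertions. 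Informed individuals stay informed by the same additivity. The process therefore dominates a pull protocol started from a constant fraction of informed agents, so by Lemma~\ref{lem:pullProtocol} (run for $\Theta(\log\log n)$ rounds, which suffices to push the failure probability for a fixed $i$ below $n^{-5}$) all $\lambda$ individuals express $i$ after $\BigO(\log\log n)$ crossover rounds; a union bound over the $n$ variables gives that, with probability $1-\BigO(n^{-4})$, every individual expresses every variable. Summing up, the initial local searches cost $\lambda\cdot\BigO(n\log n)=\BigO(n\log^2 n)$ steps and the $\BigO(\log\log n)$ crossover rounds cost $\BigO(\log\log n)\cdot\lambda\cdot\BigO(s\log s)$ steps with $s=n^{1+o(1)}$, for a total of $n^{1+o(1)}$.

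The main obstacle is the quantitative bookkeeping hidden in ``the invariant is maintained'': I must verify that, round by round, the bloated positive surplus of an informed individual stays large enough to dominate the negative literals any mate can contribute. This pits the growth of inherited negatives (which roughly doubles per round, hence over $\BigO(\log\log n)$ rounds stays at a \emph{fixed} power of $\log n$) against the faster growth of the bloated positives (whose per-round insertion count scales with the current tree size, hence with an \emph{increasing} power of $\log n$), and it has to be carried out simultaneously with the $n^{1+o(1)}$ size bound --- the two are coupled, since the size bound controls the local-search lengths that in turn control how many positives are inserted. The acceptance bookkeeping for variables whose bonus $f_\ell$ is momentarily depressed by a mate's negative literals is the fiddliest point, but it is subsumed by the same degree comparison.
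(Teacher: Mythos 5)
Your proposal follows essentially the same route as the paper's proof: negative literals of any fixed variable are only inherited (substitutions are disabled and bloat control rejects inserting them), so they at most double per crossover round, while informed individuals accept positive insertions during local search at a rate proportional to the tree size, so the positive surplus outgrows the inherited negatives, expression is transferred by every join with an informed mate, and the spread is completed by the pull protocol of Lemma~\ref{lem:pullProtocol} with the same size and run-time bookkeeping. The only point to tighten is your $\BigO(\log n)$ bound on the per-literal multiplicity after initialization: for the very first crossover round the domination $b_i \geq 2\max_{t'} s_i^-(t') + 1$ needs the sharper tail bound $\BigO(\log n/\log\log n)$ used in the paper, since one initial local search only yields $\Theta(\log n)$ accepted $\var_i$-insertions and an unspecified constant in the negatives bound could otherwise win; by contrast, your explicit check that $f(t_j'') \geq f(t_j)$ (so the offspring is actually accepted) is a detail the paper leaves implicit.
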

\begin{proof} 
Fix any given variable $i$. We will show that $i$ is expressed in all individuals after $\log \log n$ rounds of crossover with probability at least $1-n^{-5}$, so that a union bound gives the desired probability (leaving the bound on the number of steps to be shown). In the following we reason for $n$ large enough.

 Using a Chernoff bound we see that, with probability at least $1-n^{-6}$, the maximum number of $\nonvar_i$ in a \gptree after initialization is at most $20\log n/ \log \log n$. Note that local search cannot add any $\nonvar_i$, since we use bloat control and do not have substitutions. 
Thus, by induction and using that all crossover steps are based on the old population, after $k \geq 0$ rounds of crossover every individual of the population has at most $2^{k} \cdot 20\log n/ \log \log n$ occurrences of $\nonvar_i$.

Consider now any individual where $i$ is \emph{expressed} and where at least $n/c$ variables are expressed, for some $c$; let $s$ be the size of that individual. Note that $s \geq n/c$. Another application of the Chernoff bound give that, after local search is applied to this individual for $90 s \log s$ iterations, we have at least $s \log (s)/n $ many $\var_i$ with probability $1-n^{-6}$, while we get for $n$ sufficiently large that the tree grows by at least $2s \log(s)/ \log \log n \geq s\log n/\log \log n$ with probability at least $1-n^{-6}$ and by at most $90s \log s$ with certainty. Induction gives that, after $k$ rounds of crossover, with probability at least $1-2kn^{-6}$, the total size of any such \gptree is at least $n (\log n / \log \log n)^k$ and the number of $\var_i$ is at least $(\log n)^{k+1}$. 

Thus, whenever we mate two individuals where one has $i$ expressed and the other does not, the offspring has $i$ expressed: the number of $\var_i$ in the individual with $i$ expressed is at least $(\log n)^{k+1}$ and thus larger than the number of $\nonvar_i$ in any mate, which is at most $2^{k} \cdot 20\log n/ \log \log n$.

Initially a constant fraction of all individuals had $i$ expressed with sufficiently high probability (see Lemma~\ref{lem:random-tree}). Each round of crossover works like the pull protocol in rumor spreading to get the variable $i$ expressed, so that Lemma~\ref{lem:pullProtocol} gives that, after $\BigO(\log \log n)$ rounds of crossover, all individuals have $i$ expressed with probability at least $1-n^{-6}$.

While the size of a tree is at most $n^2$, it grows at most by a factor of $2\log n$, leading to a tree size of $n (\log n)^{O(\log \log n)} \leq n^{1+o(1)}$ after $\BigO(\log \log n)$ iterations. 

In order to compute the total run time of all local searchers, note that the growth of individuals is faster than exponential. Since we only desire an asymptotic bound, we can ignore all but the last summand. We apply local search to each of logarithmically many individuals, which increases the run time by another logarithmic factor, still yielding a bound of $n^{1+o(1)}$. A union bound over all failure probabilities gives the desired result.
\end{proof}

\section{Experiments}
\label{sec:experiments}
This section is dedicated to complementing our theoretical results with experimental justification for the otherwise open cells of Table~\ref{table:overviewResults}, i.e.\ for the \oneonegp without bloat control on \geqcmajority and \geqfracmajority.

\begin{figure}
\begin{minipage}{0.5\textwidth}
\resizebox{\textwidth}{!}{
\begin{tikzpicture}
\begin{axis}[
		legend style={
		at={(0.05,0.7)},
		anchor=west},
		%axis x line=bottom,
		%axis y line=left,		
		xlabel= {n, number of variables},
		ylabel= {number of evaluations},
		boxplot/draw direction=y,
		%boxplot/variable width,
		baseline,
		xtick = {100, 300,500,700,900},
		xticklabels = {100, 300,500,700,900}
	]
	\addplot[blue, domain = 100.1:100.2]{10000*x};
	\addlegendentry{\oneonegp with bloat control};
	\addplot[lime!70!black, domain = 100.1:1000.1]{28*(x)^(1)*ln(x)};
	\addlegendentry{\oneonegp without bloat control};
	
%with bloat
\addplot+ [color=blue,solid,boxplot prepared = {box extend=50, draw position= 100, lower whisker = 1000001, lower quartile = 1000001, median =1000001, upper quartile = 1000001, upper whisker = 1000001},]coordinates{};

\addplot+ [color=blue,solid,boxplot prepared = {box extend=50, draw position= 200, lower whisker = 1000001, lower quartile = 1000001, median =1000001, upper quartile = 1000001, upper whisker = 1000001},]coordinates{}; 

\addplot+ [color=blue,solid,boxplot prepared = {box extend=50, draw position= 300, lower whisker = 1000001, lower quartile = 1000001, median =1000001, upper quartile = 1000001, upper whisker = 1000001},]coordinates{}; 

\addplot+ [color=blue,solid,boxplot prepared = {box extend=50, draw position= 400, lower whisker = 1000001, lower quartile = 1000001, median =1000001, upper quartile = 1000001, upper whisker = 1000001},]coordinates{}; 

\addplot+ [color=blue,solid,boxplot prepared = {box extend=50, draw position= 500, lower whisker = 1000001, lower quartile = 1000001, median =1000001, upper quartile = 1000001, upper whisker = 1000001},]coordinates{}; 

\addplot+ [color=blue,solid,boxplot prepared = {box extend=50, draw position= 600, lower whisker = 1000001, lower quartile = 1000001, median =1000001, upper quartile = 1000001, upper whisker = 1000001},]coordinates{}; 
\addplot+ [color=blue,solid,boxplot prepared = {box extend=50, draw position= 700, lower whisker = 1000001, lower quartile = 1000001, median =1000001, upper quartile = 1000001, upper whisker = 1000001},]coordinates{}; 
\addplot+ [color=blue,solid,boxplot prepared = {box extend=50, draw position= 800, lower whisker = 1000001, lower quartile = 1000001, median =1000001, upper quartile = 1000001, upper whisker = 1000001},]coordinates{}; 
\addplot+ [color=blue,solid,boxplot prepared = {box extend=50, draw position= 900, lower whisker = 1000001, lower quartile = 1000001, median =1000001, upper quartile = 1000001, upper whisker = 1000001},]coordinates{}; 
\addplot+ [color=blue,solid,boxplot prepared = {box extend=50, draw position= 1000, lower whisker = 1000001, lower quartile = 1000001, median =1000001, upper quartile = 1000001, upper whisker = 1000001},]coordinates{}; 

%no bloat control

\addplot+ [color=lime!70!black,solid,boxplot prepared = {box extend=50, draw position= 100, lower whisker = 7226, lower quartile = 10476, median =12390, upper quartile = 14915, upper whisker = 20001},]coordinates{}; 
\addplot+ [color = lime!70!black,solid,boxplot prepared = {box extend=50, draw position= 200, lower whisker = 19115, lower quartile = 27194, median =32387, upper quartile = 37273, upper whisker =65341},]coordinates{}; 
\addplot+ [color = lime!70!black,solid,boxplot prepared = {box extend=50, draw position= 300, lower whisker = 28620, lower quartile = 41715, median =50724, upper quartile = 58971, upper whisker =84148},]coordinates{}; 
\addplot+ [color = lime!70!black,solid,boxplot prepared = {box extend=50, draw position= 400, lower whisker = 42494, lower quartile = 59211, median =66442, upper quartile = 76374, upper whisker =115541},]coordinates{}; 
\addplot+ [color = lime!70!black,solid,boxplot prepared = {box extend=50, draw position= 500, lower whisker = 53583, lower quartile = 78364, median =88097, upper quartile = 104832, upper whisker =157985},]coordinates{}; 
\addplot+ [color = lime!70!black,solid,boxplot prepared = {box extend=50, draw position= 600, lower whisker = 75113, lower quartile = 96844, median =110481, upper quartile = 122778, upper whisker =196772},]coordinates{}; 
\addplot+ [color = lime!70!black,solid,boxplot prepared = {box extend=50, draw position= 700, lower whisker = 91755, lower quartile = 113927, median =129548, upper quartile = 158948, upper whisker =254820},]coordinates{}; 
\addplot+ [color = lime!70!black,solid,boxplot prepared = {box extend=50, draw position= 800, lower whisker = 94772, lower quartile = 134642, median =153157, upper quartile = 177321, upper whisker =293312},]coordinates{}; 
\addplot+ [color = lime!70!black,solid,boxplot prepared = {box extend=50, draw position= 900, lower whisker = 115669, lower quartile = 150991, median =166883, upper quartile = 208963, upper whisker =332226},]coordinates{}; 
\addplot+ [color = lime!70!black,solid,boxplot prepared = {box extend=50, draw position= 1000, lower whisker = 132451, lower quartile = 179094, median =200911, upper quartile = 226707, upper whisker =364773},]coordinates{}; 

\end{axis}
\end{tikzpicture}
}
\end{minipage}
\hfill
\begin{minipage}{0.49\textwidth}
\resizebox{\textwidth}{!}{
\begin{tikzpicture}
\begin{axis}[
		legend style={
		at={(0.05,0.85)},
		anchor=west},
		xlabel= {n, number of variables},
		ylabel= {number of evaluations},
		boxplot/draw direction=y,
		baseline,
		xtick = {100, 300,500,700,900},
		xticklabels = {100, 300,500,700,900}
	]

	\addplot[lime!70!black, domain = 100.1:1000.1]{32*(x)^(1)*ln(x)};
	\addlegendentry{\oneonegp without bloat control};
	\addplot[blue, domain = 100.1:1000.1]{9*(x)^(1)*ln(x)};
	\addlegendentry{\oneonegp with bloat control};
	%\addplot[black, domain = 2.1:20.1]{10*x^(2)*ln(x)};
%2/3-majority for Tinit = 10n, with bloat control
%legend: color - blue	

\addplot+ [color = blue,solid,boxplot prepared = {box extend=50, draw position= 100, lower whisker = 3819, lower quartile = 4424, median =4946, upper quartile = 5503, upper whisker = 9502},]coordinates{}; 
\addplot+ [color = blue,solid,boxplot prepared = {box extend=50, draw position= 200, lower whisker = 7871, lower quartile = 9563, median =10585, upper quartile = 11302, upper whisker =16996},]coordinates{}; 
\addplot+ [color = blue,solid,boxplot prepared = {box extend=50, draw position= 300, lower whisker = 12469, lower quartile = 15021, median =16353, upper quartile = 17976, upper whisker =23565},]coordinates{}; 
\addplot+ [color = blue,solid,boxplot prepared = {box extend=50, draw position= 400, lower whisker = 17995, lower quartile = 21214, median =22823, upper quartile = 25096, upper whisker =34149},]coordinates{}; 
\addplot+ [color = blue,solid,boxplot prepared = {box extend=50, draw position= 500, lower whisker = 22449, lower quartile = 27092, median =29816, upper quartile = 32147, upper whisker =39841},]coordinates{}; 
\addplot+ [color = blue,solid,boxplot prepared = {box extend=50, draw position= 600, lower whisker = 28263, lower quartile = 33162, median =35671, upper quartile = 38908, upper whisker =51037},]coordinates{}; 
\addplot+ [color = blue,solid,boxplot prepared = {box extend=50, draw position= 700, lower whisker = 32302, lower quartile = 38409, median =43161, upper quartile = 46267, upper whisker =63029},]coordinates{}; 
\addplot+ [color = blue,solid,boxplot prepared = {box extend=50, draw position= 800, lower whisker = 41008, lower quartile = 45666, median =49784, upper quartile = 53059, upper whisker =67676},]coordinates{}; 
\addplot+ [color = blue,solid,boxplot prepared = {box extend=50, draw position= 900, lower whisker = 43791, lower quartile = 51344, median =54299, upper quartile = 59124, upper whisker =94864},]coordinates{}; 
\addplot+ [color = blue,solid,boxplot prepared = {box extend=50, draw position= 1000, lower whisker = 50512, lower quartile = 57731, median =60696, upper quartile = 63959, upper whisker =92700},]coordinates{};

%no bloat control
\addplot+ [color=lime!70!black,solid,boxplot prepared = {box extend=50, draw position= 100, lower whisker = 10807, lower quartile = 13909, median =16582, upper quartile = 19007, upper whisker = 24940},]coordinates{}; 
\addplot+ [color = lime!70!black,solid,boxplot prepared = {box extend=50, draw position= 200, lower whisker = 27486, lower quartile = 34644, median =38009, upper quartile = 41792, upper whisker =51561},]coordinates{}; 

\addplot+ [color = lime!70!black,solid,boxplot prepared = {box extend=50, draw position= 300, lower whisker = 38279, lower quartile = 52286, median =56758, upper quartile = 64294, upper whisker =85663},]coordinates{}; 
\addplot+ [color = lime!70!black,solid,boxplot prepared = {box extend=50, draw position= 400, lower whisker = 62485, lower quartile = 74318, median =79409, upper quartile = 86752, upper whisker =126599},]coordinates{}; 
\addplot+ [color = lime!70!black,solid,boxplot prepared = {box extend=50, draw position= 500, lower whisker = 76350, lower quartile = 96858, median =107371, upper quartile = 116230, upper whisker =149708},]coordinates{}; 
\addplot+ [color = lime!70!black,solid,boxplot prepared = {box extend=50, draw position= 600, lower whisker = 95389, lower quartile = 114394, median =126019, upper quartile = 139268, upper whisker =188158},]coordinates{}; 
\addplot+ [color = lime!70!black,solid,boxplot prepared = {box extend=50, draw position= 700, lower whisker = 115990, lower quartile = 136227, median =146858, upper quartile = 161496, upper whisker =211307},]coordinates{}; 
\addplot+ [color = lime!70!black,solid,boxplot prepared = {box extend=50, draw position= 800, lower whisker = 140852, lower quartile = 159022, median =172195, upper quartile = 190843, upper whisker =236012},]coordinates{}; 
\addplot+ [color = lime!70!black,solid,boxplot prepared = {box extend=50, draw position= 900, lower whisker = 151745, lower quartile = 171926, median =191369, upper quartile = 212662, upper whisker =265133},]coordinates{}; 
\addplot+ [color = lime!70!black,solid,boxplot prepared = {box extend=50, draw position= 1000, lower whisker = 163406, lower quartile = 200224, median =217875, upper quartile = 236190, upper whisker =299607},]coordinates{}; 

\end{axis}
\end{tikzpicture}
}
\end{minipage}
\caption{Number of evaluations required by the \oneonegp over 100 runs for each $n$ with the initial tree size $\tinit =10n$ until all variables are expressed or the time limit, equal to 1000000 evaluations, is reached. The left figure shows the experimental results for $\geqcmajority$ with $c=2$; the solid line is $28n\log n$. On the right figure is shown $\geqfracmajority$; the blue solid line is $9n\log n$, the green solid line is $32n\log n$.  } 
\label{fig:2maj_plot}
\end{figure}
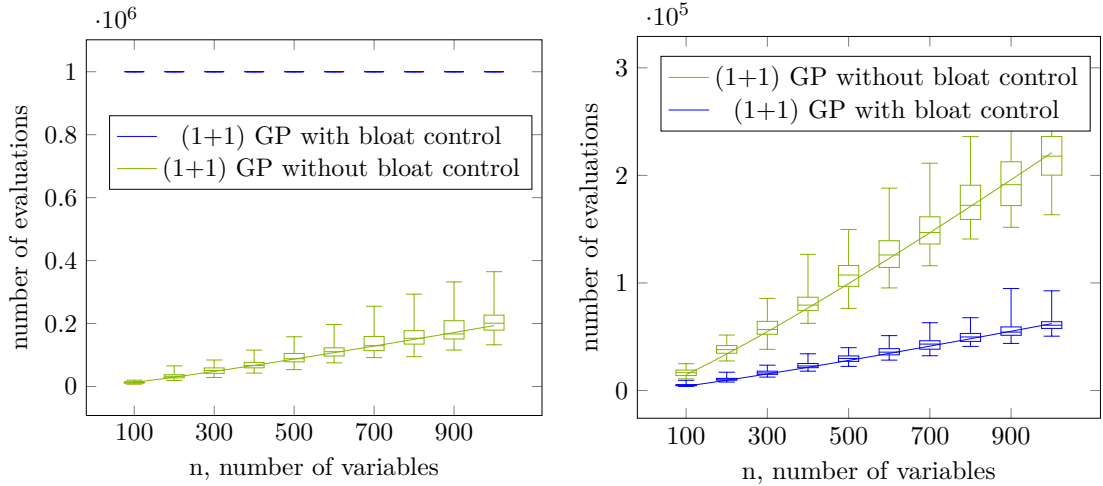

All experimental results shown in Figure~\ref{fig:2maj_plot} 
are box-and-whiskers plots, where lower and upper whiskers are the minimal and maximal number of \emph{fitness evaluations} the algorithm required over 100 runs until all variables are expressed or the time limit of 1000000 evaluations is reached. The middle lines in each box are the median values (the second quartile), the bottom and top of the boxes are the first and third quartiles. Note that all experiments are platform independent since we count number of fitness evaluations independently of real time. The solid lines in the plots allow to estimate the asymptotic run time of the \oneonegp.

The left hand side of Figure~\ref{fig:2maj_plot} concerns \geqcmajority and shows that the \oneonegp with bloat control always fails (corresponding to Theorem~\ref{thm:+cMajority_lower-bound_constant-stepsize}). We used the \oneonegp with $\tinit=10n,$ $c=2$ and $n$ as indicated along the x-axis. It is easy to see that bloat control leads the algorithm to local optima and does not allow to leave it, whereas the \oneonegp \emph{without} bloat control finds an optimum in a reasonable number of evaluations. Due to time and computational restrictions the constant $c$ was chosen equal to 2. For larger $c$ the run time of the algorithm goes up significantly, but a similar pattern is visible.

The right hand side of Figure~\ref{fig:2maj_plot} shows the results of \oneonegp on \geqfracmajority, using $\tinit=10n$. One can see that bloat control is more efficient in comparison with the \oneonegp without bloat control. The set of median values is well-approximated by $w \cdot n\log n$ for a constant $w$, which leads us to the conjecture that the algorithm's run time is $\BigO(n\log n)$. 
We did not analyze the influence of $\tinit$, but it might be significant especially for \geqfracmajority without bloat control.

\section{Conclusion}
\label{sec:conclusion}
We defined three variants of the \majority problem in order to introduce some fitness plateaus that are difficult to cross. The \geqcmajority allows for progress at the end of the plateau with large representation; in this sense, bloat is necessary for progress. On the other hand, for \geqfracmajority, progress can be made at the end of the plateau with small representation, so that bloat control guides the search to the fruitful part of the search space. We also considered \supermajority which exemplifies fitness functions where bloat is inherent due to the possibility of small improvements by adding an increasing amount of nodes to the \gptree. In this case we showed that not employing bloat control leads to inefficient optimization.

In order to obtain results somewhat closer to practically relevant GP we turned to crossover and showed how a Concatenation Crossover GP can efficiently optimize all three considered test functions.

For future work it might be interesting to analyze the effect of other crossover operators. In order to obtain a better understanding of such other operators, additional test functions might be necessary making essential use of the tree structure (all our test functions might as well use lists or even multisets of the leaves as representations). Such test functions should not be too complex, which would hinder a theoretical analysis, but still embody a structure frequently found in GP, so as to inform about relevant application areas. The search for such test functions remains a central open problem of the theory of GP.

\bibliographystyle{plain}
\bibliography{ms}

\begin{thebibliography}{10}

\bibitem{CormenAlgorithms}
Thomas~H. Cormen, Charles~E. Leiserson, Robert~L. Rivest, and Clifford Stein.
\newblock {\em Introduction to Algorithms}.
\newblock MIT~Press, 2. edition, 2001.

\bibitem{doerr2013adaptive}
Benjamin Doerr and Leslie~Ann Goldberg.
\newblock Adaptive drift analysis.
\newblock {\em Algorithmica}, 65(1):224--250, 2013.

\bibitem{doerr2017bounding}
Benjamin Doerr, Timo K{\"o}tzing, J.~A.~Gregor Lagodzinski, and Johannes
  Lengler.
\newblock Bounding bloat in genetic programming.
\newblock In {\em Proc.~of GECCO'17}, pages 921--928. ACM, 2017.

\bibitem{dubhashi2009concentration}
Devdatt~P Dubhashi and Alessandro Panconesi.
\newblock {\em Concentration of measure for the analysis of randomized
  algorithms}.
\newblock Cambridge University Press, 2009.

\bibitem{GPFOGA11}
Greg Durrett, Frank Neumann, and Una-May O'Reilly.
\newblock Computational complexity analysis of simple genetic programming on
  two problems modeling isolated program semantics.
\newblock In {\em Proc.~of FOGA'11}, pages 69--80, 2011.

\bibitem{Esk-Hou:c:04:MemeticCrossover}
Brent~E. Eskridge and Dean~F. Hougen.
\newblock Memetic crossover for genetic programming: Evolution through
  imitation.
\newblock In {\em Proc.~of GECCO'04}, pages 459--470, 2004.

\bibitem{Gathercole96}
Chris Gathercole and Peter Ross.
\newblock An adverse interaction between the crossover operator and a
  restriction on tree depth.
\newblock In {\em Proc.~of GP'96}, pages 291--296, 1996.

\bibitem{GoldbergO98}
David~E. Goldberg and Una-May O'Reilly.
\newblock Where does the good stuff go, and why? {H}ow contextual semantics
  influences program structure in simple genetic programming.
\newblock In {\em Proc.~of EuroGP'98}, pages 16--36, 1998.

\bibitem{HeYao:04:drift}
Jun He and Xin Yao.
\newblock A study of drift analysis for estimating computation time of
  evolutionary algorithms.
\newblock {\em Natural Computing}, 3(1):21--35, 2004.

\bibitem{Joh:th:10}
Daniel Johannsen.
\newblock {\em Random Combinatorial Structures and Randomized Search
  Heuristics}.
\newblock PhD thesis, Universit{\"a}t des Saarlandes, 2010.
\newblock Available online at
  http://scidok.sulb.uni-saarland.de/volltexte/2011/3529/pdf/\\Dissertation\_3166\_Joha\_Dani\_2010.pdf.

\bibitem{jones:1995:ICGA}
Terry Jones.
\newblock Crossover, macromutation, and population-based search.
\newblock In {\em Proc.~ of ICGA'95}, pages 73--80. Morgan Kaufmann Publishers
  Inc., 1995.

\bibitem{jones:1995:thesis}
Terry Jones.
\newblock {\em Evolutionary Algorithms, Fitness Landscape and Search}.
\newblock PhD thesis, University of New Mexico, 1995.

\bibitem{Kar-Sch-She-Voc:c:00:RumorSpreading}
Richard~M. Karp, Christian Schindelhauer, Scott Shenker, and Berthold
  V{\"o}cking.
\newblock Randomized rumor spreading.
\newblock In {\em Proc.\ of FOCS'00}, pages 565--574, 2000.

\bibitem{DBLP:conf/gecco/KotzingNS11}
Timo K{\"o}tzing, Frank Neumann, and Reto Sp{\"o}hel.
\newblock {PAC} learning and genetic programming.
\newblock In {\em Proc.~of GECCO'11}, pages 2091--2096, 2011.

\bibitem{KoeSutNeuOre:c:12}
Timo K{\"o}tzing, Andrew~M. Sutton, Frank Neumann, and Una-May O'Reilly.
\newblock The {M}ax problem revisited: {t}he importance of mutation in genetic
  programming.
\newblock In {\em Proc.~of GECCO'12}, pages 1333--1340, 2012.

\bibitem{Koza:1990:GPP}
John~R. Koza.
\newblock Genetic programming: A paradigm for genetically breeding populations
  of computer programs to solve problems.
\newblock Technical report, Stanford, CA, USA, 1990.

\bibitem{DBLP:journals/algorithmica/Kotzing16}
Timo Kötzing.
\newblock Concentration of first hitting times under additive drift.
\newblock {\em Algorithmica}, 75(3):490--506, 2016.

\bibitem{Langdon97ananalysis}
William~B. Langdon and Riccardo Poli.
\newblock An analysis of the {MAX} problem in genetic programming.
\newblock In {\em Proc.~of GP'97}, pages 222--230, 1997.

\bibitem{Luke:2002:GECCO}
Sean Luke and Liviu Panait.
\newblock Lexicographic parsimony pressure.
\newblock In {\em Proc. of {GECCO}'02}, pages 829--836, 2002.

\bibitem{MambriniM14}
Andrea Mambrini and Luca Manzoni.
\newblock A comparison between geometric semantic {GP} and cartesian {GP} for
  {B}oolean functions learning.
\newblock In {\em Proc.~of GECCO'14}, pages 143--144, 2014.

\bibitem{MambriniO16}
Andrea Mambrini and Pietro~Simone Oliveto.
\newblock On the analysis of simple genetic programming for evolving {B}oolean
  functions.
\newblock In {\em Proc.~of EuroGP'16}, pages 99--114, 2016.

\bibitem{Mer-Hay-Mat:x:17:RumorSpreading}
Hugues Mercier, Laurent Hayez, and Miguel Matos.
\newblock Optimal epidemic dissemination.
\newblock {\em CoRR}, abs/1709.00198, 2017.

\bibitem{BookMitUp}
Michael Mitzenmacher and Eli Upfal.
\newblock {\em Probability and Computing: Randomized Algorithms and
  Probabilistic Analysis}.
\newblock Cambridge University Press, New York, NY, USA, 2005.

\bibitem{MoraglioMM13}
Alberto Moraglio, Andrea Mambrini, and Luca Manzoni.
\newblock Runtime analysis of mutation-based geometric semantic genetic
  programming on {B}oolean functions.
\newblock In {\em Proc.~of FOGA'13}, pages 119--132, 2013.

\bibitem{NeuGECCO12}
Frank Neumann.
\newblock Computational complexity analysis of multi-objective genetic
  programming.
\newblock In {\em Proc.~of GECCO'12}, pages 799--806, 2012.

\bibitem{NguUrlWag:c:13:Foga}
Anh Nguyen, Tommaso Urli, and Markus Wagner.
\newblock Single- and multi-objective genetic programming: new bounds for
  weighted {ORDER} and {MAJORITY}.
\newblock In {\em Proc.~of FOGA'13}, pages 161--172, 2013.

\bibitem{oliveto2011negative}
Pietro~S. Oliveto and Carsten Witt.
\newblock Simplified drift analysis for proving lower bounds inevolutionary
  computation.
\newblock {\em Algorithmica}, 59(3):369--386, 2011.

\bibitem{oliveto2012erratum}
Pietro~S Oliveto and Carsten Witt.
\newblock Erratum: Simplified drift analysis for proving lower bounds in
  evolutionary computation.
\newblock {\em arXiv e-prints}, http://arxiv.org/abs/1211.7184, 2012.

\bibitem{OReilly:thesis}
Una-May O'Reilly.
\newblock {\em An Analysis of Genetic Programming}.
\newblock PhD thesis, Carleton University, Ottawa, Canada, 1995.

\bibitem{OReilly:1994:GPSAHC}
Una-May O'Reilly and Franz Oppacher.
\newblock Program search with a hierarchical variable length representation:
  Genetic programming, simulated annealing and hill climbing.
\newblock In {\em Proc. of PPSN'94}, pages 397--406, 1994.

\end{thebibliography}

\clearpage

\end{document}